\documentclass[english,a4paper,11pt]{article}
\usepackage[english]{babel}
\usepackage{amsmath, amsxtra, amsfonts, amssymb, amstext}
\usepackage{amsthm}
\usepackage{booktabs}
\usepackage{fullpage}
\usepackage{nicefrac}
\usepackage{xspace}
\usepackage[noadjust]{cite}
\usepackage{url}\urlstyle{rm}
\usepackage{graphics,color}
\usepackage[colorlinks]{hyperref}
\definecolor{linkblue}{rgb}{0.1,0.1,0.8}
\hypersetup{colorlinks=true,linkcolor=linkblue,filecolor=linkblue,urlcolor=linkblue,citecolor=linkblue}
\usepackage[algo2e,ruled,vlined,linesnumbered]{algorithm2e}
\SetKwFor{Function}{function}{is}{end function}
\SetKwFor{Subroutine}{subroutine}{is}{end subroutine}
\newcommand{\assign}{\leftarrow}
\SetKw{Input}{input}
\SetKw{Output}{output}
\SetKw{Initialization}{initialization}
\newcommand{\set}[2]{\{#1 \; | \; #2\}}


\newtheorem{theorem}{Theorem}
\newtheorem{lemma}[theorem]{Lemma}

\newtheorem{corollary}[theorem]{Corollary}
\newtheorem{definition}[theorem]{Definition}

\newtheorem{remark}[theorem]{Remark}



\newcommand{\comment}[1]{\textcolor{red}{#1}}


\allowdisplaybreaks[1]

\hyphenation{}


\renewcommand{\epsilon}{\varepsilon}
\newcommand{\eps}{\varepsilon}

\newcommand{\F}{\mathcal{F}}

\newcommand{\onemax}{\textsc{OneMax}\xspace}
\newcommand{\jump}[1]{\textsc{Jump}_{#1}\xspace}

\newcommand{\Jump}{\textsc{Jump}\xspace}

\newcommand{\UBB}[1]{\mathrm{UBB}_{#1}}


\newcommand{\uniformSample}{{\tt uniform}\xspace}

\newcommand{\flip}[1]{{\tt flip}$_{#1}$\xspace}
\newcommand{\flipB}{{\tt flip}\xspace}
\renewcommand{\complement}{{\tt complement}\xspace}
\newcommand{\copySecondIntoFirstWhereDifferentFromThird}{{\tt copySecondIntoFirstWhereDifferentFromThird}\xspace}
\newcommand{\selectBit}{{\tt selectBits}\xspace}
\newcommand{\mix}{{\tt mix}\xspace}

\newcommand{\movefirst}{{\tt movefirst}\xspace}
\newcommand{\flipwheredifferent}{{\tt flipWhereDifferent}\xspace}
\newcommand{\flipwhereequal}{{\tt flipWhereEqual}\xspace}

\newcommand{\estimate}{{\tt estimate}\xspace}
\newcommand{\simulateOnSubcube}{{\tt simulateOnSubcube}\xspace}
\newcommand{\breakandgotoone}{{\tt breakandgoto1}\xspace}


\DeclareMathOperator{\sgn}{sgn}

\begin{document}

\title{Unbiased Black-Box Complexities of Jump Functions\thanks{This paper is based on results published in the conference versions~\cite{DoerrKW11,DoerrDK14}.}}

\author{Benjamin Doerr$^{1}$\and
Carola Doerr$^{2,3}$\and
Timo K\"otzing$^4$}
\date{
$^1$\'Ecole Polytechnique, Palaiseau, France\\
$^2$Sorbonne Universit\'es, UPMC Univ Paris 06,  UMR 7906, LIP6, 75005 Paris, France\\
$^3$CNRS, UMR 7906, LIP6, 75005 Paris, France\\
$^4$Friedrich-Schiller-Universit\"at Jena, Germany\\[2ex]
\today
}

\maketitle

{\sloppy
\begin{abstract}
We analyze the unbiased black-box complexity of jump functions with small, medium, and large sizes of the fitness plateau surrounding the optimal solution.  

Among other results, we show that when the jump size is $(1/2 - \varepsilon)n$, that is, only a small constant fraction of the fitness values is visible, then the unbiased black-box complexities for arities $3$ and higher are of the same order as those for the simple \textsc{OneMax} function. Even for the extreme jump function, in which all but the two fitness values $n/2$ and $n$ are blanked out, polynomial-time mutation-based (i.e., unary unbiased) black-box optimization algorithms exist. This is quite surprising given that for the extreme jump function almost the whole search space (all but a $\Theta(n^{-1/2})$ fraction) is a plateau of constant fitness.  

To prove these results, we introduce new tools for the analysis of unbiased black-box complexities, for example, selecting the new parent individual not by comparing the fitnesses of the competing search points, but also by taking into account the (empirical) expected fitnesses of their offspring. 
\end{abstract} 

\textbf{Keywords:} Black-Box Complexity; Theory; Runtime Analysis\\

\section{Introduction}%
\label{sec:introduction}

The analysis of black-box complexities in evolutionary computation aims in several complementary ways at supporting the development of superior evolutionary algorithms. By comparing the run time of currently used randomized search heuristics (RSHs) with the one of an optimal black-box algorithm, it allows a fair evaluation of how good today's RSH are. With specific black-box complexity notions, we can understand how algorithm components and parameters such as the population size, the selection rules, or the sampling procedures influence the run time of RSH. Finally, research in black-box complexity proved to be a source of inspiration for developing new algorithmic ideas that lead to the design of better search heuristics.

In this work, we analyze the unbiased black-box complexity of jump functions, which are observed as difficult for evolutionary approaches because of their large plateaus of constant (and low) fitness. 
Our results show that, surprisingly, even extreme jump functions that reveal only the three different fitness values $0$, $n/2$, and $n$ can be optimized by a mutation-based unbiased black-box algorithm in polynomial time. 
We introduce new methods that facilitate our analyses. The perhaps most interesting one is a routine that creates a number of samples from which it estimates the distance of the current search point to the fitness layer $n/2$. Our algorithm thus benefits significantly from analyzing some non-standard statistics of the fitness landscape. We believe this to be an interesting idea that should be investigated further. Our hope is that it can be used to design new search heuristics.

\subsection{Black-Box Complexity}

Black-box complexity studies how many function evaluations are needed in expectation by an optimal black-box algorithm until it queries for the first time an optimal solution for the problem at hand. 
Randomized search heuristics, like evolutionary algorithms, simulated annealing, and ant colony algorithms, are typical black-box optimizers: they are typically problem-independent and as such they learn about the problem to be solved only by generating and evaluating search points. 
The black-box complexity of a problem is thus a lower bound for the number of fitness evaluations needed by any search heuristic to solve it.

Several black-box complexity notions covering different aspects of randomized search heuristics exist, for example the unrestricted model~\cite{DrosteJW06}, which does not restrict in any way the sampling or selection procedure of the algorithm, the ranking-based model~\cite{TeytaudG06, DoerrW11}, in which the algorithms are required to base their selection only on relative and not on absolute fitness values, the memory-restricted model~\cite{DrosteJW06, DoerrW12}, in which the algorithm can store only a limited number of search points and their corresponding fitness values, and the unbiased model~\cite{LehreW12}, which requires the algorithms to treat the representation of the search points symmetrically. 
By comparing the respective black-box complexities of a problem, one learns how the run time of RSHs is influenced by certain algorithmic choices such as the population size, the use of crossover, the selection rules, etc.
 
For all existing black-box models, however, the typical optimal black-box algorithm is a highly problem-tailored algorithm that is not necessarily nature-inspired. 
Still we can learn from such ``artificial'' algorithms about RSH, as has been shown in~\cite{DoerrDE13}. In that work, a new genetic algorithm is presented that optimizes the \onemax function in run time $o(n \log n)$, thus showing that the $o(n \log n)$ bound for the $2$-ary unbiased black-box complexity of $\onemax$ found in~\cite{DoerrJKLWW11} is not as unnatural as it might have seemed at first. 

Here in this work we consider unbiased black-box complexities. The unbiased black-box model is one of the standard models for analyzing the influence of the arity on the performance of optimal black-box algorithms. 
It was originally defined by Lehre and Witt~\cite{LehreW12} for bit string representations and has later been generalized to domains different from bit strings~\cite{ABB, DoerrKLW13}. 
For bit string representations, the unbiased model requires the optimizing algorithms to treat different positions of the bit strings equally, similarly with the two different possible bit contents (thus the term ``unbiased''). For example, unbiased algorithms are not allowed to explicitly write a $1$ or a $0$ at a specific position of a bit string to be evaluated; instead, such algorithms can either sample a bit string uniformly at random, or generate one from previously evaluated solutions via operators which are unbiased (i.e., treat positions and bit contents equally). See Section~\ref{sec:model} for a detailed description of the model.

\subsection{Jump Functions}

In this paper we are concerned with the optimization of functions mapping bit strings of fixed length (i.e., elements of the hypercube $\{0,1\}^n$) to real numbers; such functions are called \emph{pseudo-Boolean}. A famous pseudo-Boolean function often considered as a test function for optimization is the \onemax function, mapping any $x \in \{0,1\}^n$ to the number of $1$s in $x$ (the \emph{Hamming weight} of $x$).

Other popular test functions are the jump functions. For a non-negative integer $\ell$, we define the $\jump{\ell}$ as derived from \onemax by ``blanking out'' any useful information within the strict $\ell$-neighborhood of the optimum (and the minimum) by giving all these search points a fitness value of $0$. 
In other words, $\jump{\ell}(x) = \onemax(x)$ if $\onemax(x) \in \{0\} \cup \{\ell+1,\ldots,n-\ell-1\} \cup \{n\}$ and $\jump{\ell}(x) = 0$ otherwise. This definition is mostly similar to the two, also not fully agreeing, definitions used in~\cite{DrosteJW02} and~\cite{LehreW10}.

$\Jump$ functions are well-known test functions for randomized search heuristics. Droste, Jansen, and Wegener~\cite{DrosteJW02} analyzed the optimization time of the (1+1) evolutionary algorithm on $\Jump$ functions. From their work, it is easy to see that for our definition of $\Jump$ functions, a run time of $\Theta(n^{\ell+1})$ for the (1+1) evolutionary algorithm on $\jump{\ell}$ follows for all $\ell \in \{1, \ldots, \lfloor n/2 \rfloor-1\}$. We are not aware of any natural mutation-based randomized search heuristic with significantly better performance (except for large $\ell$, where simple random search with its $\Theta(2^n)$ run time becomes superior). For all $\ell$, Jansen and Wegener~\cite{JansenW02} present a crossover-based algorithm for $\jump{\ell}$. With an optimal choice for the parameter involved, which in particular implies a very small crossover rate of $O(1/n)$, it has an optimization time of $O(n \log^3 n)$ for constant $\ell$ (this is mistakenly stated as $O(n \log^2 n \log\log n)$ on the last line of p.~60 of the paper, but it is clear from the proof that this is only a typo) and an optimization time of $O(n^{2c + 1} \log n)$ for $\ell = \lceil c \log n \rceil$, $c$ a constant.

\subsection{Our Results}

\begin{table*}[t]
\label{tab:results}
\begin{center}
\begin{tabular}{c|c|c|c}
& Short Jump & 
Long Jump & 
Extreme Jump\\
\rule{5mm}{0cm}Arity\rule{5mm}{0cm} & 
$\ell = O(n^{1/2 - \eps})$\rule{1mm}{0cm} & 
$\ell = (1/2 - \varepsilon)n$\rule{1mm}{0cm} & 
$\ell = n/2 - 1$\rule{1mm}{0cm}\\ 
\hline
$k = 1$ &  $\Theta(n \log n)$ 
& $O(n^2)$  & $O(n^{9/2})$ \\
$k = 2$ & $O(n)$  
& $O(n \log n)$  & $O(n \log n)$ \\
$3 \leq k \leq \log n$ & $O(n / k)$ & $O(n / k)$ & $\Theta(n)$ 
\end{tabular}	
\end{center}
\caption{Unbiased black-box complexities of $\jump{\ell}$ for different regimes of $\ell$.}
\end{table*}

We analyze the unbiased black-box complexity of $\Jump$ functions for a broad range of jump sizes $\ell$. We distinguish between \emph{short}, \emph{long}, and \emph{extreme} jump functions for $\ell = O(n^{1/2 - \eps})$, $ell=(1/2 - \varepsilon)n$, and $\ell=n/2-1$, respectively. Our findings are summarized in Table~\ref{tab:results}.

Contrasting the runtime results for classic evolutionary approaches on $\Jump$, we show that for jump functions with \emph{small} jump sizes $\ell = O(n^{1/2 - \eps})$, the $k$-ary unbiased black-box complexities are of the same order as those of the easy $\onemax$ test function (which can be seen as a $\Jump$ function with parameter $\ell = 0$). As an intermediate result we prove (Lemma~\ref{lem:onemaxSimulation}) that a black-box algorithm having access to a jump function with $\ell= O(n^{1/2 - \eps})$ can retrieve (with high probability) the true $\onemax$ value of a search point using only a constant number of queries. This implies that we get the same run time bounds for short jump functions as are known for \onemax. For $k=1$ this is $\Theta(n \log n)$~\cite{LehreW12}, for $k=2$ its is $O(n)$~\cite{DoerrJKLWW11}, and for $3 \leq k \leq \log n$ this is $O(n/k)$~\cite{DoerrW12g}.

A result like Lemma~\ref{lem:onemaxSimulation} is not to be expected to hold for larger values of $\ell$. Nevertheless, we show that also long jump functions, where $\ell$ can be as large as $(1/2 - \varepsilon)n$, have unbiased black-box complexities of the same asymptotic order as $\onemax$ for arities $k \ge 3$. For $k=2$ we get a bound of $O(n \log n)$ and for $k=1$ we get $O(n^2)$, both surprisingly low black-box complexities. 
Even for the case of extreme jump functions, where $\ell = n/2 -1$ and $n$ even (a jump function revealing only the optimum and the fitness $n/2$), we are able to show polynomial unbiased black-box complexities for all arities $k \ge 1$. 

Note that already for long jump functions, the fitness plateau that the algorithms have to cross has exponential size. For the extreme jump function, even all but a $\Theta(n^{-1/2})$ fraction of the search points form one single fitness plateau.
This is the reason why none of the popular randomized search algorithms will find the optimum of long and extreme jump functions in subexponential time.

Our results indicate that even without the fitness function revealing useful information for search points close the optimum, efficient optimization is still possible in the framework of unbiased black-box algorithms. 

The results regarding short jump can be found in Section~\ref{sec:shortJump}, the results on long jump in Section~\ref{sec:longJump2}, while Section~\ref{sec:extremeJump} gives the details on extreme jump. Note that the bound on the binary unbiased black-box complexity of long jump follows from the same bound on extreme jump. The lower bounds partially follow from a more general result of independent interest (Theorem~\ref{thm:lowerBoundForMapped}), which implies that for all $0 \le \ell_1 \le \ell_2$, for all $k$ the $k$-ary unbiased black-box complexity of $\jump{\ell_1}$ is less than or equal to the one of $\jump{\ell_2}$. 

\subsection{Methods}

In order to show the upper bounds on the black-box complexities we give efficient algorithms optimizing the different jump functions. For arity $k=1$, these algorithms are based on iteratively getting closer to the optimum; however we do not (and in fact cannot) rely on fitness information about these closer search points: the fitness is $0$ in almost all cases. Instead, we rely on the \emph{empirical expected fitness} of offspring. For this we use mutation operators that have a good chance of sampling offspring with non-zero fitness. We show that already a polynomial number of samples suffices to distinguish search points whose fitness differs only minimally. In order to minimize the number of samples required, we choose this number \emph{adaptively} depending on the estimated number of $1$s in the search point to be evaluated; we also allow fairly frequent incorrect decisions, as long as the overall progress to the optimum is guaranteed.

In one of our proofs we make use of an additive Chernoff bound for negatively correlated variables. This bound is implicit in a paper by Panconesi and Srivnivasan~\cite{PanconesiS97} and is of independent interest.

\section{The Unbiased Black-Box Model}
\label{sec:model}

The unbiased black-box model introduced in~\cite{LehreW12} is by now one of the standard complexity models in evolutionary computation. In particular the unary unbiased model gives a more realistic complexity estimate for a number of functions than the original unrestricted black-box model of Droste, Jansen, and Wegener~\cite{DrosteJW06}.
An important advantage of the unbiased model is that it allows us to analyze the influence of the arity of the sampling operators in use. 
In addition, new search points can be sampled only either uniformly at random or from distributions that depend on previously generated search points in an \emph{unbiased} way. 
In this section we briefly give a brief definition of the unbiased black-box model, pointing the interested reader to~\cite{LehreW12} and~\cite{DoerrW12g} for a more detailed introduction. 

For all non-negative integers $k$,  
a \emph{$k$-ary unbiased distribution} $\big(D(\cdot \,|\, y^{(1)},\ldots,y^{(k)})\big)_{y^{(1)},\ldots,y^{(k)} \in \{0,1\}^n}$ is a family of probability distributions over 
$\{0,1\}^n$ such that for all \emph{inputs} $y^{(1)},\ldots,y^{(k)} \in \{ 0,1\}^n$ the following two conditions hold.
\begin{enumerate}
	\item~[$\oplus$-invariance] $\forall x,z \in \{0,1\}^n: D(x \mid y^{(1)},\ldots,y^{(k)}) = D(x \oplus z \mid y^{(1)} \oplus z,\ldots,y^{(k)}\oplus z);$
	\item~ [permutation-invariance] $\forall x \in \{0,1\}^n \, \forall \sigma\in S_n:$\\
	$\quad D(x \mid y^{(1)},\ldots,y^{(k)}) = D(\sigma(x) \mid \sigma(y^{(1)}), \ldots, \sigma(y^{(k)}))$,
\end{enumerate}
where $\oplus$ is the bitwise exclusive-OR, $S_n$ the set of all permutations of the set $[n]:=\{1,2,\ldots,n\}$, and $\sigma(x):=x_{\sigma(1)}\cdots x_{\sigma(n)}$ for $x=x_1 \cdots x_n \in \{ 0,1\}^n$.
 
An operator sampling from a $k$-ary unbiased distribution is called a \emph{$k$-ary unbiased variation operator}. 

\begin{algorithm2e}[t]
	\textbf{Initialization:} Sample $x^{(0)} \in \{0,1\}^n$ uniformly at random and query $f(x^{(0)}).$\\
	 \textbf{Optimization:}
\For{$t=1,2,3,\ldots$ \textbf{\upshape{until}} termination condition met}{
		Depending on $\big(f(x^{(0)}),\ldots, f(x^{(t-1)})\big)$ choose up to $k$ indices $i_1,\ldots,i_k \in [0..t-1]$ and a $k$-ary unbiased distribution $D(\cdot \mid x^{(i_1)},\ldots,x^{(i_k)})$. \\
		Sample $x^{(t)}$ according to $D(\cdot \mid x^{(i_1)},\ldots,x^{(i_k)})$ and query $f(x^{(t)})$.
}
\caption{Scheme of a $k$-ary unbiased black-box algorithm}
\label{alg:unbiasedAlgo}
\end{algorithm2e} 

A $k$-ary unbiased black-box algorithm is one that follows the scheme of Algorithm \ref{alg:unbiasedAlgo} (here and in the following with $[0..k]$ we abbreviate $[k] \cup \{0\}$). 
The \emph{$k$-ary unbiased black-box complexity}, denoted $\UBB{k}(\F)$, of some class of functions $\F$ is the minimum complexity of $\F$ with respect to all $k$-ary unbiased black-box algorithms, where, naturally, the complexity of an algorithm $A$ for $\F$ is the maximum expected number of black-box queries that $A$ performs on a function $f \in \F$ until it queries for the first time a search point of maximal fitness.
We let \emph{$\ast$-ary unbiased black-box complexity} be based on the model in which operators of arbitrary arity are allowed.

The unbiased black-box model includes most of the commonly studied search heuristics, such as many $(\mu + \lambda)$ and $(\mu,\lambda)$ evolutionary algorithms (EAs), Simulated Annealing, the Metropolis algorithm, and the Randomized Local Search algorithm. 

We recall a simple remark from~\cite{DoerrDK14ArtInt} which helps us shortening some of the proofs in the subsequent sections. 
\begin{remark}
\label{rem:highprobability}\label{REM:HIGHPROB}
Suppose for a problem $P$ there exists a black-box algorithm $A$ that, with constant success probability, solves $P$ in $s$ iterations (that is, queries an optimal solution within $s$ queries). Then the black-box complexity of $P$ is at most $O(s)$.
\end{remark}

A useful tool for proving lower bounds is Theorem~\ref{thm:lowerBoundForMapped}. It formalizes the intuition that the black-box complexity of a function can only get harder if we ``blank out'' some of the fitness values. This is exactly the situation of the \Jump functions, whose definition we repeat here for the sake of completeness.

For all $\ell < n/2$, $\jump{\ell}$ is the function that assigns to each $x \in \{0,1\}^n$ fitness
$$
\jump{\ell}(x) = 
\begin{cases}
n,				&\mbox{if }|x|_1 = n;\\
|x|_1,		&\mbox{if }\ell < |x|_1 < n-\ell;\\
0,				&\mbox{otherwise,}
\end{cases}
$$
where $|x|_1:=\onemax(x):=\sum_{i=1}^n{x_i}$ denotes the number of $1$s in $x$ (also known as the \emph{Hamming-weight of $x$}).

\begin{theorem}
\label{thm:lowerBoundForMapped} 
For all sets of pseudo-Boolean functions $C$, all $k \in \mathbb{N}$, and all $f: \mathbb{R} \rightarrow \mathbb{R}$ such that $\forall g \in C: \{x \mid f(g(x)) \mbox{ optimal}\,\} \subseteq \{x \mid g(x) \mbox{ optimal}\,\}$, we have $\UBB{k}(C) \leq \UBB{k}(f(C))$.
\end{theorem}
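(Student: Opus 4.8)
The plan is to show that any $k$-ary unbiased black-box algorithm $A$ that optimizes the class $f(C):=\{f\circ g \mid g\in C\}$ can be converted into a $k$-ary unbiased black-box algorithm $A'$ for $C$ whose expected number of queries on every $g\in C$ is at most the expected number of queries of $A$ on $f\circ g$. Since $\UBB{k}$ of a class equals the infimum, over all admissible algorithms, of the supremum, over the class, of the expected optimization time, this conversion immediately yields $\UBB{k}(C)\le\UBB{k}(f(C))$.

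First I would define $A'$ as the natural simulation of $A$: given oracle access to some $g\in C$, the algorithm $A'$ runs $A$ internally, and whenever $A$ wants to query a point $x$, $A'$ queries its own oracle to obtain $g(x)$ and hands the value $f(g(x))$ to $A$. Every index selection and every sampling distribution employed by $A'$ is exactly the one $A$ prescribes on the fitness history $(f(g(x^{(0)})),\ldots,f(g(x^{(t-1)})))$; these distributions are $k$-ary unbiased by assumption on $A$, so $A'$ follows the scheme of Algorithm~\ref{alg:unbiasedAlgo} and is itself $k$-ary unbiased. Since $f$ is a fixed function, $A'$ is a single well-defined algorithm not depending on the unknown $g$.

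Next I would couple the two runs: executing $A'$ on $g$ and executing $A$ on $f\circ g$ with the same source of randomness produces the same sequence of queried points $x^{(0)},x^{(1)},\ldots$, because at each step the simulated copy of $A$ sees the same fitness history in both runs. Writing $T_A(h)$ for the (random) number of queries an algorithm makes on $h$ until it first queries an optimum of $h$, the $T_A(f\circ g)$-th query of $A$'s run is a point optimal for $f\circ g$; by the hypothesis $\{x \mid f(g(x))\text{ optimal}\}\subseteq\{x \mid g(x)\text{ optimal}\}$ that point is also optimal for $g$, and since $A'$'s run queries the identical sequence of points, $A'$ has queried a $g$-optimum among its first $T_A(f\circ g)$ queries, i.e.\ $T_{A'}(g)\le T_A(f\circ g)$ pointwise on the shared probability space. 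Taking expectations, $\E[T_{A'}(g)]\le\E[T_A(f\circ g)]\le\sup_{h\in f(C)}\E[T_A(h)]$ for all $g\in C$; taking the supremum over $g$ and then the infimum over $A$ finishes the proof.

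The argument is essentially bookkeeping; the points needing care are: (i) recognising that $A$, being an algorithm for $f(C)$, is indeed guaranteed to work when fed $f\circ g$ for any $g\in C$; (ii) the fact that a pseudo-Boolean function on the finite cube $\{0,1\}^n$ always attains its maximum, so ``optimal'' is meaningful and the relevant first-hitting times are finite in every run in which $A$ queries an optimum at all; and (iii) setting up the coupling so that the comparison of the two first-hitting times is legitimate. Notably, $f$ need not be injective, monotone, or otherwise structured — only the stated containment of optima-preimages is used, which is exactly what makes the theorem applicable to \Jump functions (and, via a suitable choice of $f$, gives $\UBB{k}(\jump{\ell_1})\le\UBB{k}(\jump{\ell_2})$ for all $0\le\ell_1\le\ell_2$).
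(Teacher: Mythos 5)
Your proposal is correct and follows exactly the paper's argument: simulate an algorithm $A$ for $f(C)$ on $g$ by feeding it $f(g(x))$ for each queried $x$, observe that the resulting algorithm $A'$ is still $k$-ary unbiased, and use the containment of optima to conclude that $A'$ hits a $g$-optimum no later than $A$ hits an $(f\circ g)$-optimum. The paper states this in three sentences; your version merely makes the coupling and the passage to suprema and infima explicit.
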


\begin{proof}
Let $C$, $k$, and $f$ be as in the statement of the theorem. Let $A$ be any $k$-ary unbiased black-box algorithm for $f(C)$. We derive from this a $k$-ary unbiased black-box algorithm for $C$ by using queries to $g \in C$ and then mapping the resulting objective value with $f$. Clearly, $A'$ finds an optimum of $g \in C$ after no more expected queries than $A$ for $f \circ g$, using the condition on the set of optimal points. Thus, the theorem follows.
\end{proof}

From Theorem~\ref{thm:lowerBoundForMapped} we immediately obtain a lower bound of $\Omega(n / \log n )$ for the unbiased black-box complexities of jump functions. The theorem implies that the $k$-ary unbiased black-box complexity of \onemax is a lower bound of that of any jump function. In general, the $k$-ary unbiased black-box complexity of any pseudo-Boolean function $f$ is at least the unrestricted black-box complexity of the class of functions obtained from $f$ by first applying an auto\-morphism of the hypercube $\{0,1\}^n$. That the latter for $\onemax$ is $\Omega(n/\log n)$ was shown independently in~\cite{DrosteJW06} and~\cite{Erd63}. A similar line of arguments will prove the lower bound for extreme jump functions in Section~\ref{sec:extremeJump}.

The lower bound for the unary unbiased black-box complexity of $\Jump$ follows immediately from the $\Omega(n \log n)$ bound proven in~\cite[Theorem 6]{LehreW12} for all pseudo-Boolean functions with unique global optimum.

\section{Short Jump Functions}
\label{sec:shortJump}

The key idea for obtaining the bounds on short jump functions, i.e., jump functions with jump size $\ell \in O(n^{1/2 - \varepsilon})$, is the following lemma. It shows that one can compute, with high probability, the \onemax value of any search point $x$ with few black-box calls to $\jump{k}$. With this, we can orient ourselves on the large plateau surrounding the optimum and thus revert to the problem of optimizing $\onemax$. 

We collect these computations in a subroutine, to be called by black-box algorithms.

\begin{lemma}
\label{lem:onemaxSimulation}\label{LEM:OMSIMU}
For all constants $\epsilon,c>0$ and all $\ell \in O(n^{1/2 - \varepsilon})$, there is a unary unbiased subroutine $s$ using $O(1)$ queries to $\jump{\ell}$ such that, for all bit strings $x$, $s(x) = \onemax(x)$ with probability $1- O(n^{-c})$.
\end{lemma}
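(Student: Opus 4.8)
The plan is to estimate $\onemax(x)$ by sampling offspring of $x$ at a carefully chosen Hamming distance and counting how often the offspring falls outside the blanked-out region, i.e., how often $\jump{\ell}$ returns a nonzero value. First, note that the only obstacle to reading off $\onemax(x)$ directly is that $\jump{\ell}(x)=0$ whenever $|x|_1\le\ell$ or $|x|_1\ge n-\ell$ (and also, trivially, when $|x|_1=0$). So I would first handle the ``central'' case $\ell<|x|_1<n-\ell$: here a single query already yields $\jump{\ell}(x)=|x|_1$, so $s$ can just return that value. The work is in the two outer bands, where $|x|_1$ is within $\ell=O(n^{1/2-\eps})$ of $0$ or of $n$.

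For the outer bands, the idea is: apply a unary unbiased operator that flips exactly $m$ uniformly random bits of $x$, for a suitable $m$ with $2\ell < m$ and $m$ small (say $m=\Theta(\sqrt n\log n)$, or just $m$ a slightly superlinear multiple of $\ell$ will do — one needs $m$ large enough that flipping $m$ bits can push $|x|_1$ out of the band, but small enough that the resulting $\onemax$ value lands in the visible middle region with decent probability). Given $|x|_1=w$ with $w\le\ell$, flipping $m$ random bits turns $j$ of the $w$ one-bits into zeros and $m-j$ of the $n-w$ zero-bits into ones, giving an offspring $y$ with $|y|_1 = w - j + (m-j) = w+m-2j$ where $j$ is hypergeometric. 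The key quantity I would extract is $\Pr[\jump{\ell}(y)\ne 0]$, or better, the conditional distribution of $|y|_1$ given that it is visible. Because $w$ ranges over only $O(\sqrt n)$ values and $m$ can be tuned, one can arrange that the distribution of the offspring fitness (a shifted-and-scaled hypergeometric) is sensitive to $w$: different values of $w$ induce distributions whose means differ by $\Omega(1)$ after conditioning, or whose total ``visible mass'' differs detectably. Drawing $O(1)$ samples $y_1,\dots,y_N$ of such offspring and forming an empirical statistic — for instance the average of the observed nonzero fitness values, or a simple count — lets $s$ decide which $w$ produced $x$; a Chernoff/Hoeffding bound over $N$ independent samples gives the correctness probability $1-O(n^{-c})$, provided $N$ is a large enough constant (the constant depends on $c$ and $\eps$). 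Symmetrically for the band near $n$: here $\jump{\ell}(y)$ for small $|y|_1$ is again zero, but one can either work with $y$ directly by the same hypergeometric computation, or first apply the unbiased \complement operator to move into the near-$0$ band.

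More concretely, the cleanest version I would aim for: pick $m$ with $\ell < m < 2\ell$ impossible since we need to escape, so pick $m$ around $c'\sqrt n$; then for $w\le\ell$ the offspring $|y|_1=w+m-2j$ is visible (in $(\ell,n-\ell)$) with probability bounded below by a constant, and \emph{conditioned on being visible}, $\E[|y|_1]=w+m-2\E[j\mid\text{visible}]$. Since $\E[j]=mw/n$ and the conditioning perturbs this by a controlled amount, $\E[|y|_1\mid\text{visible}]$ is a strictly monotone (indeed, roughly affine) function of $w$ with consecutive values $\Theta(1)$ apart once $m/n$ is bounded away from constants appropriately — if not, increase $m$. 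Then $s$ outputs the $w$ minimizing the distance between the observed empirical conditional mean and the precomputed theoretical value; $O(1)$ samples suffice by Hoeffding since the visible fitness values lie in a range of width $O(\sqrt n)$ and we need to resolve a gap of... here is the subtlety.

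The main obstacle I anticipate is exactly this resolution issue: with $O(1)$ samples and offspring fitnesses spread over a range of width $\Theta(\sqrt n)$, an empirical mean has standard deviation $\Theta(\sqrt n)$, far too large to resolve an $O(1)$ gap between the $w$ and $w+1$ hypotheses. So the ``average the visible fitness'' statistic is too crude. The fix — and I expect this is what the paper does — is to use a statistic that is naturally $O(1)$-valued and whose expectation still separates the hypotheses by $\Omega(1/\poly)$: for example, $\Pr[\,|y|_1 = v\,]$ for a single well-chosen target value $v$, or $\Pr[\jump{\ell}(y)\ne 0]$ itself, i.e., the probability of escaping the band. Indeed $\Pr[\text{escape}]=\Pr[j < (w+m-\ell)/2]$ for the near-$0$ band, a hypergeometric tail that, since $\ell=O(n^{1/2-\eps})$ and $m$ is chosen just above $\ell$, sits near the ``shoulder'' of the hypergeometric where consecutive shifts in $w$ change the tail probability by $\Omega(n^{-O(1)})$ (one uses anti-concentration of the hypergeometric, whose point probabilities near the mean are $\Theta(1/\sqrt m)=\Theta(n^{-1/4+\eps/2})$). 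To detect a probability gap of size $p$, one needs $\Theta(p^{-2}\log n)$ samples for the $n^{-c}$ error bound — which is only polylogarithmic here, not $O(1)$, contradicting the claim. Hence the honest route must be subtler still: choose $m$ (possibly $m=\Theta(\sqrt n)$ but with the right constant, or iterate the procedure) so that for $w\le\ell=O(n^{1/2-\eps})$ the escape probability as a function of $w$ changes by a \emph{constant} per unit of $w$ — achievable precisely because $\ell=o(\sqrt n)$ means $w$ varies over a sub-$\sqrt m$ range, so the hypergeometric ``sees'' $w$ on a scale finer than its own standard deviation only if $m$ is comparably small; taking $m=\Theta(\ell)$ (just above $2\ell$ so escape is possible but the band is genuinely in play), $m$ itself is $O(n^{1/2-\eps})$, the offspring-fitness range is $O(m)$, and the relevant probabilities differ by $\Theta(1/m)$... still polynomial. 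Thus I suspect the true argument buckets $x$ by first cheaply learning $|x|_1$ up to additive $O(\ell)$ via one query to a random offspring (which with constant probability is visible and reveals $|x|_1 \pm$ the hypergeometric shift, pinning it to an $O(\sqrt\ell)$-window), and only then runs an $O(1)$-sample test to pin the last $O(\sqrt\ell)=O(n^{1/4})$ values — where an $O(1)$-width statistic with $\Omega(1/\sqrt\ell)$-separated means needs $O(\ell\log n)=\text{polylog}$... . I will resolve this by reading the authors' choice of $m$ and statistic; the essential point — a constant number of hypergeometric-distributed offspring queries, combined with precomputed theoretical probabilities and a concentration bound — is robust, and the verification that $O(1)$ (not merely $\text{polylog}$) samples suffice, which must exploit $\ell=O(n^{1/2-\eps})$ critically, is the step to get right.
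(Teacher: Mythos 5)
There is a genuine gap: your whole plan treats the problem as one of \emph{statistical estimation} --- inferring $w=|x|_1$ from the distribution of offspring fitnesses --- and you correctly diagnose that every version of this (empirical conditional means, escape probabilities, hypergeometric tails) needs $\omega(1)$ samples to resolve adjacent hypotheses with error $n^{-c}$. You then stop without resolving the contradiction. The idea you are missing is that no estimation is needed at all, because the answer can be read off \emph{exactly} from a single lucky sample. Take the operator that flips exactly $\ell$ uniformly random bits of $x$ (not $m\approx\sqrt n$ or $m>2\ell$ bits). If $x$ lies in the upper band, it has at most $\ell$ zero-bits, and since $\ell=O(n^{1/2-\eps})$ the probability that $\ell$ random flips avoid all of them is at least $1-\ell^2/(n-\ell)=1-O(n^{-2\eps})$ --- a birthday-paradox computation, and the only place where the hypothesis $\ell=O(n^{1/2-\eps})$ is used. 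On that event the offspring $y$ satisfies $|y|_1=|x|_1-\ell$ exactly, which lies in the visible middle region (for $\ell\le n/4$). Moreover \emph{every} offspring obtained by flipping $\ell$ bits has $|y|_1=|x|_1-\ell+2j\ge|x|_1-\ell$, so among $t$ samples the minimum nonzero fitness equals $|x|_1-\ell$ whenever at least one sample avoids all zero-bits; adding $\ell$ recovers $|x|_1$ with no error. The failure probability after $t$ independent samples is at most $(\ell^2/(n-\ell))^t=O(n^{-2\eps t})$, so $t=\lceil c/(2\eps)\rceil=O(1)$ suffices for error $O(n^{-c})$. The lower band is symmetric (use the maximum and subtract $\ell$), and the two cases are distinguished by whether $\max(M)$ exceeds $n/2$.

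So the correct proof replaces your concentration/anti-concentration machinery with an exact combinatorial identity that holds on a high-probability event, and the ``$O(1)$ samples'' claim comes from amplifying a per-sample failure probability of $n^{-\Theta(\eps)}$ by a constant power, not from averaging. Your proposal as written does not establish the lemma.
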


\begin{proof} We assume $n$ to be large enough so that $\ell \leq n/4$.
We use a unary unbiased variation operator \flip{\ell}, which samples uniformly an $\ell$-neighbor (a bit string which differs in exactly $\ell$ positions) of the argument. Next we give the subroutine~$s$, which uses $\jump{\ell}$ to approximate \onemax as desired, see Algorithm~\ref{alg:subroutine}. Intuitively, the subroutine samples $t = \lceil c/(2\varepsilon) \rceil \in O(1)$ bit strings in the $\ell$-neighborhood of $x$; if $|x|_1 \geq n-\ell$ then it is likely that at least once only $1$s of $x$ have been flipped, leading to a $\jump{\ell}$-value of $|x|_1-\ell$; as no sample will have a lower $\jump{\ell}$-value, adding $\ell$ to the minimum non-$0$ fitness of one of the sampled bit strings gives the desired output. The case of $x$ with $|x|_1 \leq \ell$ is analogous.
\begin{algorithm2e}[Hh]
 \Subroutine{$s(x)$}{
 		\lIf{$\jump{\ell}(x) \neq 0$}{\Output $\jump{\ell}(x)$\;}
		$M$ $\assign$ $\set{\jump{\ell}(\mbox{\flip{\ell}}(x))}{m \in [\lceil c/(2\varepsilon) \rceil]}$\;
		\lIf{$\max(M) < n/2$}{$m \assign \max(M) - \ell$\;}
		\lElse{$m \assign \min(M\setminus \{0\}) + \ell$\;}
 		\Output $m$\;
 }
\caption{Simulation of \onemax using the jump function.}
\label{alg:subroutine}
\end{algorithm2e}

Clearly, the subroutine is correct with certainty on all $x$ with $\ell < |x|_1 < n-\ell$. The other two cases are nearly symmetric, so let us only analyze $x$ with $|x|_1 \geq n-\ell$. Obviously, the return value of the subroutine is correct if and only if at least one of the $t$ samples flips only $1$s in $x$ (note that $\max(M) > n/2$ holds due to $\ell \leq n/4$). We denote the probability of this event with~$p$. We start by bounding the probability that a single sample flips only $1$s. We choose which $k$ bits to flip iteratively so that, after $i$ iterations, there are at least $n-\ell-i$ bit positions with a $1$ out of $n-i$ unchosen bit positions left to choose. This gives the bound of 
$$
\begin{array}{l}
\Big(\frac{n-\ell}{n}\Big)\cdot\left(\frac{n-\ell-1}{n-1}\right) \cdots \left(\frac{n-\ell-(\ell-1)}{n-(\ell-1)}\right)
= \prod_{i=0}^{\ell-1}\left(1-\frac{\ell}{n-i}\right) \geq \left(1-\frac{\ell}{n-\ell}\right)^\ell \geq \left(1-\frac{\ell^2}{n-\ell}\right),
\end{array}
$$
using Bernoulli's inequality.
Thus, we have
$$p \geq 1 - \left(\frac{\ell^2}{n-\ell}\right)^t \geq 1 - \left(\frac{4\ell^2}{n}\right)^t \geq 1 - \left(4n^{-2\varepsilon}\right)^t \geq 1 - 4^tn^{-c}.$$
\end{proof}

With Lemma~\ref{lem:onemaxSimulation} at hand, the results stated in Table~\ref{tab:results} follow easily from the respective \onemax bounds proven in~\cite{DoerrJKLWW11, DoerrW12,DrosteJW06}. 
 
 \begin{theorem}
 \label{thm:jumpFast}
For $\epsilon>0$ and $\ell \in O(n^{1/2 - \varepsilon})$, 
the unbiased black-box complexity of $\jump{\ell}$ is  
$O(n \log n)$ for unary variation operators and it is $O(n / k)$ for $k$-ary variation operators with $2 \leq k \leq \log n$.
\end{theorem}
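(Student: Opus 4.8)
The plan is to reduce the optimization of $\jump{\ell}$ to that of $\onemax$ by means of the subroutine $s$ from Lemma~\ref{lem:onemaxSimulation}. Concretely, I would take the known $k$-ary unbiased black-box algorithms for $\onemax$ — the unary algorithm of Lehre and Witt~\cite{LehreW12} achieving $O(n\log n)$, and the $k$-ary algorithms from~\cite{DoerrJKLWW11,DoerrW12} (and~\cite{DoerrW12g}) achieving $O(n/k)$ for $2 \le k \le \log n$ — and simulate each one on $\jump{\ell}$, replacing every query ``evaluate $\onemax(x)$'' by a call to $s(x)$. Since $s$ itself is a unary unbiased subroutine using only $O(1)$ queries to $\jump{\ell}$, the simulating algorithm remains $k$-ary unbiased (note that the $\flip{\ell}$ operator inside $s$ is unary, so it does not raise the arity), and each simulated step costs only a constant factor more queries. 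Hence the expected number of queries is $O(n\log n)$ for $k=1$ and $O(n/k)$ for $2 \le k \le \log n$, as claimed.

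The one genuine subtlety is that $s$ is correct only with high probability, not with certainty, so the simulated algorithm is not a faithful run of the original $\onemax$-algorithm: an erroneous $s$-call can feed a wrong fitness value into the selection step and potentially derail the search. The key step is therefore to control the total failure probability. I would choose the constant $c$ in Lemma~\ref{lem:onemaxSimulation} large enough — larger than the exponent governing the number of iterations of the underlying $\onemax$-algorithm, say $c = 3$ suffices since these algorithms terminate within $O(n\log n) = O(n^2)$ queries with constant probability (or within that many in expectation, which by Markov gives a constant-probability bound on the iteration count). Then a union bound over all iterations shows that with probability $1 - o(1)$, \emph{every} call to $s$ returns the true $\onemax$ value, so the simulated run is identical to a genuine run of the $\onemax$-algorithm and finds the optimum within the stated budget. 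By Remark~\ref{rem:highprobability}, a constant success probability within $O(s)$ queries suffices to conclude a black-box complexity of $O(s)$, so we are done — alternatively, one restarts the whole procedure upon exceeding the budget, which only costs a constant factor in expectation.

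I expect the main obstacle to be purely bookkeeping: verifying that the cited $\onemax$-algorithms genuinely only ever use the $\onemax$-value of queried points (and not, say, some other side information about $\{0,1\}^n$ that $s$ does not provide), and pinning down an explicit polynomial bound on their iteration count so that the union bound over failures of $s$ goes through with room to spare. Both are routine given the references, but they are the places where care is needed. One should also note that $s$ always detects the optimum correctly, since $\jump{\ell}(x) = n \neq 0$ there, so the termination criterion of the simulated algorithm is never spuriously triggered nor missed.
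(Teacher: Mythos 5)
Your proposal matches the paper's proof: it runs the known $\onemax$ algorithms of the appropriate arity, replaces every fitness query by the unary subroutine $s$ from Lemma~\ref{lem:onemaxSimulation} with the failure constant $c$ chosen large enough (the paper takes $c=4$), union-bounds the error probability over the polynomially many calls, and concludes via Remark~\ref{rem:highprobability}. The additional observations you make (that $s$ being unary does not raise the arity, and that the optimum is always recognized since $\jump{\ell}(x)=n\neq 0$ there) are correct and consistent with the paper's argument.
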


\begin{proof}
First note that the above black-box complexities claimed for $\jump{\ell}$ are shown for \onemax in~\cite{DrosteJW06} for $k=1$, in~\cite{DoerrJKLWW11} for $k=2$, and in~\cite{DoerrW12g} for $3 \leq k \leq \log n$. 

We use Lemma~\ref{lem:onemaxSimulation} with $c=4$ and run the unbiased black-box algorithms of the appropriate arity for \onemax; all sampled bit strings are evaluated using the subroutine $s$. 
Thus, this algorithm samples as if working on \onemax and finds the bit string with all $1$s after the desired number of iterations. 
Note that, for up to $n \log n$ uses of $s$, we expect no more than $n \log n  O(n^{-4}) \leq O(n^{-2})$ incorrect evaluations of $s$. Therefore, there is a small chance of failing, and the claim follows from Remark~\ref{rem:highprobability}.
\end{proof}

\comment{wollen wir das Folgende behalten?}
Note that the subroutine from Lemma~\ref{lem:onemaxSimulation} requires to know the parameter $\ell$; however, this subroutine can be modified to work without that knowledge as follows. 
The first time that the subroutine samples a search point with fitness $0$ it will determine $\ell$; after knowing $\ell$, it will work as before (and before sampling a search point of fitness $0$, it does not need to know). The parameter $\ell$ is determined by sampling sufficiently many $i$-neighbors of the search point with fitness $0$, starting with $i=1$ and stopping when a search point with fitness $\neq 0$ is found. This search point will have maximum fitness among all non-optimal search points, equal to $n-\ell-1$. From this fitness and $n$, the subroutine can infer $\ell$.

There may be cases when one of the algorithms implicit in the proof of Theorem~\ref{thm:jumpFast} never samples a search point with fitness $0$ and does not have to determine $\ell$. In this case, such an algorithm will optimize the target function without completely learning it. However, in a second phase after finding the optimum, an algorithm could determine $\ell$ with a binary search, as $\ell$ equals the largest distance from the optimum at which all and any search point has fitness $0$. This phase requires $O(\log(n))$ queries.

\section{Long Jump Functions}
\label{sec:longJump2}

In this section we give bounds on long jump functions; we start with a bound on the ternary black-box complexity, followed by a bound on the unary black-box complexity. Note that the bound on the binary unbiased black-box complexity of long jump follows from the same bound on extreme jump.

\subsection{Ternary Unbiased Optimization of Long Jump Functions}
\label{sec:ternaryLongJump2}

We show that ternary operators allow for solving the problem independently in different parts of the bit string, and then combining the partial solutions. This has the advantage that, as done in Section~\ref{sec:shortJump}, we can revert to optimizing \onemax, and the missing fitness values will not show in any of the partial problems.

We start with a lemma regarding the possibility of simulating unbiased algorithms for \onemax on subsets of the bits.

\begin{lemma}\label{lem:simulateOnSubcube}
For all bit strings $x,y \in \{0,1\}^n$ we let $[x,y] = \{z \in \{0,1\}^n \mid \forall i\leq n: x_i = y_i \Rightarrow x_i = z_i\}$ (this set is isomorphic to a hypercube). Let $A$ be a $k$-ary unbiased black-box algorithm optimizing \onemax with constant probability in time at most $t(n)$.
Then there is a $(k+2)$-ary unbiased black-box subroutine \simulateOnSubcube as follows. 
\begin{itemize}
	\item Inputs to \simulateOnSubcube are $x,y \in \{0,1\}^n$ and the Hamming distance $a$ of $x$ and $y$; $x$ and $y$ are accessible as search points sampled previous to the call of the subroutine.
	\item \simulateOnSubcube has access to an oracle returning $\onemax(z)$ for all $z \in [x,y]$. 
	\item After at most $t(a)$ queries \simulateOnSubcube has found the $z \in [x,y]$ with maximal \onemax value with constant probability.
\end{itemize}
\end{lemma}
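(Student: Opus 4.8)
The plan is to implement the black-box algorithm $A$ for \onemax "inside" the subcube $[x,y]$, translating each of $A$'s operations so that it acts only on the $a$ bit positions where $x$ and $y$ disagree, while keeping the remaining $n-a$ positions frozen. Concretely, I would maintain the invariant that every search point $z$ produced by the simulation lies in $[x,y]$, i.e.\ agrees with $x$ (equivalently with $y$) on the $n-a$ "fixed" positions; on the $a$ "free" positions, $z$ encodes a point of $\{0,1\}^a$. The oracle then supplies $\onemax(z)$, from which the $\onemax$-value of the encoded point of $\{0,1\}^a$ can be recovered by subtracting the (known, constant) number of $1$s that $x$ contributes on the fixed positions — and since the simulation only ever compares or ranks these values, even this shift is unnecessary. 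The key issue is to show that every step $A$ would take can be mimicked by a $(k+2)$-ary unbiased operator on $\{0,1\}^n$.

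The heart of the argument is the following simulation of a single $j$-ary unbiased variation operator $D$ used by $A$ on inputs $z^{(1)},\dots,z^{(j)} \in [x,y]$ (with $j \le k$). I would build a $(j+2)$-ary unbiased operator on $\{0,1\}^n$ that additionally takes $x$ and $y$ as inputs and does the following: first it identifies the "free" positions as exactly those where $x$ and $y$ differ (this is an unbiased, i.e.\ $\oplus$- and permutation-equivariant, way to single out a set of $a$ positions); then it applies $D$ restricted to those $a$ positions, with the inputs $z^{(i)}$ read off on the free positions, and copies $x$ (equivalently $y$) verbatim on the fixed positions. One must check this composite distribution is genuinely $k+2 = j+2 \le k+2$-ary unbiased: $\oplus$-invariance holds because XOR-ing all of $z^{(1)},\dots,z^{(j)},x,y$ by the same $w$ relabels "free vs.\ fixed" positions consistently (the difference set of $x\oplus w$ and $y\oplus w$ is the same as that of $x$ and $y$) and relabels bit-contents on the free positions consistently with the unbiasedness of $D$; permutation-invariance is analogous. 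Since $A$ uses at most $k$-ary operators, each simulated step uses at most $k+2$ inputs ($\le k$ current search points plus $x$ and $y$), so the subroutine is $(k+2)$-ary unbiased as claimed.

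With the per-step simulation in place, the rest is bookkeeping. Initially, $A$ samples a point of $\{0,1\}^a$ uniformly at random; I simulate this by a $2$-ary unbiased operator on inputs $x,y$ that keeps the fixed positions agreeing with $x$ and sets the free positions uniformly at random (this is \randomWhereDifferent, already used elsewhere in this line of work). Thereafter, each query $A$ makes is answered via the oracle, and each variation operator is simulated as above; the run is terminated after $t(a)$ queries. Because the simulated process is distributed exactly like a run of $A$ on \onemax over $\{0,1\}^a$ (the bijection $[x,y]\leftrightarrow\{0,1\}^a$ preserves both the uniform initial distribution and the induced transition kernels, and $\onemax$ on $[x,y]$ equals $\onemax$ on $\{0,1\}^a$ plus a constant, hence has the same ranking), the output point of $[x,y]$ with maximal \onemax value is found with the same constant probability after at most $t(a)$ queries.

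The main obstacle I anticipate is the careful verification that the composite variation operator is unbiased — in particular, that "restrict to the positions where $x$ and $y$ differ, apply $D$ there, copy $x$ elsewhere" is simultaneously $\oplus$-equivariant and permutation-equivariant in all $j+2$ arguments. This requires tracking how $\oplus z$ and $\sigma$ act on the difference set $\{i : x_i \neq y_i\}$ and on the restricted inputs, and invoking the unbiasedness of $D$ on the induced $\{0,1\}^a$-problem after fixing (an unbiased choice of) an identification of the free positions with $[a]$. Everything else — the initialization, the arity count, and the correctness of the output — is routine once this equivariance is established.
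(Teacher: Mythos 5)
Your proposal is correct and follows essentially the same route as the paper: simulate $A$ on the subcube $[x,y]$ by wrapping each $k$-ary operator of $A$ into a $(k+2)$-ary operator that additionally conditions on $x$ and $y$ (whose difference set identifies the free positions and whose fixed positions are copied from $x$), and translate the oracle's $\onemax$-values by the number of ones on the fixed positions, which the paper computes explicitly as $(|x|_1+|y|_1-a)/2$. One small caveat: your aside that this shift is ``unnecessary because the simulation only compares or ranks values'' is not justified for a general black-box algorithm $A$, which may base its decisions on absolute fitness values; fortunately you do not rely on it, since the shift is indeed computable from $f(x)$, $f(y)$, and $a$.
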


\begin{proof}
Let $x$ and $y$ with Hamming distance $a$ be given as detailed in the statement of the theorem. Note that $[x,y]$ is isomorphic to $\{0,1\}^a$. Without loss of generality, assume that $x$ and $y$ differ on the first $a$ bits, and let $\tilde{x}$ be the last $n-a$ bits of $x$ (which equal the last $n-a$ bits of $y$). Thus, $[x,y] = \{z\tilde{x} \mid z \in \{0,1\}^a\}$.

We employ $A$ optimizing $\{0,1\}^a$. Sampling a uniformly random point $z \tilde{x}$ in $[x,y]$ is clearly unbiased in $x$ and $y$. However, the resulting \onemax value is not the value that $A$ requires, unless $\tilde{x}$ is the all-$0$ string. In order to correct for this, we need to know the number of ones $|\tilde{x}|_1$ in $\tilde{x}$. This we can compute from $|x|_1$, $|y|_1$ and $a$ as follows. Let $z_x$ and $z_y$ be such that $x = z_x\tilde{x}$ and $y = z_y \tilde{x}$. We have
$$
\frac{|x|_1 + |y|_1 -a}{2} = \frac{|z_x|_1 + |z_y|_1 +2|\tilde{x}|_1 -a}{2}  = \frac{|z_x|_1 + a-|z_x|_1 +2|\tilde{x}|_1 -a}{2} = |\tilde{x}|_1.
$$
Thus, for any bit string $z\tilde{x}$ sampled by \simulateOnSubcube we can pass the \onemax value of $z$ to $A$.
In iteration $t$, when $A$ uses a $k$-ary unbiased operator which samples according to the distribution $D(\cdot \mid x^{(i_1)},\ldots,x^{(i_k)})$, \simulateOnSubcube  
uses the $(k+2)$-ary unbiased operator which samples according to the distribution $D'(\cdot \mid x^{(i_1)}\tilde{x},\ldots,x^{(i_k)}\tilde{x},x,y)$ such that
$$
\forall u \in \{0,1\}^n: D'(u \mid x^{(i_1)}\tilde{x},\ldots,x^{(i_k)}\tilde{x},x,y) = 
\begin{cases}
D(z \mid x^{(i_1)},\ldots,x^{(i_k)}), &\mbox{if }u = z\tilde{x};\\
0,	&\mbox{otherwise.}
\end{cases}
$$
For any $v^{(1)},\ldots,v^{(k+2)}$ such that $D'(\cdot \mid v^{(1)},\ldots,v^{(k+2)})$ is not defined by the equation just above, we let this distribution be the uniform distribution over $\{0,1\}^n$.
From the additional conditioning on $x$ and $y$ we see that $D'$ is indeed unbiased. Note that $D'$ samples only points from $[x,y]$. As described above, \simulateOnSubcube can now use the \onemax value of the resulting $z\tilde{x}$ to compute the \onemax value of $z$ and pass that on to $A$ as the answer to the query. This shows that the simulation is successful as desired.
\end{proof}

\begin{theorem}\label{thm:ternaryLongJump}
Let $\ell \leq (1/2 - \varepsilon)n$. For all $k \geq 3$, the $k$-ary unbiased black-box complexity of $\jump{\ell}$ is
 $O(\UBB{k-2}(\onemax))$.
\end{theorem}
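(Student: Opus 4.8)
The plan is to reduce the optimization of $\jump{\ell}$ to optimizing \onemax on $O(1)$ subcubes on which $\jump{\ell}$ agrees with \onemax — applying Lemma~\ref{lem:simulateOnSubcube} on each — and then to glue the resulting partial solutions into the all-ones string with one deterministic unbiased operator that jumps directly over the blanked band of fitnesses. (We may assume $\ell\ge 1$, since $\jump 0=\onemax$ and a $(k-2)$-ary algorithm is in particular a $k$-ary one; hence $\varepsilon<1/2$.) First I would sample $x^{(0)}$ uniformly at random, so $|x^{(0)}|_1=n/2\pm O(\sqrt n\log n)$ with overwhelming probability, and fix a power of two $m$ with $m>1/\varepsilon$ (so $m=O(1)$, in fact $m\ge 4$). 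Using $O(m)$ unbiased operators of arity at most $3$ I would carve $[n]$ into blocks $B_1,\dots,B_m$ of size $\lceil n/m\rceil$ or $\lfloor n/m\rfloor$ and build search points $y^{(1)},\dots,y^{(m)}$ with $y^{(j)}$ equal to $x^{(0)}$ outside $B_j$ and complemented on $B_j$: starting from $x^{(0)}$ and its bitwise complement (which differ everywhere), a binary unbiased operator that flips in its first argument a uniformly random half of the coordinates on which its two arguments differ halves a block; iterating $\log m$ times produces search points $x^{(0)}=p_0,p_1,\dots,p_m$ with $p_{j-1}\oplus p_j$ supported exactly on $B_j$; and a ternary unbiased operator flipping its first argument on the coordinates where its other two arguments differ, applied to $(x^{(0)},p_{j-1},p_j)$, yields $y^{(j)}$. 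Since the flipped coordinates are chosen independently of the bit values of $x^{(0)}$, the partition $(B_1,\dots,B_m)$ is independent of $x^{(0)}$, so $|x^{(0)}_{B_j}|_1$ has a binomial distribution and is concentrated around $|B_j|/2$.

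Let $C_j:=[x^{(0)},y^{(j)}]$, the set of search points agreeing with $x^{(0)}$ outside $B_j$; it is isomorphic to $\{0,1\}^{|B_j|}$, and as $z$ ranges over $C_j$ the weight $|z|_1$ ranges over $[\,b_j,\ b_j+|B_j|\,]$ with $b_j:=|x^{(0)}|_1-|x^{(0)}_{B_j}|_1=(n/2)(1-1/m)\pm O(\sqrt n\log n)$. Because $m>1/\varepsilon$ and $\ell\le(1/2-\varepsilon)n$, a direct calculation shows that with probability $1-o(1)$ we have $\ell<b_j$ and $b_j+|B_j|<n-\ell$ for all $j$ at once; on that event every $z\in C_j$ satisfies $\ell<|z|_1<n-\ell$, hence $\jump{\ell}(z)=|z|_1=\onemax(z)$. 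Conditioning on it, for each $j$ I invoke \simulateOnSubcube (Lemma~\ref{lem:simulateOnSubcube}) with inputs $x^{(0)}$, $y^{(j)}$ and their Hamming distance $|B_j|$, implementing the required \onemax-oracle on $C_j$ by simply querying $\jump{\ell}$ — legitimate because \simulateOnSubcube only ever queries inside $C_j$ — and with the simulated algorithm being a $(k-2)$-ary unbiased \onemax algorithm on $\{0,1\}^{|B_j|}$ that succeeds with constant probability within $O(\UBB{k-2}(\onemax))$ queries (take an algorithm of expected cost $O(\UBB{k-2}(\onemax))$ and truncate via Markov's inequality; its cost on dimension $|B_j|\le n$ is still $O(\UBB{k-2}(\onemax))$, this complexity being nondecreasing in the dimension up to constants). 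Each call then uses $(k-2)+2=k$-ary operators and, with constant probability, returns the \onemax-maximum $s^{(j)}$ of $C_j$, i.e.\ the string equal to $x^{(0)}$ outside $B_j$ and all-ones on $B_j$.

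It remains to fuse $s^{(1)},\dots,s^{(m)}$. Each $s^{(j)}$ differs from $x^{(0)}$ exactly on the $0$-coordinates of $x^{(0)}$ lying in $B_j$, and these difference sets are pairwise disjoint. So, setting $t^{(1)}:=s^{(1)}$ and obtaining $t^{(j)}$ from $t^{(j-1)}$, $s^{(j)}$, $x^{(0)}$ via the deterministic ternary unbiased operator that copies the bits of its second argument into its first on the coordinates where the second and third arguments differ, a one-line induction shows that $t^{(j)}$ equals $x^{(0)}$ with all its $0$-coordinates in $B_1\cup\dots\cup B_j$ flipped to $1$; in particular $t^{(m)}$ is the all-ones string, and querying it finds the optimum. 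Crucially this last operator leaps straight across the whole band of blanked fitnesses strictly between $n-\ell$ and $n$, so the (meaningless) fitnesses of the intermediate $t^{(j)}$ are never needed. Altogether the algorithm makes $O(m)+\sum_{j=1}^{m}O(\UBB{k-2}(\onemax))+O(m)=O(\UBB{k-2}(\onemax))$ queries and succeeds with probability bounded below by a constant — a product of $m=O(1)$ constant-probability events minus the $o(1)$ failure probability of the conditioning — so Remark~\ref{rem:highprobability} gives $\UBB{k}(\jump{\ell})=O(\UBB{k-2}(\onemax))$.

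The main obstacle, and the reason for this indirect route, is that plain hill-climbing is hopeless here: every subcube on which $\jump{\ell}$ coincides with \onemax contains only points of weight below $n-\ell$, so no chain of ``move to the best point of a visible subcube'' steps can ever leave the fitness plateau; the optimum is reachable only by preparing several partial solutions that individually lie in the visible region and then fusing them with a single unbiased operator whose output lies far from all of them. A secondary, purely technical, difficulty is carrying out the block-carving (and the fusion) with operators of arity at most $k$ — in particular at most $3$ when $k=3$, even though there are $\Theta(1/\varepsilon)$ blocks — which is why the recursive-halving construction and the ``relative-to-$x^{(0)}$'' combine operator are used instead of turning one block after another into all-ones along a single trajectory.
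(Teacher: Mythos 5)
Your proposal is correct and follows essentially the same route as the paper's proof: partition the positions into $O(1)$ blocks small enough that the subcube around a near-balanced reference point stays inside the visible fitness band, optimize each block via Lemma~\ref{lem:simulateOnSubcube} with a truncated constant-success $(k-2)$-ary \onemax algorithm, and fuse the partial optima with a deterministic ternary copy operator. The only (cosmetic) differences are that the paper resamples until it hits a string with exactly $n/2$ ones and builds the blocks with a sequential $\flipwhereequal$ chain, whereas you use a single uniform sample with a concentration argument and recursive halving; both yield the same bound.
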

\begin{proof}
We will optimize $\jump{\ell}$ blockwise, where each block is optimized by itself while making sure that the correct \onemax value is available as long as only bits within the block are modified. Afterwards, the different optimized blocks are merged to obtain the optimum. 

Let $a$ be such that $\ell = n/2 - a$ and assume for the moment that $a$ divides $n$.
Algorithm~\ref{alg:kAryJumpEll} gives a formal description of the intuitive idea sketched above. This algorithm uses the following unbiased operators.
\begin{itemize}
	\item $\uniformSample()$: The operator \uniformSample is a $0$-ary operator which samples a bit string uniformly at random.
	\item $\flipwhereequal_k(x,y)$: For two search points $x$ and $y$ and an integer $k$, the operator $\flipwhereequal_k$ generates a search point by randomly flipping $k$ bits in $x$ among those bits where $x$ and $y$ agree. If $x$ and $y$ agree in less than $k$ bits, then all bits where $x$ and $y$ agree are flipped.
	\item $\selectBit(x,y,z)$: For three search points $x,y$ and $z$, the operator \selectBit returns a bit string identical to the first argument, except where the second and third differ, there the bits of $x$ are flipped. Note that this operator is deterministic.
	\item $\copySecondIntoFirstWhereDifferentFromThird(x,y,z)$: For three search points $x,y$ and $z$, the operator \copySecondIntoFirstWhereDifferentFromThird copies $x$, except where the second differs from the third; there it copies $y$. This is also a deterministic operator.
\end{itemize}

Furthermore, we will used the subroutine \simulateOnSubcube from Lemma~\ref{lem:simulateOnSubcube} with a fixed time budget that guarantees constant success at each call, returning the best bit string found (note that, if $a$ does not divide $n$, the last call to \simulateOnSubcube has to be with respect to a different Hamming distance).
\begin{algorithm2e}
 \Repeat{$f(x) = n/2$}{$x \assign \uniformSample()$\;} 
 $z^{(0)} \assign x$\;
 \For{$i = 1$ \KwTo $n/a$}{
	$z^{(i)} \assign \flipwhereequal_a(z^{(i-1)},x)$\;
	$y^{(i)} \assign \selectBit(x,z^{(i-1)},z^{(i)})$\;
 }
 \For{$i = 1$ \KwTo $n/a$}{
	$u^{(i)} \assign \simulateOnSubcube(x,y^{(i)},a)$\;	}
	$b \assign x$\;
 \For{$i = 1$ \KwTo $n/a$}{
	$b \assign \copySecondIntoFirstWhereDifferentFromThird(b,u^{(i)},x)$\;
 }
\caption{$k$-ary unbiased black-box algorithm for $\jump{\ell}$.}
\label{alg:kAryJumpEll}
\end{algorithm2e} 

\paragraph{Expected number of queries.} A uniformly sampled bit string has exactly $n/2$ ones with probability $\Theta(1/ \sqrt{n})$, which shows that the first line takes an expected number of $\Theta(\sqrt{n})$ queries. Since $a \in \Theta(n)$, all loops have a constant number of iterations. The body of the second loop takes as long as a single optimization of \onemax with arity $k$, which is in $\Omega(n / \log n)$, so that initial sampling in line~1 makes no difference in the asymptotic run time. Thus, the total number of queries is $O(\UBB{k-2}(\onemax))$.

\paragraph{Correctness.}
The algorithm first generates a reference string $x$ with $f(x) = n/2$. The first loop generates bit strings $y^{(i)}$ which have a Hamming distance of $a$ to the reference string $x$; in this way the different $y^{(i)}$ partition the bit positions into $\lceil n/a \rceil$ sets of at most $a$ positions each. The next loop optimizes (copies of) $x$ on each of the selected sets of $a$ bits independently as if optimizing \onemax. For the bit strings encountered during this optimization we will always observe the correct \onemax value, as their Hamming distance to $x$ it as most $a$, and $x$ has exactly $n/2$ ones.
The last loop copies the optimized bit positions into $b$ by copying the bits in which $u^{(i)}$ and $x$ differ (those are the incorrect ones). This selects the correct bits in each segment with constant probability according to Lemma~\ref{lem:simulateOnSubcube}. As all segments have a constant independent failure probability, we get a constant overall failure probability (since $a$ is constant) and Remark~\ref{REM:HIGHPROB} concludes the proof. The proof trivially carries over to the case of $n$ not divisible by $a$.
\end{proof}

Thus, we immediately get the following corollary, using the known run time bounds for \onemax from~\cite{DoerrW12g}.

\begin{corollary}
Let $\ell \leq (1/2 - \varepsilon)n$. Then the unbiased black-box complexity of $\jump{\ell}$ is 
\begin{itemize}
\item $O(n \log n)$, for ternary variation operators;
\item $O(n / k)$, for $k$-ary variation operators with $4 \leq k \leq \log n$;
\item $\Theta(n / \log n)$, for $k$-ary variation operators with $k \geq \log n$ or unbounded arity.
\end{itemize}
\end{corollary}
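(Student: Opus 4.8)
The plan is to obtain all three bounds by combining Theorem~\ref{thm:ternaryLongJump} (which reduces arity-$k$ optimization of $\jump{\ell}$ to arity-$(k-2)$ optimization of $\onemax$) with the known $k$-ary unbiased complexities of $\onemax$ and with the $\Omega(n/\log n)$ lower bound already recorded for every jump function right after Theorem~\ref{thm:lowerBoundForMapped}. So the proof is essentially bookkeeping on the arity, and there is no genuinely new idea beyond Theorem~\ref{thm:ternaryLongJump}.

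First, for the ternary bound I would instantiate Theorem~\ref{thm:ternaryLongJump} at $k=3$, giving $\UBB{3}(\jump{\ell}) = O(\UBB{1}(\onemax))$, and then invoke the $O(n\log n)$ upper bound for the unary unbiased complexity of $\onemax$ (attained e.g.\ by \RLS; see \cite{DrosteJW06}). Second, for $4 \le k \le \log n$ I would again apply Theorem~\ref{thm:ternaryLongJump} to get $\UBB{k}(\jump{\ell}) = O(\UBB{k-2}(\onemax))$. For $k=4$ this is $O(\UBB{2}(\onemax)) = O(n)$ by \cite{DoerrJKLWW11}, which is $O(n/4)$; for $5 \le k \le \log n$ we have $3 \le k-2 \le \log n$, so the bound $\UBB{k-2}(\onemax) = O(n/(k-2))$ of \cite{DoerrW12g} applies, and since $n/(k-2) \le 3n/k$ for all $k \ge 3$ this is $O(n/k)$.

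Third, for $k \ge \log n$ and for unbounded arity I would prove matching upper and lower bounds. For the upper bound, since a $(k-1)$-ary unbiased distribution is in particular a $k$-ary one, $\UBB{\cdot}(\jump{\ell})$ is non-increasing in the arity, so it suffices to bound the arity-$\lceil \log n\rceil$ complexity; Theorem~\ref{thm:ternaryLongJump} at $k=\lceil\log n\rceil$ together with $\UBB{\lceil\log n\rceil-2}(\onemax) = O(n/\log n)$ (again \cite{DoerrW12g}, valid for $n$ large) yields $O(n/\log n)$. For the lower bound I would appeal to the consequence of Theorem~\ref{thm:lowerBoundForMapped} noted in the text: because $\jump{\ell}$ has a unique global optimum, the $\Omega(n/\log n)$ unrestricted lower bound for $\onemax$ (under hypercube automorphisms) transfers to $\UBB{k}(\jump{\ell})$ for every arity. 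Combining gives $\Theta(n/\log n)$.

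The only step requiring any care — the ``main obstacle'', such as it is — is verifying that the shift $k\mapsto k-2$ in Theorem~\ref{thm:ternaryLongJump} does not degrade the asymptotics at the boundary arities $k=3,4$ or anywhere in the range up to $\log n$. Once one notes that $k-2 = \Theta(k)$ throughout $k \ge 4$, and that the two exceptional $\onemax$ arities $1$ and $2$ contribute only the harmless $O(n\log n)$ and $O(n)$ terms, every case of the corollary follows directly.
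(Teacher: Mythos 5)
Your proposal is correct and follows essentially the same route as the paper, which derives the corollary directly from Theorem~\ref{thm:ternaryLongJump} combined with the known $k$-ary unbiased bounds for \onemax and the $\Omega(n/\log n)$ lower bound recorded after Theorem~\ref{thm:lowerBoundForMapped}. Your extra bookkeeping on the arity shift $k \mapsto k-2$ and the monotonicity in arity for the $k \ge \log n$ case is exactly the (routine) verification the paper leaves implicit.
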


Note the upper bound of $O(n \log n)$ for the ternary unbiased black-box-complexity which we will improve in Section~\ref{sec:extremeJump} to $O(n)$. For all higher arities, the theorem presented in this section gives the best known bound.

\subsection{Unary Unbiased Optimization of Long Jump Functions}
\label{sec:unaryLongJump2}

When optimizing a $\jump{\ell}$ function via unary unbiased operators, the only way to estimate the $\onemax$-value of a search point $x$ (equivalently, its Hamming distance $H(x,\textbf{1}_n)$ from the optimum), is by sampling suitable offspring that have a non-zero fitness. When $\ell$ is small, that is, many $\onemax$-values can be derived straight from the fitness, we can simply flip $\ell$ bits and hope that the retrieved fitness value is by $\ell$ smaller than $\onemax(x)$. 
This was the main idea for dealing with short jump functions (cf. Section~\ref{sec:shortJump}).

When $\ell$ is larger, this does not work anymore, simply because the chance that we only flip $1$-bits to zero is too small. Therefore, in this section, we resort to a sampling approach that, via strong concentration results, learns the expected fitness of the sampled offspring of $x$, and from this the $\onemax$-value of the parent $x$. This will lead to a unary unbiased black-box complexity of $O(n^2)$ for all jump functions $\jump{\ell}$ with $\ell \le n/2 - \eps n$.

\paragraph{Proof outline and methods.} Since we aim at an asymptotic statement, let us assume that $n$ is sufficiently large and even. Also, since we shall not elaborate on the influence of the constant $\eps > 0$, we may assume (by replacing $\eps$ by a minimally smaller value) that $\eps$ is such that $\eps n$ is even. 

A first idea to optimize $\jump{\ell}$ with $\ell = n/2 - \eps n / 2$ could be to flip each bit of the parent $x$ with probability $1/2 - \eps /2$. Such an offspring $u$ has an expected fitness of $n/2 - \eps n / 2 + \eps \onemax(x)$. If $\eps$ is constant, then by Chernoff bounds $O(n \log n)$ samples $u$ are enough to ensure that the average observed fitness $n/2 - \eps n/2 + \eps v$ satisfies $v = \onemax(x)$ with probability $1 - n^{-c}$, $c$ an arbitrary constant. This is enough to build a unary unbiased algorithm using $O(n^2 \log^2(n))$ fitness evaluations. 

We improve this first approach via two ideas. The more important one is to not flip an expected number of $n/2 - \eps n / 2$ bits independently, but to flip exactly that many bits (randomly chosen). By this, we avoid adding extra variation via the mutation operator. This pays off when $x$ already has many ones---if $\onemax(x) = n-a$, then we will observe that only $O(a \log n)$ samples suffice to estimate the $\onemax$-value of $x$ precisely (allowing a failure probability of $n^{-c}$ as before). 

The price for not flipping bits independently (but flipping a fixed number of bits) is that we have do deal with hypergeometric distributions, and when sampling repeatedly, with sums of these. The convenient way of handling such sums is to rewrite them as sums of negatively correlated random variables and then argue that Chernoff bounds also hold for these. This has been stated explicitly in~\cite{Doerr11bookchapter} for multiplicative Chernoff bounds, but not for additive ones. Since for our purposes an additive Chernoff bound is more convenient, we extract such a bound from the original paper~\cite{PanconesiS97}.

The second improvement stems from allowing a larger failure probability. This will occasionally lead to wrong estimates of $\onemax(x)$, and consequently to wrong decisions on whether to accept $x$ or not, but as long as this does not happen too often, we will still expect to make progress towards the optimum. To analyze this, we formulate the progress of the distance to the optimum as random walk and use the gambler's ruin theorem to show that the expected number of visits to each state is constant.

\paragraph{Estimating the distance to the optimum.}
We  start with some preliminary considerations that might be helpful for similar problems as well. Let $x \in \{0,1\}^n$. Let $a := a(x) := n - \onemax(x) = H(x,\mathbf{1}_n)$ be its Hamming distance from the all-ones string. Fix some enumeration $1, \ldots, a$ of the zero-bits of~$x$. Let $u$ be an offspring of $x$ obtained from flipping exactly $n/2 - \eps n / 2$ bits. For $i \in [a]$, define a $\{-1,+1\}$-valued random variable $X_i$ by $X_i = 1$ if and only if the $i$th zero of $x$ is flipped in $u$. Then $\onemax(u) = n/2 + \eps n /2 + \sum_{i \in [a]} X_i$ follows from the elementary fact that each flipping bit that was not zero in $x$ reduces the fitness by one, each other flipping bit increases it by one. 

By construction, $\Pr(X_i = 1) = (1 - \eps)/2$ and $\Pr(X_i = -1) = (1 + \eps)/2$. Consequently, $E(\onemax(u)) = n/2 + \eps n / 2 - \eps a$, which is in $[n/2 - \eps n / 2, n/2+\eps n /2]$ for all $a$.

Let $X = \sum_{i \in [a]} X_i$. Note that the $X_i$ are not independent. However, they are negatively correlated and for this reason still satisfy the usual Chernoff bounds. This was made precise in Theorem~1.16 and~1.17 of~\cite{Doerr11bookchapter}), however, only giving multiplicative Chernoff bounds (Theorem~1.9 in~\cite{Doerr11bookchapter}). Since for our purposes an additive Chernoff bound (sometimes called Hoeffding bound) is more suitable, we take a look in the original paper by Panconesi and Srivnivasan~\cite{PanconesiS97}. There, Theorem~3.2 applied with correlations parameter $\lambda = 1$ and the $\hat X_i$ simply being independent copies of the $X_i$ together with equation~(2) give the first part of the following lemma. By setting the random variables $Y_i := 1 - X_i$, the second claim of the lemma follows from the first one.
 
\begin{lemma}
  Let $X_1, \ldots, X_n$ be binary random variables. Let $X = \sum_{i \in [n]} X_i$.
\begin{enumerate}
	\item[(a)] Assume that for all $S \subseteq [n]$, we have $\Pr(\forall i \in S: X_i = 1) \le \prod_{i \in S} \Pr(X_i = 1)$. Then $\Pr(X \ge E(X) + d) \le \exp(-2d^2 / n)$.
	\item[(b)] Assume that for all $S \subseteq [n]$, we have $\Pr(\forall i \in S: X_i = 0) \le \prod_{i \in S} \Pr(X_i =0)$. Then $\Pr(X \le E(X) + d) \le \exp(-2d^2 / n)$.
\end{enumerate}
\end{lemma}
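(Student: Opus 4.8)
The plan is to establish part~(a) directly by the exponential-moment (Hoeffding) method and then deduce part~(b) by the indicated change of variables. The key step — and the only place where the negative-correlation hypothesis enters — is the moment-generating-function comparison: for every $t \ge 0$,
\[
  \E\!\left(\exp(tX)\right) \;\le\; \prod_{i \in [n]} \E\!\left(\exp(tX_i)\right) .
\]
This is exactly Theorem~3.2 of~\cite{PanconesiS97} in the special case of correlation parameter $\lambda = 1$; I would include a short self-contained derivation so that the lemma is usable as stated.

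To derive the comparison, I would exploit that each $X_i$ is $\{0,1\}$-valued, so that $\exp(tX_i) = 1 + (e^t - 1)X_i$ holds pointwise, and that $\prod_{i \in S} X_i$ is the indicator of the event $\{\forall i \in S : X_i = 1\}$. Expanding the product over $i$,
\[
  \prod_{i \in [n]} \exp(tX_i) \;=\; \prod_{i \in [n]}\bigl(1 + (e^t - 1)X_i\bigr) \;=\; \sum_{S \subseteq [n]} (e^t - 1)^{|S|} \prod_{i \in S} X_i .
\]
Since $t \ge 0$ we have $e^t - 1 \ge 0$, so every coefficient $(e^t-1)^{|S|}$ is nonnegative and the hypothesis $\Pr(\forall i \in S : X_i = 1) \le \prod_{i \in S}\Pr(X_i = 1)$ — equivalently $\E(\prod_{i \in S} X_i) \le \prod_{i \in S}\E(X_i)$ — can be applied term by term after taking expectations, yielding
\[
  \E\!\left(\exp(tX)\right) \;\le\; \sum_{S \subseteq [n]} (e^t - 1)^{|S|}\prod_{i \in S}\E(X_i) \;=\; \prod_{i \in [n]}\bigl(1 + (e^t - 1)\E(X_i)\bigr) \;=\; \prod_{i \in [n]}\E\!\left(\exp(tX_i)\right) ,
\]
using $\E(\exp(tX_i)) = 1 + (e^t-1)\E(X_i)$ in the last equality. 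Note that this argument is inherently one-sided: nonnegativity of $e^t-1$ forces $t \ge 0$, and so this route produces only an upper-tail estimate, matching the form of part~(a).

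The remainder is the standard Hoeffding argument. Each $X_i$ takes values in $[0,1]$, so Hoeffding's lemma gives $\E(\exp(t(X_i - \E X_i))) \le \exp(t^2/8)$ for all $t$; combined with the comparison above, for $t \ge 0$ we get
\[
  \E\!\left(\exp\bigl(t(X - \E X)\bigr)\right) = e^{-t\E X}\E\!\left(\exp(tX)\right) \le \prod_{i \in [n]} e^{-t\E X_i}\E\!\left(\exp(tX_i)\right) \le \exp\!\left(\tfrac{t^2 n}{8}\right) .
\]
Markov's inequality gives $\Pr(X \ge \E X + d) \le \exp(t^2 n/8 - td)$, and $t = 4d/n$ yields $\exp(-2d^2/n)$, which is~(a). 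For~(b), set $Y_i := 1 - X_i$ and $Y := \sum_{i \in [n]} Y_i = n - X$; then the $Y_i$ are again $\{0,1\}$-valued, $\E(Y) = n - \E(X)$, and $\Pr(\forall i \in S : Y_i = 1) = \Pr(\forall i \in S : X_i = 0) \le \prod_{i \in S}\Pr(X_i = 0) = \prod_{i \in S}\Pr(Y_i = 1)$ for every $S$, so the $Y_i$ satisfy the hypothesis of~(a); applying~(a) to $Y$ with deviation $-d$ gives $\Pr(X \le \E X + d) = \Pr(Y \ge \E Y - d) \le \exp(-2d^2/n)$.

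There is no genuinely hard step; the work is bookkeeping. The one point that needs care is invoking the negative-correlation hypothesis in the right direction — it controls $\E(\prod_{i \in S} X_i)$ for \emph{all} $S$, which is exactly what the monomial expansion of $\prod_i \exp(tX_i)$ consumes — and tracking the one-sidedness so that (a) and (b) each come out as a single tail bound. If one prefers to cite~\cite{PanconesiS97} rather than reprove, the only subtlety is extracting the additive (Hoeffding-type) form from their possibly multiplicative statement, which is the specialisation $\lambda = 1$ of their Theorem~3.2 together with their equation~(2), as mentioned in the text preceding the lemma.
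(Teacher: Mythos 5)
Your proof is correct and takes essentially the same route as the paper, which simply invokes Theorem~3.2 of Panconesi and Srinivasan with correlation parameter $\lambda=1$ (i.e., the moment-generating-function comparison $\E(e^{tX})\le\prod_i\E(e^{tX_i})$ for $t\ge 0$) together with the standard Hoeffding argument, and obtains part~(b) from part~(a) via $Y_i:=1-X_i$ exactly as you do. The only difference is that you supply the short self-contained derivation of the MGF comparison from the subset-product hypothesis, which the paper leaves to the citation.
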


Note that our $\{-1,+1\}$-valued $X_i$ are derived from a hypergeometric distribution (which leads to random variables fulfilling the assumptions of both parts of the above lemma) via a simple affine transformation. Consequently,  the following corollary directly implied by the lemma above applies to these $X_i$.

\begin{corollary}
  Let $X_1, \ldots, X_n$ be $\{-1,+1\}$-valued random variables. Assume that for all $S \subseteq [n]$ and both $b \in \{-1,+1\}$, we have $\Pr(\forall i \in S: X_i = b) \le \prod_{i \in S} \Pr(X_i = b)$. Let $X = \sum_{i \in [n]} X_i$. Then $\Pr(|X - E(X)| \ge d) \le 2 \exp(-d^2/(2n))$.
\end{corollary}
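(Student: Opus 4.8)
The plan is to reduce the claim to the lemma above via the affine bijection between $\{-1,+1\}$ and $\{0,1\}$, paying attention to the factor of $2$ this introduces. Concretely, I would set $Y_i := (X_i + 1)/2$, so that each $Y_i$ is $\{0,1\}$-valued with $Y_i = 1 \iff X_i = 1$ and $Y_i = 0 \iff X_i = -1$, and put $Y := \sum_{i \in [n]} Y_i = (X+n)/2$. Then $X = 2Y - n$ and $E(X) = 2E(Y) - n$, hence $X - E(X) = 2(Y - E(Y))$, so that the event $\{|X - E(X)| \ge d\}$ coincides with $\{|Y - E(Y)| \ge d/2\}$. It therefore suffices to bound the two one-sided deviations of $Y$ at level $d/2$ and apply a union bound.

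For the upper tail, the hypothesis of the corollary taken with $b = 1$ states that for every $S \subseteq [n]$ we have $\Pr(\forall i \in S: Y_i = 1) \le \prod_{i \in S} \Pr(Y_i = 1)$, which is exactly the assumption of part~(a) of the lemma applied to the $Y_i$. Applying it with deviation $d/2$ gives $\Pr(Y \ge E(Y) + d/2) \le \exp(-2(d/2)^2/n) = \exp(-d^2/(2n))$, i.e., $\Pr(X - E(X) \ge d) \le \exp(-d^2/(2n))$. For the lower tail, the hypothesis taken with $b = -1$ states that for every $S \subseteq [n]$ we have $\Pr(\forall i \in S: Y_i = 0) \le \prod_{i \in S} \Pr(Y_i = 0)$, which is the assumption of part~(b) of the lemma; applying it with the parameter there set to $-d/2$ yields $\Pr(Y \le E(Y) - d/2) \le \exp(-d^2/(2n))$, i.e., $\Pr(X - E(X) \le -d) \le \exp(-d^2/(2n))$. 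Adding the two bounds gives $\Pr(|X - E(X)| \ge d) \le 2\exp(-d^2/(2n))$, as claimed.

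There is no real obstacle here; the only points that require care are (i) checking that the two parts of the correlation hypothesis ($b=1$ and $b=-1$) feed into the two parts (a) and (b) of the lemma under the substitution $Y_i = (X_i+1)/2$, and (ii) tracking the scaling: the doubling $X - E(X) = 2(Y - E(Y))$ is precisely what turns the exponent $-2(d/2)^2/n$ into $-d^2/(2n)$, which is why the constant in the corollary is weaker than in the lemma by the expected factor of $4$ in the exponent. A one-line remark that the $\{-1,+1\}$-valued $X_i$ arising from a hypergeometric distribution (i.e.\ sampling a fixed number of bits to flip) indeed satisfy the stated correlation inequalities for both $b$ completes the connection to the application in the next section.
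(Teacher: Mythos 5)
Your proposal is correct and is exactly the argument the paper has in mind: the paper dismisses the corollary as ``directly implied by the lemma above'' via an affine transformation, and you have simply filled in that transformation ($Y_i = (X_i+1)/2$, so $X - \E(X) = 2(Y-\E(Y))$), checked that the two correlation hypotheses for $b=\pm 1$ feed into parts (a) and (b) of the lemma, and tracked the rescaling of the exponent from $-2d^2/n$ to $-d^2/(2n)$. No gaps; you even silently repair the sign typo in part (b) of the lemma by applying it with deviation $-d/2$.
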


From this, we observe that $\Pr(\onemax(u) \notin [n/2 - \eps n, n/2+\eps n]) \le \Pr(|\onemax(u) - E(\onemax(u))| > \eps n / 2) = 
\Pr(|X - E(X)| > \eps n / 2) \le 2 \exp(-\eps^2 n/8)$. In particular, 
\begin{equation}
\label{eq:visible}
  \Pr(f(u) = 0) \le 2 \exp(-\eps^2 n/8).
\end{equation}

Note that independent copies of sets of negatively correlated random variables again are negatively correlated. Let $Y$ be the sum of $T$ independent copies of $X$. Then the corollary again yields
\[\Pr(Y \notin [-(a+1/2)\eps T,-(a-1/2)\eps T]) = \Pr(|Y - E(Y)| > \eps T / 2) \le 2 \exp(- \eps^2 T / 8a).\]

Similarly, $\Pr(Y \notin [(3/2)E(X),E(Y)/2]) \le 2 \exp(-|E(Y)|^2/(8aT)) = 2 \exp(-a\eps^2 T/8)$. We summarize these findings in the following lemma.

\begin{lemma}\label{lem:sampling}
  Let $x \in \{0,1\}^n$. Let $u_1, \ldots, u_T$ be obtained independently from $x$, each by flipping exactly $n/2 - \eps n/2$ random bits. Let $s = \sum_{i \in [T]} \onemax(u_i)$ and $\hat a := -(s - T(n/2 - \eps n / 2)) / (T\eps)$. The probability that $\lfloor \hat a + 1/2\rfloor$ does not equal $a := a(x)$ is at most $2\exp(-\eps^2 T / 8 a)$. The probability that $\hat a$ is not in $[a/2,(3/2)a]$ is at most $2\exp(-\eps^2 T a / 8)$.
\end{lemma}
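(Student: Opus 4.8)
The plan is to reduce both estimates to the concentration corollary stated just above, by re-expressing $s$ through the negatively correlated $\{-1,+1\}$-valued variables that were already introduced in the discussion preceding the corollary. First I would recall that, for an offspring $u_i$ obtained by flipping exactly $n/2 - \eps n/2$ random bits of $x$, one has $\onemax(u_i) = n/2 + \eps n/2 + X^{(i)}$, where $X^{(i)} = \sum_{j \in [a]} X^{(i)}_j$ with $X^{(i)}_j \in \{-1,+1\}$ indicating whether the $j$th zero-bit of $x$ is flipped in $u_i$, and $E(X^{(i)}) = -\eps a$. Setting $Y := \sum_{i \in [T]} X^{(i)} = s - T(n/2 + \eps n/2)$, a one-line computation from the definition of $\hat a$ gives $\hat a = -Y/(\eps T)$, hence $E(\hat a) = a$ and $\hat a - a = -(Y - E(Y))/(\eps T)$. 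So every event of the form ``$\hat a$ is far from $a$'' becomes a two-sided tail event for $Y$, a sum of $aT$ many $\{-1,+1\}$ variables, with the deviation rescaled by the factor $\eps T$.

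Next I would check that the corollary is applicable to $Y$ with its parameter ``$n$'' taken to be $aT$. Within a single sample, the set of flipped positions is a uniformly random subset of $[n]$ of fixed size $n/2-\eps n/2$, so the indicators of individual zero-bits being flipped are negatively associated (sampling without replacement); after the affine rescaling to $\{-1,+1\}$ this yields the required one-sided product bounds $\Pr(\forall j \in S : X^{(i)}_j = b) \le \prod_{j \in S}\Pr(X^{(i)}_j = b)$ for all $S \subseteq [a]$ and both $b \in \{-1,+1\}$ --- precisely the point recorded in the remark right after the corollary. Since the variable families for distinct sample indices $i$ are independent, these product bounds extend to all subsets of $[T]\times[a]$, and the corollary applies to $Y$.

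The two claims are then immediate. For the first, since $a$ is a non-negative integer, the event $\lfloor \hat a + 1/2\rfloor \neq a$ is contained in $\{\,|\hat a - a| \ge 1/2\,\}$, which by the identity above equals $\{\,|Y - E(Y)| \ge \eps T/2\,\}$; the corollary with deviation $d = \eps T/2$ (and parameter $aT$) bounds this by $2\exp(-(\eps T/2)^2/(2aT)) = 2\exp(-\eps^2 T/(8a))$. For the second, $\hat a \notin [a/2,(3/2)a]$ is exactly the event $\{\,|\hat a - a| > a/2\,\} = \{\,|Y - E(Y)| > \eps a T/2\,\}$, and the corollary with $d = \eps a T/2$ bounds this by $2\exp(-(\eps a T/2)^2/(2aT)) = 2\exp(-\eps^2 a T/8)$.

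I do not expect a genuine obstacle: the whole force of the lemma sits in the corollary, and what remains is only to pick the correct deviation parameter in each case and to confirm that the negative-correlation hypothesis is preserved under (a) the affine map from the hypergeometric flip-indicators to $\{-1,+1\}$ variables and (b) concatenating the $T$ independent copies --- both already observed in the text. The single point to be careful about is the bookkeeping identity linking $s$, $Y$ and $\hat a$: one must arrive at $\hat a = -Y/(\eps T)$ so that $E(\hat a) = a$, and an error of $\eps n/2$ in the additive constant subtracted from $s$ would shift the mean of the estimator; getting this identity exactly right is the only thing the argument relies on beyond quoting the corollary.
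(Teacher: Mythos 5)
Your proposal is correct and follows essentially the same route as the paper: rewrite $s$ via the $\{-1,+1\}$-valued, negatively correlated flip indicators so that $\hat a - a$ becomes a rescaled deviation of the sum $Y$ of $aT$ such variables, and then apply the additive Chernoff corollary with $d = \eps T/2$ and $d = \eps a T/2$ respectively. Your caution about the centering constant is well placed: the paper's own proof (like yours) centers $s$ at $T(n/2+\eps n/2)$ so that $E(\hat a)=a$, which is what the accompanying \estimate pseudocode implements, the ``$-$'' in the lemma statement notwithstanding.
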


\begin{proof}
  By construction $y = s - T(n/2 - \eps n/2)$ that the same distribution as $Y$ above. The probability, that $y/(T\eps)$ deviates from its expectation $-a$, or equivalently, that $-y/(T\eps)$ deviates from its expectation $a$, by at least an additive term of $1/2$, is at most $2\exp(-\eps^2 T / 8 a)$. 
\end{proof}

Building on the previous analysis, we now easily derive an estimator for a $\onemax$-value not revealed by a jump function. It overcomes the possible problem of sampling an offspring with fitness zero by restarting the procedure using the command \breakandgotoone.

\begin{algorithm2e}[h]
  $s \assign 0$\;
  \For{$i = 1$ \KwTo $T$}{
    $u \assign \flipB_{n/2 - \eps n / 2}(x)$\;
    $f_u \assign f(u)$\;
    \lIf{$f_u = 0$}{\breakandgotoone\;}
    $s \assign s + (f_u - n/2 - \eps n / 2)$\;
    }
    \Return{$\lfloor -s/(T\eps) + 1/2\rfloor$}\;
\caption{$\estimate(x,T)$. Estimation of the Hamming distance of $x$ to the optimum via roughly $T$ samples to a jump functions (see Corollary~\ref{cor:estimate} for the details).}
\label{alg:estimate}
\end{algorithm2e}

\begin{corollary}\label{cor:estimate}
The function $\estimate$ takes as inputs a search point $x$ and an integer $T$, performs an expected number of at most $T / (1 - 2T\exp(-\eps^2 n / 8))$ fitness evaluations, if this number is positive, and returns an integer~$\hat a$. 

Assume that $\eps n \ge 5\sqrt{n \log n}$ and $T \in O(n)$. Then the expected number of fitness evaluations is $T + O(1/n^2)$. Let $a = a(x) = H(x,\mathbf{1}_n)$ denote the unknown Hamming distance of $x$ to the optimum. The probabilities for the events $\lfloor \hat a +1/2 \rfloor \neq a$ and $\hat a \notin [a/2,(3/2)a]$ are at most $2\exp(-\eps^2 T / 8a) + O(1/n^3)$ and $2\exp(-\eps^2 T a / 8) + O(1/n^3)$, respectively. If $T \ge 24a\ln(6n/a)/\eps^2$, these probabilities become $2/(6n/a)^3 + O(1/n^3)$ and $O(1/n^3)$.
\end{corollary}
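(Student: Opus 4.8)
The plan is to verify each of the four assertions in Corollary~\ref{cor:estimate} in turn, drawing the concentration statements directly from Lemma~\ref{lem:sampling} and the failure estimate~\eqref{eq:visible}. First I would handle the \emph{running time}: the algorithm performs one fitness evaluation per loop iteration, and each iteration independently terminates the whole sampling attempt (via \breakandgotoone) whenever it hits an offspring with fitness $0$. By~\eqref{eq:visible} (which requires $\eps n \ge$ something of order $\sqrt{n}$, consistent with the hypothesis $\eps n \ge 5\sqrt{n\log n}$), each of the $T$ samples is ``bad'' with probability at most $2\exp(-\eps^2 n/8)$, so the probability that a full pass of $T$ samples succeeds is at least $1 - 2T\exp(-\eps^2 n/8)$ by a union bound. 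Restarting on failure gives a geometric number of attempts, hence an expected number of fitness evaluations bounded by $T/(1 - 2T\exp(-\eps^2 n/8))$ as claimed — provided that denominator is positive. Then I would plug in $\eps n \ge 5\sqrt{n\log n}$ and $T \in O(n)$: here $2T\exp(-\eps^2 n/8) \le 2T\exp(-(25/8)\log n) = 2T n^{-25/8} = O(n^{-2})$ (using $T = O(n)$), so the expected count is $T\cdot(1 + O(n^{-2})) = T + O(1/n^2)$. This is the least subtle part.

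Next I would do the \emph{correctness of the estimate}. The key observation is that conditioning on ``no restart happened'' changes the distribution of the partial sum $s$, so I cannot apply Lemma~\ref{lem:sampling} verbatim. The clean way is: let $E$ be the event that all $T$ samples in the final (successful) pass have nonzero fitness. On the complement of a restart, the $\onemax$-values $\onemax(u_i)$ are distributed exactly as in Lemma~\ref{lem:sampling} \emph{conditioned} on lying in the visible range; but since $\Pr(\text{some }u_i\text{ invisible}) = O(n^{-2})$ by the union bound above, for any event $B$ we have $|\Pr(B \mid E) - \Pr(B)| = O(n^{-2})$ after renormalizing, and in fact for the one-sided deviations we want we can just add $\Pr(\bar E) = O(n^{-2})$, or more carefully $O(n^{-3})$ once we observe the per-pass failure probability is actually $O(n^{-3})$ when $T\exp(-\eps^2 n/8) = O(n^{-3})$ under the stronger hypothesis $\eps n \ge 5\sqrt{n\log n}$. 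So: $\Pr(\lfloor \hat a + 1/2\rfloor \ne a) \le 2\exp(-\eps^2 T/(8a)) + O(n^{-3})$ and $\Pr(\hat a \notin [a/2, (3/2)a]) \le 2\exp(-\eps^2 T a/8) + O(n^{-3})$, the leading terms being exactly the bounds of Lemma~\ref{lem:sampling} and the additive $O(n^{-3})$ absorbing the conditioning correction. I'd note that the shift by $1/2$ in the floor is harmless: $\lfloor \hat a + 1/2\rfloor = a$ is exactly the event $|\hat a - a| < 1/2$ when $a$ is an integer, which is what Lemma~\ref{lem:sampling} controls (it literally states this).

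Finally, the \emph{concrete choice} $T \ge 24 a\ln(6n/a)/\eps^2$: substituting into $2\exp(-\eps^2 T/(8a))$ gives $2\exp(-3\ln(6n/a)) = 2(6n/a)^{-3} = 2/(6n/a)^3$, so the first probability becomes $2/(6n/a)^3 + O(n^{-3})$. For the second, $2\exp(-\eps^2 T a/8) \le 2\exp(-3a^2\ln(6n/a)) = 2(6n/a)^{-3a^2}$, which for $a \ge 1$ is at most $2(6n/a)^{-3} \le 2(6n)^{-3} = O(n^{-3})$ (using $6n/a \ge 6$ when $a \le n$), so the second probability is $O(n^{-3})$. (The degenerate case $a = 0$, i.e.\ $x$ already optimal, is either excluded or trivial since then $\onemax$ is visible and \estimate\ returns $0$ with the same error bounds, reading $6n/a$ as $+\infty$.)

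The main obstacle is the conditioning issue in the second step: the restart mechanism means the reported $s$ is not an unconditioned sum of the $u_i$, and one has to argue carefully that conditioning on ``all visible'' perturbs the relevant tail probabilities by only an $O(n^{-3})$ additive amount. The resolution is the crude but robust bound $\Pr(B \mid E) \le \Pr(B) / \Pr(E) \le \Pr(B)(1 + O(n^{-3}))/\big(1 - O(n^{-3})\big)$ together with $\Pr(B) = O(1)$, which suffices because we only need additive $O(n^{-3})$ slack; no finer coupling is required. Everything else is direct substitution into Lemma~\ref{lem:sampling} and~\eqref{eq:visible}.
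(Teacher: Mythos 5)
Your proposal follows the paper's own proof essentially step for step: the geometric-restart argument with per-pass failure probability $2T\exp(-\eps^2 n/8)$ obtained from~(\ref{eq:visible}) by a union bound for the running time, Lemma~\ref{lem:sampling} together with an additive correction for conditioning on ``no restart'' for the two probability bounds, and direct substitution for the final parameter choice. You are in fact more explicit than the paper about the conditioning step (the paper merely asserts the additive $O(1/n^3)$ correction), and your bound $\Pr(B\mid E)\le \Pr(B)/\Pr(E)$ is the right way to make that assertion precise.

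One step in your last paragraph is wrong as written. The inequality $2(6n/a)^{-3}\le 2(6n)^{-3}$ is backwards for $a>1$, since $(6n/a)^{-3}=a^3/(6n)^3$ is \emph{increasing} in $a$; for $a=\Theta(n)$ the intermediate quantity $2(6n/a)^{-3}$ is only a constant, so it cannot yield $O(1/n^3)$. To justify $\Pr(\hat a\notin[a/2,3a/2])=O(1/n^3)$ you must retain part of the exponent $3a^2$: for instance, $(a/(6n))^{3a^2}\le(a/(6n))^{3a}$, and $a\mapsto 3a\ln(a/(6n))$ is decreasing on $[1,n/2]$ because its derivative $3(\ln(a/(6n))+1)$ is negative there, whence $(a/(6n))^{3a}\le(6n)^{-3}=O(n^{-3})$. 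With this one-line repair the argument is complete. (A smaller point: under the stated hypotheses the restart probability is $2T\exp(-\eps^2 n/8)=O(n\cdot n^{-25/8})=O(n^{-17/8})$ rather than the $O(1/n^3)$ one would need to literally conclude $T+O(1/n^2)$; your hedging there mirrors an imprecision already present in the paper's proof, so I would not count it against your approach.)
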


\begin{proof}
A run of $\estimate$ in which the $\breakandgotoone$ statement is not executed uses exactly $T$ fitness evaluations. The probability that one execution of the for-loop leads to the execution of the $\breakandgotoone$ statement is at most $2T \exp(-\eps^2 n/ 8)$ by a simple union bound argument and (\ref{eq:visible}). If this number is less than one, then an expected total number of $(1 - 2T \exp(-\eps^2 n/ 8))^{-1}$ times the for-loop is started, given an expected total number of at most $T(1 - 2T \exp(-\eps^2 n/ 8))^{-1}$ fitness evaluations. 

  When $\eps \ge 5\sqrt{n \log n}$ and $T \in O(n)$, the probability for a restart is $2T \exp(-\eps^2 n/ 8) \in O(1/n^3)$, the expected number of fitness evaluations becomes $T + O(1/n^2)$. Consequently, conditioning on none of the $u_i$ in Lemma~\ref{lem:sampling} having a $\onemax$-value outside $[n/2 - \eps n, n/2+\eps n]$ changes the probabilities computed there by at most an additive $O(1/n^3)$ term. 
\end{proof}

\begin{algorithm2e}[Hh]
  \Repeat{$f(x) \neq 0$}{$x \assign \uniformSample()$\;} 
  $\alpha \assign f(x)$\;
  \While{$f(x) \neq n$}{
    $x' \assign \flipB_{1}(x)$\;
    \eIf{$f(x')=n$}{$x \assign x'$\;}{
      $\alpha' \assign g(x',\alpha)$\;
      $\alpha'' \assign g(x,\alpha)$\;
      \lIf{$\alpha' > \alpha''$}{$(x,\alpha) \assign (x',\alpha')$\;}
      }
    }	
\caption{Unary optimization of a jump function $f$ using a $p$-estimator $g$ (cf.~Definition~\ref{def:estimator}).}
\label{alg:unaryjump}
\end{algorithm2e}

The main argument of how such an estimator for the number of $0$s in a bit string can be used to derive a good black-box algorithm will be reused in a later section (in Theorem~\ref{thm:unaryExtremeJump}). Thus, we make the following definition.

\begin{definition}\label{def:estimator}
Let $f$ be a pseudo-Boolean function and let 
$p$ be a function that maps non-negative integers to non-negative integers. 
Let $g$ be an algorithm which takes as input a bit string $x$ and a natural number $A$ and uses $O(p(n) \alpha \log (2+n/\alpha))$ unary unbiased queries to $f$. We call $g$ a \emph{$p$-estimator using $f$} if, for all bit strings $x$, $a = n - \onemax(x)$, and for all $\alpha \in [a/2,3a/2]$ we have
\begin{itemize}
	\item $P(g(x,\alpha) \neq a) \leq \frac{a}{16n}$;
	\item $P(g(x,\alpha) \not\in [a/2,3a/2]) \in O(1/(p(n)n^3))$.
\end{itemize}
\end{definition}

\begin{lemma}\label{lem:usingApproximateOnemax}
Let $f$ be a pseudo-Boolean function such that, for some $p$, there is a $p$-estimator using $f$. Then the unary unbiased black-box complexity of $f$ is $O(p(n) n^2)$.
\end{lemma}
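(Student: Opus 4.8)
The plan is to analyze Algorithm~\ref{alg:unaryjump} (the unary optimization scheme using a $p$-estimator $g$) and show its expected number of queries is $O(p(n)n^2)$. First I would dispense with the initial \texttt{Repeat} loop: a uniformly sampled bit string has $f(x)\ne 0$ with probability $\Omega(1/\sqrt n)$ (for a jump-type $f$ the "visible" layers have polynomial combined mass; more carefully one uses the definition of jump, where $|x|_1$ avoiding the blanked-out band happens with constant or $\Omega(n^{-1/2})$ probability), so this phase costs $O(\sqrt n)$ queries, negligible against the claimed bound. Having a starting point $x$ with known true value $\alpha=f(x)=\onemax(x)$ (it is visible, hence equals its $\onemax$ value), I would set $a=n-\alpha=H(x,\mathbf 1_n)$ and track the random walk of the Hamming distance to the optimum as the algorithm proceeds.

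The core is a drift/random-walk argument on the state $a\in\{0,1,\ldots\}$ (distance to optimum). In each iteration the algorithm flips one bit, obtaining $x'$ at distance $a-1$ (probability $a/n$) or $a+1$ (probability $(n-a)/n$), then compares the two estimates $\alpha'=g(x',\alpha)$ and $\alpha''=g(x,\alpha)$ and moves to $x'$ iff $\alpha'>\alpha''$. I would argue that, conditioned on both estimator calls being correct (which by Definition~\ref{def:estimator} fails with probability $O(a/n)$ per call, provided the invariant $\alpha\in[a/2,3a/2]$ is maintained — which I must check is preserved, using the second bullet of the definition to control the rare event that the maintained estimate leaves the window), the acceptance decision is exactly "accept iff $x'$ is strictly closer", so the walk behaves like the classical gambler's-ruin-style biased walk toward $0$ whose expected number of visits to each state $a$ before absorption at $0$ is $O(1)$ (here the bias is strongly toward the optimum since from state $a$ one wants to move to $a-1$ and the decision rule favors exactly that). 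The incorrect-estimate events happen with probability only $O(a/n)\le O(1)$ and even when they occur they only cause one extra step or a move away; I would fold these into the drift bound so that the expected number of iterations spent at distance $a$ remains $O(1)$, hence the total expected number of iterations is $\sum_{a=1}^{n} O(1)=O(n)$.

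The final accounting: each iteration performs one \flip{1} query plus two calls to the $p$-estimator $g$, each of which by definition uses $O(p(n)\,a\log(2+n/a))$ queries when run at a point of distance $a$. Summing over the $O(1)$ expected iterations at each distance $a$ from $n$ down to $1$ gives an expected total of $O\!\left(\sum_{a=1}^n p(n)\,a\log(2+n/a)\right)=O(p(n)n^2)$, since $\sum_{a=1}^n a\log(2+n/a)=O(n^2)$ (the log factor only contributes a constant on average — for $a=\Theta(n)$ it is $O(1)$, and the small-$a$ terms where the log is $\Theta(\log n)$ contribute only $O(n\log n)$). Adding the negligible $O(\sqrt n)$ startup and the catch of the optimum whenever a flip directly hits $f(x')=n$, Remark~\ref{rem:highprobability} is not even needed since the bound is in expectation; the claim follows.

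The main obstacle I anticipate is making the random-walk argument rigorous despite the coupling between the estimator's failure probability and the current (unknown) state $a$, and ensuring the maintained estimate $\alpha$ stays in the window $[a/2,3a/2]$ throughout: a single "$\hat a\notin[a/2,3a/2]$" error could, in principle, derail all subsequent decisions. I would handle this by noting that this bad event has probability $O(1/(p(n)n^3))$ per call, so over the $O(p(n)n^2)$ expected calls the union bound gives total probability $o(1)$ of ever leaving the window; conditioning on this not happening (and accounting for the contribution of the $o(1)$-probability bad run by a crude worst-case bound, or by a restart argument) keeps the expectation at $O(p(n)n^2)$. The other delicate point is verifying that "accept iff strictly closer" really is what the comparison $\alpha'>\alpha''$ computes when both estimates are exact — this is immediate since $g(x',\alpha)=a-1<a=g(x,\alpha)$ would give $\alpha'<\alpha''$, so in fact the algorithm as written accepts $x'$ iff the estimate says $x'$ is farther; one must check the sign convention in Definition~\ref{def:estimator} ($g$ estimates the distance $a$, and smaller is better), which means the comparison in Algorithm~\ref{alg:unaryjump} should favor the smaller estimate — I would confirm this reading is consistent with the intended semantics, possibly noting that $g$ returns $n-\onemax$ so "accept iff $\alpha'>\alpha''$" must be read as keeping the point with smaller estimated distance, matching the analysis above.
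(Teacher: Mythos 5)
Your overall strategy is the same as the paper's: view $a(x)=n-\onemax(x)$ as a biased random walk, use a gambler's-ruin argument to bound the number of visits to each state by a constant, charge each iteration the estimator cost $O(p(n)\,a\log(2+n/a))$ from Definition~\ref{def:estimator}, and dispose of the rare event $\hat a\notin[a/2,3a/2]$ by a union bound over the $O(1/(p(n)n^3))$ per-call failure probability. Your reading of the acceptance condition (keep the point with the \emph{smaller} estimated distance) is indeed the semantics the paper's proof uses, so that concern is resolved in your favor.

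There is, however, a concrete error in your accounting. You claim that the expected number of \emph{iterations} of the while-loop spent at distance $a$ is $O(1)$ and hence that the total number of iterations is $O(n)$. This is false: even with a perfect estimator, the algorithm leaves state $a$ only when the one-bit flip happens to hit one of the $a$ wrong bits, which occurs with probability $\Theta(a/n)$ per iteration. What the gambler's-ruin argument actually gives is that the expected number of \emph{accepted transitions out of} state $a$ is $O(1)$; each such departure costs $\Theta(n/a)$ iterations in expectation, so the algorithm spends $\Theta(n/a)$ iterations at distance $a$ and $\Theta(n\log n)$ iterations in total (this is just RLS on \onemax). The correct per-state query count is therefore
\[
O\!\left(\frac{n}{a}\right)\cdot O\!\bigl(p(n)\,a\log(2+n/a)\bigr)=O\!\bigl(p(n)\,n\log(2+n/a)\bigr),
\]
where the crucial point is that the factor $a$ in the estimator cost cancels against the $n/a$ waiting time. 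Summing, $p(n)\,n\sum_{a=1}^{n}\log(2+n/a)=O(p(n)n^2)$ since $\sum_{a=1}^n\log(2+n/a)=O(n)$. Your sum $\sum_a p(n)\,a\log(2+n/a)$ happens to also evaluate to $O(p(n)n^2)$ because it is dominated by the terms with $a=\Theta(n)$, where your undercount is only by a constant factor; so the final bound survives, but by coincidence rather than by a correct intermediate step. You should replace the ``$O(1)$ iterations per state'' claim by the two-level count (constant number of departures, times $O(n/a)$ iterations per departure). One further small point: a ``crude worst-case bound'' on the contribution of the bad run does not exist (if the maintained estimate leaves the window the algorithm may never terminate), so the restart argument via Remark~\ref{rem:highprobability} is in fact needed, exactly as in the paper.
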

\begin{proof}
  Let us consider a run of Algorithm~\ref{alg:unaryjump}. 
  
 Let us first assume that all calls of the $p$ estimator $g$ return a value that lies in $[1/2,3/2]$ times the $a$-value of the first argument. 

  During a run of the algorithm, $a(x)$ performs a biased random walk on the state space $[0..n]$. The walk ends when the state $0$ is reached. From the definition of a $p$-estimator we derive the following bounds on the transition probabilities. From state $a$, with probability at least $(a/n)(1 - 2 (a/16n)) \ge (a/n)(7/8)$, we move to state $a-1$. This is the probability that a one-bit flip reduces the Hamming distance to the optimum times a lower bound on the probability that we correctly identify both the resulting $a$-value and the $a$-value of our current solution $x$. With probability at most $((n-a)/n)(a/16n + (a+1)/16n)$, we move from state $a$ to state $a+1$. If we neither move to $a-1$ or $a+1$, we stay in $a$. Observe that when conditioning on that we do not stay in $a$, then with probability at least $3/4$ we go to $a-1$ and with probability at most $1/4$, we go to $a+1$ (and these are coarse estimates).
  
  Our first goal is to show that the expected number of different visits to each particular state $a$ is constant. To this aim, we may regard a speedy version of the random walk ignoring transitions from $a$ to itself. In other words, we may condition on that in each step we actually move to a different state. For any state $i \in [0..n]$, let $e_i$ denote the expected number of visits to $a$ starting the walk from $i$. To be more precise, we count the number of times we leave state $a$ in the walk starting at $i$. We easily observe the following. For $i > a$, we have $e_i = e_a$, simply because we know that the walk at some time will reach $a$ (because $0$ is the only absorbing state). Hence we can split the walk started in $i$ into two parts, one from the start until the first visit to $a$ (this contributes zero to $e_i$) and the other from the first visit to $a$ until reaching the one absorbing state (this contributes $e_a$ to $e_i$). 
For $i < a$, we have $e_i = (1 - q_i) e_a$, where $q_i$ denotes the probability that a walk started in $i$ visits $0$ prior to $a$. The above implies that $e_{a-1} \le e_{a+1}$. Consequently, for the state $a$ itself, we may use the pessimistic estimates on the transition probabilities above and derive $e_a \le 1 + (1/4) e_{a+1} +  (3/4) e_{a-1} = 1 + (1/4 + (3/4)(1-q_{a-1})) e_a$. To prove that $e_a \in O(1)$, it thus suffices to show that $q_{a-1}$ is bounded from below by an absolute constant greater than zero. This follows easily from the gambler's ruin theorem (see, e.g.,~\cite[Theorem A.4]{Jansen13}). Consider a game in which the first player starts with $r = a-1$ dollar, the second with $s = 1$ dollars. In each round of the game, the first player wins with probability $P = 1/4$, the second with probability $Q = 3/4$. The winner of the round receives one dollar from the other player. The games ends when one player has no money left. In this game, the probability that the second player runs out of money before the first, is exactly 
  \[\frac{(Q/P)^r - 1}{(Q/P)^{r+s} - 1},\]
which in our game is $(3^{a-1} - 1) / (3^a-1)$.  Using the pessimistic estimates of the transition probabilities, we hence see that $1 - q_{a-1} \le (3^{a-1} - 1) / (3^a-1) \le 1/3$ as desired.

  We have just shown that the expected numbers of times the algorithm has to leave a state ``$a(x) = a$'' is constant. Since the probability of leaving this state in one iteration of the while-loop is at least $a/2n$, and by the definition of an estimator one iteration takes an expected number of $O(p(n) a \log (2+n/a))$ fitness evaluations, we see that the total number of fitness evaluations spent in state ``$f(x) = a$'' is $O(p(n) n \log (2+n/a))$, with all constants hidden in the $O$-notation being absolute constants independent of $a$ and $n$. Consequently, the expected total number of fitness evaluation in one run of the algorithm is at most $\sum_{a \in [n]} O(p(n) n \log (2+n/a)) = O(p(n)n^2)$. 
  
  So far we assumed that all $g(y,T)$ calls return a value that is in $[a(y)/2,(3/2)a(y)]$, and conditional on this, proved an expected optimization time of $O(p(n)n^2)$. By the definition of an estimator, the probability that we receive a value outside this interval is $O(1/p(n)n^3)$. Consequently, the probability that this happens within the first $O(p(n)n^2)$ fitness evaluations is at most $O(1/n)$. A simple Markov bound shows that Algorithm~\ref{alg:unaryjump} after $O(p(n)n^2)$ fitness evaluations (assuming the implicit constant high enough), with probability at least $1/2$ we found the optimum. Hence Remark~\ref{REM:HIGHPROB} proves the claim.
\end{proof}

\begin{theorem}\label{thm:unaryJumpLinearCorridor}
Let $\ell \leq (1/2 - \varepsilon)n$. The unary unbiased black-box complexity of $\jump{\ell}$ is
$O(n^2)$.
\end{theorem}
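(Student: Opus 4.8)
The plan is to reduce the theorem to Lemma~\ref{lem:usingApproximateOnemax} by exhibiting a $p$-estimator using $\jump{\ell}$ with the constant function $p \equiv 1$. Once such an estimator is in place, Lemma~\ref{lem:usingApproximateOnemax} immediately yields a unary unbiased black-box complexity of $O(p(n)n^2) = O(n^2)$, which is the claim. So essentially all the remaining work is to repackage the machinery already developed -- the routine $\estimate$ of Algorithm~\ref{alg:estimate}, analysed in Corollary~\ref{cor:estimate} -- into an object satisfying Definition~\ref{def:estimator}.

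First I would carry out the harmless normalizations. Since $\eps > 0$ is a fixed constant, by decreasing $\eps$ slightly I may assume $\eps n$ is even and $\eps n \ge 5\sqrt{n\log n}$ for $n$ large; and -- this is the only point where the hypothesis $\ell \le (1/2-\eps)n$ is used -- that an offspring obtained from an arbitrary $x$ by flipping exactly $n/2 - \eps n/2$ bits has a $\onemax$-value strictly inside the visible window $(\ell, n-\ell)$ with probability $1 - 2\exp(-\eps^2 n/8)$, exactly as in the discussion leading to~\eqref{eq:visible}. This guarantees that $f = \jump{\ell}$ returns the true $\onemax$-value on all but an exponentially small fraction of the samples drawn by $\estimate$, which is precisely what the analysis in Corollary~\ref{cor:estimate} requires.

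Next I would define $g(x,\alpha) := \estimate(x, T(\alpha))$ with $T(\alpha) := \lceil C\,\alpha \ln(2 + n/\alpha)\rceil$ for a constant $C = C(\eps)$ chosen large enough that $T(\alpha) \ge 24 a \ln(6n/a)/\eps^2$ whenever $\alpha \in [a/2, 3a/2]$ (so that the slack in the estimate is absorbed); note that $\alpha\ln(2+n/\alpha) = O(n)$ uniformly in $\alpha$, hence $T(\alpha) \in O(n)$ and Corollary~\ref{cor:estimate} is applicable. Then I would verify the two defining properties. For the query count, Corollary~\ref{cor:estimate} gives an expected number of $T(\alpha) + O(1/n^2) = O(\alpha\log(2+n/\alpha)) = O(a\log(2+n/a))$ fitness evaluations, as needed. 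For the accuracy, the same corollary gives $\Pr(g(x,\alpha)\neq a) \le 2(a/(6n))^3 + O(1/n^3)$, which since $a \le n$ is at most $a/(16n)$ for $n$ large, and $\Pr(\hat a \notin [a/2,3a/2]) = O(1/n^3)$, which -- modulo the trivial rounding $\lfloor \hat a + 1/2\rfloor$ in the last line of Algorithm~\ref{alg:estimate}, at worst widening the interval by an additive constant -- yields the second bound of Definition~\ref{def:estimator} with $p \equiv 1$. Thus $g$ is a $1$-estimator using $\jump{\ell}$, and Lemma~\ref{lem:usingApproximateOnemax} finishes the proof.

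The genuinely hard parts have, of course, already been discharged in the preceding lemmas: the additive Chernoff bound for negatively correlated variables, the variance reduction obtained by flipping a fixed rather than a binomially distributed number of bits, and above all the gambler's-ruin argument inside Lemma~\ref{lem:usingApproximateOnemax} bounding the expected number of visits to each Hamming level by a constant. The only mildly delicate point left at this stage is that $T$ must be calibrated against the \emph{estimate} $\alpha$ rather than the true distance $a$, so one has to ensure that $T(\alpha) \ge 24 a \ln(6n/a)/\eps^2$ still holds across the whole range $\alpha \in [a/2, 3a/2]$ (hence the explicit constant $C$), while at the same time keeping $T(\alpha) \in O(n)$ so that the per-call cost remains $O(a\log(2+n/a))$ and the total budget promised by Lemma~\ref{lem:usingApproximateOnemax} indeed comes out to $O(n^2)$.
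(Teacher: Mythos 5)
Your proposal is correct and follows essentially the same route as the paper: the paper's proof is exactly the one-liner $g(x,\alpha) := \estimate\bigl(x,\lceil 24(2\alpha+1)\ln(6n/(2\alpha+1))/\eps^2\rceil\bigr)$, verified to be a $p$-estimator with $p$ constant via the monotonicity of $a \mapsto a\ln(6n/a)$, followed by an appeal to Lemma~\ref{lem:usingApproximateOnemax}. Your only deviations are cosmetic (a slightly different but equivalent calibration of $T(\alpha)$, and spelling out the accuracy checks that the paper leaves implicit in Corollary~\ref{cor:estimate}).
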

\begin{proof}
  By Corollary~\ref{cor:estimate},  $g(x,\alpha) := \estimate(x,\lceil 24 (2\alpha+1) \ln(6n/(2\alpha+1))/\eps^2 \rceil)$ is a $p$-estimator for $\jump{n/2 - \eps n}$ with $p(n)$ a sufficiently large constant. 
To see this, note that $a \mapsto a \ln(6n/a)$ is increasing for all $a \in [2n]$. 
  Consequently, Lemma~\ref{lem:usingApproximateOnemax} shows the claim.
\end{proof}

\section{Extreme Jump Functions}
\label{sec:extremeJump}

In this section, we regard the most extreme case of jump functions where all search points have fitness zero, except for the optimum and search points having exactly $n/2$ ones. Surprisingly, despite some additional difficulties, we still find polynomial-time black-box algorithms.

Throughout this section, let $n$ be even. We call a jump function $\jump{\ell}$ an \emph{extreme jump function} if $\ell = n/2 - 1$. Consequently, this functions is zero except for the optimum (where it has the value $n$) and for bit-string having $n/2$ ones (where it has the value $n/2$).

The information-theoretic argument of~\cite{DrosteJW06} immediately gives a lower bound of $\Omega(n)$ for the unbiased black-box complexities of extreme jump functions. The intuitive argument is that an unrestricted black-box algorithm needs to learn $n$ bits of information, but receives only a constant amount of information per query.

\begin{lemma}\label{lem:extremejumplowerbound}
For all arities $k$, the $k$-ary unbiased black-box complexity of an extreme jump function is $\Omega(n)$.
\end{lemma}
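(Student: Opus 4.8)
The plan is to adapt the information-theoretic lower bound argument of Droste, Jansen, and Wegener~\cite{DrosteJW06} to the present setting, using the fact that $k$-ary unbiased algorithms form a subclass of unrestricted black-box algorithms, together with the invariance structure of the unbiased model. Concretely, recall that for any fixed bit string $z \in \{0,1\}^n$, the function $x \mapsto \jump{n/2-1}(x \oplus z)$ is again an extreme jump function, but now with its unique optimum at $z$ rather than at $\mathbf{1}_n$. So the class of extreme jump functions is closed under taking $\oplus$-shifts, and it contains $2^n$ distinct functions, one for each possible location of the optimum. An unbiased (indeed any) black-box algorithm must, by the time it first queries the optimum, have distinguished the true function from all others in this class.

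First I would set up Yao's principle: it suffices to exhibit a distribution over inputs on which every deterministic black-box algorithm needs $\Omega(n)$ queries in expectation. I take the uniform distribution over the $2^n$ extreme jump functions $f_z(x) := \jump{n/2-1}(x \oplus z)$, $z \in \{0,1\}^n$. Against this distribution, run any deterministic algorithm. The key observation is that each query $x$ returns one of only three values: $n$ (if $x = z$), $n/2$ (if $|x \oplus z|_1 = n/2$), or $0$ (otherwise). Before the optimum is found, every answer is either $0$ or $n/2$. An answer of $0$ to query $x$ rules out exactly those $z$ with $z = x$ or $|z \oplus x|_1 = n/2$; an answer of $n/2$ restricts $z$ to the sphere $\{z : |z \oplus x|_1 = n/2\}$, which still has $\binom{n}{n/2} = 2^{n - \Theta(\log n)}$ candidates. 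In either case, the number of $z$ still consistent with the observations decreases by at most a factor that corresponds to $O(\log n)$ bits of information per query (the answer $n/2$ being the informative one). After $t$ queries, the set of consistent optima therefore still has size at least $2^{n}/(t+1)^{O(1)} \cdot$(something); more carefully, one bounds the number of $z$ eliminated: a query returning $0$ eliminates at most $\binom{n}{n/2}+1 = 2^{n-\Theta(\log n)}$ values of $z$, and before the optimum is hit, at most one query returns $n/2$ per ``surviving sphere.'' So to shrink the candidate set from $2^n$ down to $1$ requires $\Omega(2^n / 2^{n - \Theta(\log n)}) = \Omega(\mathrm{poly}(n))$ — wait, this only gives a polynomial bound, so I need the sharper accounting below.

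The correct accounting, following~\cite{DrosteJW06}, is an entropy/counting argument on the decision tree rather than on eliminated candidates. A deterministic algorithm making at most $t$ queries on every input corresponds to a ternary decision tree of depth $t$ (each internal node branches on the answer $\in \{0, n/2, n\}$). The $2^n$ different functions $f_z$ must all reach distinct leaves that are labeled ``optimum found'' — but in fact they need only reach leaves whose query-answer path is consistent with $z$ being optimal, and for the algorithm to be \emph{correct} it must query $z$ itself on input $f_z$, so the root-to-leaf path for $f_z$ ends with a query equal to $z$. Since a ternary tree of depth $t$ has at most $3^t$ leaves and the $2^n$ inputs map injectively (in terms of ``which query first equals the optimum'') into these leaves, we need $3^t \ge 2^n$, i.e., $t \ge n / \log_2 3 = \Omega(n)$. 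Averaging this worst-case depth bound over the uniform distribution (most $z$ are at depth $\Omega(n)$ since at most $3^{t_0}$ leaves have depth $\le t_0$) yields an $\Omega(n)$ expected lower bound, and since $k$-ary unbiased algorithms are a restriction of unrestricted black-box algorithms, $\UBB{k}(\jump{n/2-1}) \ge \Omega(n)$ for every $k$. The main obstacle is making the injectivity/leaf-counting step fully rigorous: one must argue that on input $f_z$ the deterministic algorithm's query sequence is determined by $z$ alone (true, since all answers depend only on $z$) and that the first query hitting the optimum occurs at a leaf, so that distinct $z$ give paths whose terminal query differs — this is immediate because that terminal query equals $z$. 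With that in place the $3^t \ge 2^n$ counting and the averaging over the uniform distribution are routine.
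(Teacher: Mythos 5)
Your argument is essentially the paper's: the paper simply invokes Theorem~2 of Droste, Jansen, and Wegener~\cite{DrosteJW06} (the three-valued decision-tree counting that you reconstruct, giving $3^t \ge 2^n$ and hence $t = \Omega(n)$ for the class of shifted extreme jump functions) together with the remark at the end of Section~\ref{sec:model}. The one step to tighten is your final inference: passing from the $\Omega(n)$ bound on the \emph{class} $\{f_z\}$ to the unbiased complexity of the \emph{single} function $\jump{n/2-1}$ requires the automorphism-invariance of unbiased operators (an unbiased algorithm behaves identically, up to the shift, on every $f_z$, so its cost on one function equals its worst case over the orbit), not merely the containment of unbiased algorithms among unrestricted ones --- you name this invariance at the outset, but it, rather than the containment, is what must carry the last sentence (and note the optimum of $f_z$ sits at the complement of $z$, a harmless relabeling).
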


\begin{proof} 
  Since an extreme jump function takes only three values, Theorem~2 in~\cite{DrosteJW06}  gives a lower bound of $\Omega(n)$ for the unrestricted black-box complexity of the set of all extreme jump functions. The latter is  a lower bound for the unbiased black-box complexity of a single extreme jump function (cf.~the end of Section~\ref{sec:model}). 
\end{proof}

\subsection{The Upper Bounds on Extreme Jump Functions}
\label{sec:upperextreme}

In the following three subsections, we shall derive several upper bounds for the black-box complexities of extreme jump functions. 

Notice that, for an extreme jump function, we cannot distinguish between having a \onemax value of $n/2+k$ and $n/2-k$ until we have encountered the optimum or its inverse. More precisely, let $x^{(1)}, x^{(2)}, ...$ be a finite sequence of search points not containing the all-ones and all-zeroes string. Define $y^{(i)} = x^{(i)} \oplus \textbf{1}_n$ to be the inverse of $x^{(i)}$ for all $i$. Then both these sequences of search points yield exactly the same fitness values. Hence the only way we could find out on which side of the symmetry point $n/2$ we are would be by querying a search point having no or $n$ ones. However, if we know such a search point, we are done anyway. 

Despite these difficulties, we will develop a linear time ternary unbiased black-box algorithm in the following section. In Section~\ref{sec:twoaryextreme}, we show that restricting ourselves to binary variation operators at most increases the black-box complexity to $O(n \log n)$. For unary operators, the good news derived in the final subsection of this section is that polynomial-time optimization of extreme jump functions is still possible, though the best complexity we find is only $O(n^{9/2})$.

To ease the language, let us denote by $d(x) := |\onemax(x) - n/2|$ a \emph{symmetricized version of $\onemax$} taking into account this difficulty. Also, let us define the \emph{sign} $\sgn(x)$ of $x$ to be $-1$, if $\onemax(x) < n/2$, $\sgn(x) := 0$, if $\onemax(x) = n/2$, and $\sgn(x) = +1$, if $\onemax(x) > n/2$. In other words, $\sgn(x)$ is the sign of $\onemax(x) - n/2$.

\subsection{Ternary Unbiased Optimization of Extreme Jump Functions}

When ternary operators are allowed, we quite easily obtain an unbiased black-box complexity of $O(n)$, which is best possible by Lemma~\ref{lem:extremejumplowerbound}. The reason for this fast optimization progress is that, as explained next, we may test individual bits. Assume that we have a search point $u$ with $\onemax$-value $n/2+1$. If we flip a certain bit in $u$, then from the fitness of this offspring, we learn the value of this bit. If the new fitness is $n/2$, then the $\onemax$-value is $n/2$ as well and the bit originally had the value one. If the new fitness is zero, then the new $\onemax$-value is $n/2+2$ and the original bit was set to one. We thus can learn all bit values and flip those bits which do not have the desired value.

One difficulty to overcome, as sketched in Section~\ref{sec:upperextreme}, is that we will never have a search point where we know that its $\onemax$-value is $n/2+1$. We overcome this by generating a search point with fitness $n/2$ and flipping a single bit. This yields a search point with $\onemax$-value either $n/2+1$ or $n/2-1$. Implementing the above strategy in a sufficiently symmetric way, we end up with a search point having $\onemax$-value either $n$ or $0$ and in the latter case output its complement. 

\begin{theorem}
For $k \ge 3$, the $k$-ary unbiased black-box complexity of extreme jump functions is $\Theta(n)$.
\end{theorem}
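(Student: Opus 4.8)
The plan is to prove the matching bounds separately. The lower bound $\Omega(n)$ is immediate from Lemma~\ref{lem:extremejumplowerbound}, so the work is entirely in the $O(n)$ upper bound, and by Remark~\ref{REM:HIGHPROB} it suffices to give a ternary unbiased algorithm that succeeds with constant probability in $O(n)$ queries. Since arity $k \ge 3$ only helps, a ternary algorithm suffices for all $k \ge 3$.

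First I would generate, in expected $O(\sqrt n)$ queries, a reference point $m$ with $f(m) = n/2$ by repeated uniform sampling (a uniform string has exactly $n/2$ ones with probability $\Theta(n^{-1/2})$). From $m$, apply \flip{1} to obtain a point $u$ with $\onemax$-value $n/2 \pm 1$; call its sign $\sigma \in \{-1,+1\}$ (unknown to us, but fixed for the run). The core idea, as the text sketches, is that with $u$ in hand we can read off individual bits: to test bit $i$, flip bit $i$ in $u$ and read the fitness of the result. If it is $n/2$, the flipped bit disagreed with $\sigma$ (in the sense of being the ``wrong'' value for moving toward the nearer extreme); if it is $0$, the $\onemax$-value moved to $n/2 \pm 2$ and the bit agreed. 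The catch is that an unbiased algorithm cannot address ``bit $i$'' directly. To get around this, I would mark individual positions using auxiliary search points: produce a chain of points at Hamming distances $1, 2, \dots$ from $m$ by successive one-bit flips that only touch fresh positions (using \flipwheredifferent against previously generated points, exactly as in Algorithm~\ref{alg:kAryJumpEll}), so that consecutive members of the chain differ in exactly one, known position. A ternary \selectBit-type operator can then, given $u$ and two consecutive chain members, produce the offspring of $u$ that differs from $u$ in precisely that one position — this is an unbiased ternary operation. Querying its fitness tells us the value of that bit of $u$ relative to $\sigma$, and a further ternary (deterministic) operator flips that bit in a running accumulator $b$ whenever it is ``wrong.''

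The accounting: building the chain of length $n$ costs $n$ queries; probing each of the $n$ bits costs one query each, another $n$; the final assembly of $b$ by $n$ deterministic merge steps costs $0$ queries (or $O(1)$). So the total is $O(n)$ queries plus the $O(\sqrt n)$ startup, which is $O(n)$. At the end, $b$ has $\onemax$-value $n$ or $0$; in the former case we query it and are done, in the latter we query its \complement (an unbiased unary operator). Because the sign $\sigma$ is the only unresolved symmetry and it is handled uniformly — we always push every bit toward agreement with the ``near'' extreme — the procedure lands on one of the two all-constant strings deterministically once the reference point and $u$ are fixed. Hence the algorithm is in fact deterministic after the $\Theta(\sqrt n)$-query startup, so it succeeds with probability $1$, and certainly with constant probability; Remark~\ref{REM:HIGHPROB} then gives the $O(n)$ bound.

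The main obstacle I anticipate is purely bookkeeping: verifying carefully that every operator used is genuinely $k$-ary unbiased for some $k \le 3$ (the chain-building flip, the \selectBit-style bit-extraction, and the merge), and checking the sign logic so that ``wrong'' bits are flipped consistently regardless of whether $\sigma = +1$ or $\sigma = -1$ — i.e., that applying the same rule in both cases really drives $b$ to a monochromatic string rather than, say, to $m$ itself. None of this requires new ideas beyond those already used for Theorem~\ref{thm:ternaryLongJump}, but the symmetry argument must be spelled out since here, unlike the long-jump case, we genuinely cannot observe which side of $n/2$ we are on.
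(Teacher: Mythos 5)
Your proposal is correct and follows essentially the same route as the paper: generate a reference point of fitness $n/2$ in $O(\sqrt n)$ queries, build a chain of one-bit flips to address individual positions in an unbiased way, probe each bit by a double flip against a fixed baseline bit (distinguishing fitness $0$ from $n/2$), and assemble a monochromatic string whose complement may need to be queried at the end. The only cosmetic difference is that you probe bits of $u$ directly via a ternary operator on $u$ and two consecutive chain members, whereas the paper first materializes the Hamming-neighbors $y^{(i)}$ and combines them with \selectBit; the query accounting and the symmetry argument are the same (note only that in the strict model the deterministic merge steps are also counted as queries, which still gives $O(n)$).
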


\begin{proof}
We show that Algorithm~\ref{alg:ternaryJumpNDivTwo} optimizes any extreme jump function with $O(n)$ black-box queries using only operators of arity at most 3. This algorithm uses the three operators \uniformSample, \selectBit and \flipwhereequal introduced in the proof of Theorem~\ref{thm:ternaryLongJump}, as well as the following unbiased operator.
\begin{itemize}
	\item $\complement(x)$: Given a bit string $x$, \complement flips all bits in $x$. This is a deterministic operator.
\end{itemize}

\begin{algorithm2e}
 \Repeat{$f(x) \neq 0$}{$x \assign \uniformSample()$\;} 
 $z^{(0)} \assign x$\; 
 \For{$i = 1$ \KwTo $n$}{
	$z^{(i)} \assign \flipwhereequal_1(z^{(i-1)},x)$\;
	$y^{(i)} \assign \selectBit(x,z^{(i-1)},z^{(i)})$\;
 }
 $b \assign y^{(1)}$\;
 \For{$i = 2$ \KwTo $n$}{
	$a^{(i)} \assign \selectBit(x,y^{(1)},y^{(i)})$\;
	\If{$f(a^{(i)}) = 0$}{$b \assign \selectBit(b,x,y^{(i)})$\;}
 }
Sample $\complement(b)$\;
\caption{Ternary unbiased black-box algorithm for an extreme jump function $f$.}
\label{alg:ternaryJumpNDivTwo}
\end{algorithm2e} 

Note that a uniformly sampled bit string has exactly $n/2$ ones with probability $\Theta(1/ \sqrt{n})$. Consequently, the expected total number of queries is at most $4n + O(\sqrt n) = \Theta(n)$.

Let us now analyze the correctness of our algorithm. Let $x$ be the initial bit string with fitness different from $0$. This is either the optimal bit string, in which case nothing is left to be done, or a bit string with fitness $n/2$. In the latter case, consider the first for-loop. For each $i \leq n$, $z^{(i)}$ has a Hamming distance of $i$ to $x$; in fact, the sequence $(z^{(i)})_{0 \leq i \leq n}$ is a path in the hypercube flipping each bit exactly once. Thus, for each $i$, $y^{(i)}$ differs from $x$ in exactly one position, and for each position there is exactly one $i$ such that $y^{(i)}$ differs from $x$ in that position. We can thus use the $y^{(i)}$ to address the individual bits, and we will call the bit where $x$ and $y^{(i)}$ differ \emph{the $i$th bit}. 

We use $y^{(1)}$ now as a ``base line'' and check which other bits in $x$ contribute to the \onemax value of $x$ in the same way (both are $0$ or both are $1$) as follows. The bit string $a^{(i)} = \selectBit(x,y^{(1)},y^{(i)})$ is obtained from $x$ by flipping the first and $i$th bit. Thus, the fitness of $a^{(i)}$ is $0$ if and only if the first and the $i$th bit contribute to the \onemax value of $x$ in the same way, otherwise it is $n/2$.

Thus, $b$ is the bit string with either all bits set to $0$ or all bits set to $1$. This means that we are either done after the last loop or after taking the complement of $b$.
\end{proof}

\subsection{Binary Unbiased Optimization of the Extreme Jump Function}\label{sec:twoaryextreme}

In this section, we prove that the unbiased $2$-ary black-box complexity of extreme jump functions is $O(n \log n)$. With $2$-ary operators only, it seems impossible to implement the strategy used in the previous subsection, which relies on being able to copy particular bit values into the best-so-far solution.

To overcome this difficulty, we follow a hill-climbing approach. We first find a search point $m$ with $d$-value $0$ by repeated sampling. We copy this into our ``current-best'' search point $x$ and try to improve $x$ to a new search point $x'$ by flipping a random bit in which $x$ and $m$ are equal  (this needs a $2$-ary operation), hoping to gain a search point with $d$-value equal to $d(x) + 1$. The main difficulty is to estimate the $d$-value of $x'$, which is necessary to decide whether we keep this solution as new current-best or whether we try again.

  Using binary operators, we can exploit the fact that $H(x,m) = d(x)$. For example, we can flip $d(x)-1$ of the $d(x)+1$ bits in which $x'$ and $m$ differ. 
  If this yields an individual with fitness $n/2$, then clearly $x'$ has not the targeted $d$-value of $d(x)+1$. Unfortunately, we detect this shortcoming only when the bit that marks the difference of $x$ and $x'$ is not among the $d(x)-1$ bits flipped. This happens only with probability $2 / (d(x)+1)$. Consequently, this approach may take $\Theta(n)$ iterations to decide between the cases $d(x') = d(x) +1$ and $d(x') = d(x) -1$.

  We can reduce this time to logarithmic using the following trick. 
  Recall that the main reason for the slow decision procedure above is that the probability of not flipping the newly created bit is so small. 
  This is due to the fact that the only way to gain information about $x'$ is by flipping almost all bits so as to possibly reach a fitness of $n/2$. 
  We overcome this difficulty by in parallel keeping a second search point $y$ that has the same $d$-value as $x$, but is ``on the other side'' of $m$. To ease the language in this overview, let us assume that $\onemax(x) > n/2$. Let $k := d(x)$ and $H(m,x) = k$. Then we aim at keeping a $y$ such that $d(y) = k$, $H(m,y) = k$, $H(x,y) = 2k$, and $\onemax(y) = n/2-k$. With this at hand, we can easily evaluate the $d$-value of $x'$. Assume that $x'$ was created by flipping exactly one of the bits in which $x$ and $y$ agree. Let $u$ be created by flipping in $x'$ exactly $k-1$ of the bits in which $x'$ and $y$ differ. 
  If $d(x') = k+1$, then surely $d(u) = 2$, and thus $f(u) = 0$. 
  If $d(x') = k-1$, then with probability $(k+2)/(2k+1) \ge 1/2$ the bit in which $x'$ and $x$ differ is not flipped, leading to $\onemax(u) = n/2$, visible from a fitness equal to $n/2$. Hence, with probability at least $1/2$, we detect the undesired outcome $d(x') = k-1$. Unfortunately, there is no comparably simple certificate for ``$d(x') = k+1$'', so we have to repeat the previous test $2 \log n$ times to be sufficiently sure (in the case no failure is detected) that $d(x') = k+1$. Overall, this leads to an almost linear complexity of $O(n \log n)$.  

To make this idea precise, let us call a pair $(x,y)$ of search points \emph{opposing} if they have opposite signs, that is, if $\sgn(x) \sgn(y) = -1$. We call $(x,y)$ an opposing $k$-pair for some integer $k$ if $x$ and $y$ are opposing, $d(x) = d(y) = k$, and $H(x,y) = 2k$. Clearly, in this case one of $x$ and $y$ has a $\onemax$-value of $n/2-k$, while the other has one of $n/2+k$. To further ease the language, let us call bits having the value one \emph{good}, those having value zero \emph{bad}. Then the definition of an opposing $k$-pair implies that in one of $x$ or $y$ all the bit-positions $x$ and $y$ differ in are good, whereas in the other, they are all bad. The remaining positions contain the same number of good and bad bits. 

There are some additional technicalities to overcome. For example, since we cannot decide the sign of a search point, it is non-trivial to generate a first opposing pair.  Our solution is to generate different $x$ and $y$ in Hamming distance one from $m$ and create offspring of $x$ and $y$ via a mixing crossover $\mix$ that inherits exactly one of the two bits $x$ and $y$ differ in from $x$, the other from $y$. If $x$ and $y$ are opposing, then this offspring with probability one has a fitness of $n/2$. Otherwise, is has a fitness of $n/2$ only with probability $1/2$. Consequently, a polylogarithmic number of such tests with sufficiently high probability distinguishes the two cases.

Before giving the precise algorithms, let us define the operators used. We use the operator $\flipwhereequal_k$ introduced in the proof of Theorem~\ref{thm:ternaryLongJump} as well as the following operators.
\begin{itemize}
	\item $\flipwheredifferent_k(x,y)$: For two search points $x$ and $y$ and an integer $k$, the operator $\flipwheredifferent_k$ generates a search point by randomly flipping $k$ bits in $x$ among those bits where $x$ and $y$ disagree. If $x$ and $y$ disagree in less than $k$ bits, a random bit-string is returned.
	\item $\mix(x,y)$: If $x$ and $y$ disagree in exactly two bits, then a bit-string is returned that inherits exactly one of these bits from $x$ and one from $y$ and that is equal to both $x$ and $y$ in all other bit-positions. If $x$ and $y$ do not disagree in exactly two bits, a random bit-string is returned.
\end{itemize}

We are now in a position to formally state the algorithms. We start with the key routine $\movefirst_k$, which, from an opposing $k$-pair $(x,y)$, computes a Hamming neighbor $x'$ of $x$ with $d(x') = d(x) + 1$. Applying this function to both $(x,y)$ and $(y,x)$, we will obtain an opposing $(k+1)$-pair in the main algorithm (Algorithm~\ref{alg:TwoAryJump}).

\begin{algorithm2e}
status $\assign$ failure\;
\While{\normalfont status $\neq$ success}{
  $x' \assign \flipwhereequal_{1}(x,y)$\;
  status $\assign$ success\;
  $i \assign 0$\;
  \While{\normalfont (status = success) and ($i \le 2 \log n$)}{
    $i \assign i + 1$\;
    $u \assign \flipwheredifferent_{k-1}(x',y)$\;
    \lIf{$f(u) = n/2$}{status $\assign$ failure\;}
  }
}
\Return $x'$\;     
\caption{subroutine $\movefirst_k$, {$k \in [1, \ldots, n/2-1]$}, which takes $x,y \in \{0,1\}^n$ as input and returns an $x' \in \{0,1\}^n$. If $(x,y)$ is an opposing $k$-pair, then the output $x'$ with probability at least $1 - (2 \log n)/n^2$ satisfies $d(x') = k + 1$ and $H(x',y) = 2k + 1$.}  
\end{algorithm2e}

\begin{lemma}\label{lem:movefirst}
  Let $k \in [1..n/2 - 1]$. The function $\movefirst_k$ is binary unbiased. Assume that it is called with an opposing $k$-pair $(x, y)$. Let $X$ be a binomially distributed random variable with success probability $1/2$ and denote by $T$ the random variable counting the number of fitness evaluations in one run of $\movefirst_k$. Then the following holds. 
\begin{enumerate}
\item[(i)] $T$ is dominated by $(1 + 2\log n) X$, also if we condition on the output satisfying $d(x') = k+1$. 
\item[(ii)] With probability at least $1 - (2\log n)/n^2$, the output $x' \in \{0,1\}^n$ satisfies $d(x') = k+1$, $H(x,x')=1$, and $H(x',y) = 2k + 1$.
\end{enumerate}
\end{lemma}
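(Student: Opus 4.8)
The plan is to reduce everything to one iteration of the outer \texttt{while}-loop and then combine a per-iteration analysis with a short geometric-series argument. The unbiasedness of $\movefirst_k$ I would dispatch first, as it is immediate: the subroutine only applies the binary unbiased operators $\flipwhereequal_1$ and $\flipwheredifferent_{k-1}$ to search points it has already produced ($x$, $y$, and the current $x'$), and its only control decisions are based on the observed fitness value $f(u)$; hence, by the usual composition argument, it is a binary unbiased subroutine.

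For the core, I would first fix the bit structure forced by the hypothesis. By the symmetry of the jump function and of all operators involved under simultaneously complementing both inputs, I may assume $\onemax(x)=n/2+k$ and $\onemax(y)=n/2-k$. Since $(x,y)$ is an opposing $k$-pair, the $2k$ positions in which $x$ and $y$ differ are all $1$ in $x$ and $0$ in $y$, while among the $n-2k$ positions in which $x$ and $y$ agree exactly $n/2-k$ carry a $1$ and $n/2-k$ carry a $0$. Hence $x':=\flipwhereequal_1(x,y)$ flips one \emph{agreeing} position: with probability exactly $1/2$ it flips a $0$, giving $\onemax(x')=n/2+k+1$ and $d(x')=k+1$, and with probability $1/2$ it flips a $1$, giving $d(x')=k-1$; in both cases $H(x,x')=1$ and, since the flipped position now also differs from $y$, $H(x',y)=2k+1$. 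This already yields the two Hamming-distance claims of~(ii). Now I analyze a single test $u:=\flipwheredifferent_{k-1}(x',y)$, which flips $k-1$ of the $2k+1$ positions in which $x'$ and $y$ differ. If $d(x')=k+1$, all those $2k+1$ positions are $1$ in $x'$, so $\onemax(u)=n/2+2$, $f(u)=0$ deterministically, no test ever sets \texttt{status} to \texttt{failure}, and $x'$ is returned. If $d(x')=k-1$, then among the $2k+1$ differing positions exactly one (the bit just flipped by $\flipwhereequal_1$) is $0$ in $x'$ and the other $2k$ are $1$; counting subsets shows that with probability $(k+2)/(2k+1)\ge 1/2$ this special bit is not among the $k-1$ flipped, in which case $\onemax(u)=n/2$ and the test detects the failure, while otherwise $\onemax(u)=n/2-2$ and it does not. (For $k=1$ the test flips no bits, so $u=x'$ and the failure is detected with probability one, consistent with the stated bound.) The $1+2\log n$ tests of one inner loop use independent draws of $\flipwheredifferent_{k-1}$, so conditioned on $d(x')=k-1$ the probability that none of them detects the failure is at most $(1/2)^{1+2\log n}\le n^{-2}$.

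To finish~(ii), I would note that the outer iterations are i.i.d.\ and fall into three types: \emph{good} ($d(x')=k+1$, terminating with a correct output) with probability $1/2$; \emph{bad-undetected} ($d(x')=k-1$ but no test fires, terminating with a wrong output) with probability at most $\tfrac12 n^{-2}$; and \emph{retry} otherwise. Summing the geometric series, the run ever terminates with a wrong output with probability at most $(\tfrac12 n^{-2})/(\tfrac12)=n^{-2}\le (2\log n)/n^2$, which together with the (always valid) Hamming-distance claims and the fact that a correct termination has $d(x')=k+1$ gives~(ii). For~(i), each outer iteration performs at most $1+2\log n$ fitness evaluations (one per inner iteration, of which there are at most $1+2\log n$), and the number $N$ of outer iterations is stochastically dominated by a geometric random variable $X$ with success probability $1/2$, since each iteration is non-retry with probability at least $1/2$; hence $T\le (1+2\log n)N$ is dominated by $(1+2\log n)X$. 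The delicate point here is the conditioning: ``the output satisfies $d(x')=k+1$'' pins down only the \emph{type} of the first non-retry iteration, which is independent of the number of retries preceding it, so conditioning leaves the distribution of $N$ unchanged and the domination of $T$ persists.

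I expect the \emph{main obstacle} to be essentially bookkeeping — getting the good/bad-bit accounting, the $\onemax$-value of $u$ in each case, and the $(k+2)/(2k+1)\ge 1/2$ detection probability exactly right — together with the slightly subtler observation in~(i) that conditioning on a correct output does not inflate the outer-iteration count. Once these are in place, the remaining estimates (a geometric tail bound and a geometric series) are routine.
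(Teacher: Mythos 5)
Your proof is correct and follows essentially the same route as the paper's: the same case analysis of the single test $u=\flipwheredifferent_{k-1}(x',y)$ with detection probability $(k+2)/(2k+1)\ge 1/2$, the same $2\log n$-fold repetition, and the same domination of the outer-loop count by a success-probability-$1/2$ waiting time (your reading of $X$ as geometric is what the paper intends), with your geometric-series bound for (ii) even giving the slightly stronger $1-n^{-2}$. The only blemish is cosmetic: under your convention $\onemax(x)=n/2+k$, flipping the special bit in the undetected case gives $\onemax(u)=n/2+2$ rather than $n/2-2$, which does not affect the argument since $f(u)=0$ either way.
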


\begin{proof}
  Since the operators $\flipwhereequal$ and $\flipwheredifferent$ are $2$-ary and unbiased, $\movefirst_k$ is a $2$-ary unbiased algorithm. Also note that any $x'$ generated in line~3 necessarily has $H(x,x') =1$ and $H(x',y) = 2k+1$, the latter because $H(x,y) = 2k$ and $x'$ is obtained from $x$ by flipping a bit $x$ and $y$ agree in. Hence the main challenge is to distinguish between the two cases that $d(x') = k+1$ and $d(x') = k-1$. 
  
  We first argue that the inner while-loop terminates surely with ``status = success'', if $d(x') = k+1$, and with probability $1 - 1/n^2$ terminates with ``status = failure'' if $d(x') = k-1$. Assume first that $d(x') = k+1$. Since $x'$ and $y$ then are opposing and $d(y) = k$, the $2k+1$ bits $x'$ and $y$ differ in are all good bits in $x'$ (and thus bad bits in $y$) or vice versa. Consequently, flipping any $k-1$ of them in $x'$ surely reduces its $d$-value to $2$, leading to $f(u) = 0$. Hence ``status = success'' is never changed to ``status = failure''. Assume now that $d(x') = k-1$. Then $x'$ and $y$ differ in $2k$ good bits and one bad bit, or in $2k$ bad bits and one good bit. Therefore, with probability $(k+2)/(2k+1) \ge 1/2$, all $k-1$ bits flipped in the creation of $u$ are of the same type. In this case, the $\onemax$-values of $u$ and $x'$ differ by $k-1$, implying $f(u) = n/2$. Thus, a single iteration of the while-loop sets the status variable to failure with probability at least $1/2$, as desired when $d(x') = k-1$. The probability that this does not happen in one of the up to $2 \log n$ iterations is at most $1/n^2$. 
  
  We now analyze the $\flipwhereequal$ statement in line 3. If $(x,y)$ is an opposing $k$-pair, then exactly half of the bits $x$ and $y$ agree in are good  and the other half are bad. In either case, flipping one of the agreeing bits in $x$ has a chance of exactly $1/2$ of increasing the $d$-value (and the same chance of $1/2$ of decreasing it). By what we proved about the inner while-loop above, we see that the random variable describing the number of iterations of the outer while-loop is dominated by a binomial random variable with success probability $1/2$ (it would be equal to such a binomial random variable if the inner while-loop would not with small probability accept a failure as success). Since each execution of the inner while-loop leads to at most $2 \log n$ fitness evaluations, this proves the first part of (i). If we condition on the output satisfying $d(x') = k+1$, then indeed the inner while-loop does not misclassify an $x'$. Consequently, the number of iterations in the outer while-loop has distribution $X$, and again $T$ is dominated by $(1 + 2\log n)X$.
  
  For the failure probability estimate, we use the following blunt estimate. Since the $\flipwhereequal$ statement with probability $1/2$ produces an $x'$ that we view as success (and that will become the output of the function finally), with probability at least $1 - 1/n^2$ there will be at most $2 \log n$ iterations of the outer while-loop each generating a failure-$x'$. Each of them has a chance of at most $1/n^2$ of being misclassified as success. Consequently, the probability that $\movefirst_k$ returns a failure-$x'$ is at most $1 - (2\log n)/n^2$.
\end{proof}

\begin{algorithm2e}[t]
 \Repeat{$f(m) = n/2$}{$m \assign \uniformSample()$\;} 
 status $\assign$ failure\;
 \While{\normalfont status = failure}{
  $x \assign \flipwhereequal_1(m,m)$\;
  $y \assign \flipwhereequal_1(m,x)$\;
  status $\assign$ success\;
  $i \assign 0$\;
  \While{\normalfont (status $\neq$ failure) and ($i \le \sqrt n$)}{
    $i \assign i+1$\;
    $u \assign \mix(x,y)$\;
    if $f(u) \neq n/2$ then status $\assign$ failure\;
    }
  }
 \For{$k = 1$ \KwTo $n/2-1$}{
  $x' \assign \movefirst_k(x,y)$\;
  $y' \assign \movefirst_k(y,x)$\;
  $(x,y) \assign (x',y')$\;
  }
  \eIf{$f(x) = n$}{\Return $x$\;}{\Return $y$\;} 
\caption{A $2$-ary unbiased black-box algorithm that for any extreme jump function $f$ with high probability finds the optimum in $O(n \log n)$ iterations.}
\label{alg:TwoAryJump}
\end{algorithm2e} 

\begin{theorem}\label{thm:TwoAryJump}
  Algorithm~\ref{alg:TwoAryJump} is a $2$-ary unbiased black-box algorithm for extreme jump functions. It finds the optimum of an unknown extreme jump function with probability $1 - o(1)$ within $O(n \log n)$ fitness evaluations.
\end{theorem}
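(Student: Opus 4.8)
The plan is to establish three things about Algorithm~\ref{alg:TwoAryJump}: (i)~that it only ever samples via unbiased operators of arity at most two; (ii)~that with probability $1-o(1)$ the pair $(x,y)$ passes through a chain of opposing $k$-pairs for $k=1,2,\dots,n/2$, at the end of which $\mathbf{1}_n$ has been sampled and hence queried; and (iii)~that the total number of queries is $O(n\log n)$ with probability $1-o(1)$. Part~(i) is immediate: the only sampling operators used are $\uniformSample$, $\flipwhereequal_1$, $\mix$, and the subroutine $\movefirst_k$ (which is $2$-ary unbiased by Lemma~\ref{lem:movefirst}), and every branch is decided purely from observed fitness values.

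For~(ii) I would first handle the two blocks preceding the \textbf{for}-loop. A uniform sample has Hamming weight $n/2$ with probability $\Theta(n^{-1/2})$, so after $\Theta(\sqrt n)$ expected queries we obtain $m$ with $\onemax(m)=n/2$. Then $x$ and $y$ are generated at Hamming distance one from $m$ and differing in exactly two positions, which forces $d(x)=d(y)=1$ and $H(x,y)=2$; thus $(x,y)$ is an opposing $1$-pair precisely when the two positions flipped in $m$ carried different bit values, an event of probability $\frac{n/2}{n-1}\ge\frac12$. The $\mix$-test separates the cases: $\mix(x,y)$ always has weight $n/2$ (fitness $n/2$) when $(x,y)$ is opposing, but has fitness $0$ with probability $\frac12$ when $x$ and $y$ have the same sign. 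Hence each of the $\approx\sqrt n$ repetitions rejects a same-sign pair with probability $\ge\frac12$, so a same-sign pair survives with probability $\le 2^{-\sqrt n}$ while an opposing pair always survives. Consequently the number of outer iterations is stochastically dominated by a geometric variable with success probability at least $\frac12$ ($O(\sqrt n)$ expected queries for this phase), and the probability of ever leaving this phase with a same-sign pair is $\sum_{t\ge1}\Pr[\ge t\text{ iterations}]\cdot 2^{-\sqrt n}=O(2^{-\sqrt n})=o(1)$.

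Next I would run an induction over the \textbf{for}-loop with the invariant ``after iteration $k$, $(x,y)$ is an opposing $k$-pair''. Assuming $(x,y)$ is an opposing $k$-pair, say with $\sgn(x)=+1$ (the other case is symmetric), Lemma~\ref{lem:movefirst}(ii) gives, with failure probability at most $(2\log n)/n^2$, that $x':=\movefirst_k(x,y)$ has $d(x')=k+1$, $H(x,x')=1$, $H(x',y)=2k+1$, and symmetrically that $y':=\movefirst_k(y,x)$ satisfies $d(y')=k+1$, $H(y,y')=1$, $H(y',x)=2k+1$ (note $(y,x)$ is also an opposing $k$-pair). It remains to check that $(x',y')$ is an opposing $(k+1)$-pair. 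Since $\onemax(x)=n/2+k>n/2$, the single bit in which $x'$ differs from $x$ must be a $0$ of $x$ (otherwise $d$ would decrease); and $H(x',y)=H(x,y)+1$ forces this bit to lie among the positions where $x$ and $y$ agree, so it is a $0$ of $y$ as well; hence $\onemax(x')=n/2+k+1$ and $\sgn(x')=+1$. Symmetrically (now the flipped bit is a common $1$ of $x$ and $y$) one gets $\onemax(y')=n/2-k-1$ and $\sgn(y')=-1$, so $x'$ and $y'$ are opposing with $d(x')=d(y')=k+1$. Finally, the bit flipped in $x$ (a common $0$) and the bit flipped in $y$ (a common $1$) are distinct positions, and each turns into a fresh disagreement between $x'$ and $y'$ while all former agreements/disagreements of $(x,y)$ are unaffected; thus $H(x',y')=H(x,y)+2=2(k+1)$. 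Summing the failure probabilities over $k=1,\dots,n/2-1$ gives $O(\log n/n)=o(1)$. When the loop terminates we have an opposing $(n/2)$-pair, so $\{\onemax(x),\onemax(y)\}=\{0,n\}$; the weight-$n$ member equals $\mathbf{1}_n$ and was sampled (hence queried) inside the final $\movefirst_{n/2-1}$ call, and the closing \textbf{if} returns it.

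For~(iii), the pre-loop phases cost $O(\sqrt n)$ queries in expectation and $o(n)$ queries with probability $1-2^{-\Omega(\sqrt n)}$ (geometric tails). On the event that every $(x,y)$ remains an opposing $k$-pair, Lemma~\ref{lem:movefirst}(i) bounds the number of evaluations in each of the $O(n)$ calls to $\movefirst_k$ by $(1+2\log n)$ times a light-tailed (geometric-type) variable, and — since distinct calls use independent randomness — these bounds are mutually independent even after conditioning on the per-call success events; each has expectation $O(\log n)$, so the total has expectation $O(n\log n)$ and, being a sum of $O(n)$ independent such variables, is $O(n\log n)$ with probability $1-e^{-\Omega(n)}$. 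Combining with the $o(1)$ failure probabilities from the previous paragraphs yields the claim. I expect the real work to lie in the two middle paragraphs rather than in any single estimate: bootstrapping the very first opposing pair despite being unable to read off a sign, and verifying that one application of $\movefirst$ to each of $(x,y)$ and $(y,x)$ genuinely advances an opposing $k$-pair to an opposing $(k+1)$-pair (the sign-preservation and the ``$+2$'' in the Hamming distance). Everything downstream — unbiasedness, the query counts, and the concentration — is routine once Lemma~\ref{lem:movefirst} is in hand.
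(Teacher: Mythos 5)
Your proof is correct and follows essentially the same route as the paper's: find $m$ of fitness $n/2$, bootstrap an opposing $1$-pair via the $\mix$-test with $\Theta(\sqrt n)$ repetitions, advance through opposing $k$-pairs by applying $\movefirst_k$ to both $(x,y)$ and $(y,x)$, and bound the total query count by a sum of $O(n)$ independent dominated variables of expectation $O(\log n)$ each. The only notable difference is that you spell out explicitly why one round of the for-loop turns an opposing $k$-pair into an opposing $(k+1)$-pair (sign preservation and $H(x',y')=2(k+1)$ via the two distinct flipped positions), a verification the paper leaves largely implicit.
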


\begin{proof}
  As in the previous section, line~1 of Algorithm~\ref{alg:TwoAryJump} found a search point $m$ having fitness $n/2$ after $O(n)$ fitness evaluations with probability $1 - \exp(\Omega(\sqrt n))$.
  
  Lines~2 to~11 are devoted to generating an opposing $1$-pair $(x,y)$. Since $m$ has exactly $n/2$ good and bad bits, $x$ has a $\onemax$-values of $n/2 - 1$ or $n/2+1$, each with probability exactly $1/2$. Independent of this outcome, $y$ in line~5 has a chance of $(n/2) / (n-1) > 1/2$ of having the opposite value, which means that $(x,y)$ is an opposing $1$-pair. 
  
  Observe that if $x$ and $y$ are opposing, then $\mix(x,y)$ with probability one has a fitness of $n/2$, simply because both possible outcomes of $\mix(x,y)$ have this fitness. If $x$ and $y$ are not opposing, then  $x$ and $y$ have both one good bit more than $m$ or both have one more bad bit. Consequently, the one outcome of $\mix(x,y)$ that is different from $m$ has a $d$-value of~$2$, visible from a fitness different from $n/2$. We conclude that the inner while-loop using at most $\sqrt n$ fitness evaluations surely ends with ``status = success'', if $(x,y)$ is an opposing $1$-pair, and with probability $1 - 2^{-n}$ ends with ``status = failure'', if not. A coarse estimate thus shows that after $O(n)$ fitness evaluations spent in lines~2 to~11, which involves at least $\Omega(\sqrt n)$ executions of the outer while-loop, with probability $1 - \exp(\Omega(\sqrt n))$ we exit the outer while-loop with an opposing $1$-pair $(x,y)$.
  
  We now argue that if $(x,y)$ is an opposing $k$-pair right before line~13 is executed, then with probability at least $1 - (4 \log n)/n^2$, the new $(x,y)$ created in line~15 is an opposing $(k+1)$-pair. This follows from twice applying Lemma~\ref{lem:movefirst} and noting that the condition ``$H(x,x') = 1$'' in the statement of Lemma~\ref{lem:movefirst} implies that $H(x,y) = 2(k+1)$ after executing line~15. Consequently, a simple induction shows that with probability at least $1 - O(\log(n) / n)$, the pair $(x,y)$ when leaving the for-loop is an opposing $1$-pair, and thus, exactly one of $x$ and $y$ is the optimum.
  
  For the run time statement, let us again assume that all opposing $k$-pairs are indeed created as in the previous paragraphs. We already argued that up to before line~12, with probability $1 - \exp(\Omega(\sqrt n))$, we spent at most $O(n)$ fitness evaluations. We then perform $n-2$ calls to $\movefirst_k$ functions, each leading to a number of fitness evaluations which is, independent from what happened in the other calls, dominated by $(1+2\log n)$ times a binomial random variable with success probability $1/2$. By Lemma~1.20 of~\cite{Doerr11bookchapter}, we may assume in the following run time estimate that the number of fitness evaluations in each such call is indeed $(1 + 2\log n)$ times such an (independent) binomial random variable. By Theorem~1.14 of~\cite{Doerr11bookchapter}, the probability that a sum of $\Theta(n)$ such binomial random variables deviates from its $\Theta(n)$ expectation by more than a factor of two is $\exp(-\Theta(n))$. Consequently, with high probability the total number of fitness evaluations is $O(n \log n)$.
\end{proof}

Note that by using different constants in the $\movefirst_k$ algorithm we can easily improve the success probability from the stated $1 - o(1)$ to any $1 - O(n^{-c})$, $c$ a constant. We do not care that much about this, because black-box complexity usually deals with expected number of fitness evaluations. To obtain now such a result, our claim in Theorem~\ref{thm:TwoAryJump} is sufficient.

\begin{corollary}
  There is a $2$-ary unbiased black-box algorithm solving the ``extreme jump functions'' problem in an expected number of $O(n \log n)$ queries.
\end{corollary}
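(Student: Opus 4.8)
The plan is to obtain the corollary as an immediate consequence of Theorem~\ref{thm:TwoAryJump} together with the restart argument packaged in Remark~\ref{REM:HIGHPROB}. Theorem~\ref{thm:TwoAryJump} supplies a $2$-ary unbiased black-box algorithm, namely Algorithm~\ref{alg:TwoAryJump}, that with probability $1 - o(1)$ queries the optimum of the extreme jump function within $O(n \log n)$ fitness evaluations; moreover, inspection of its proof shows that the two relevant good events---``the optimum is queried'' and ``at most $O(n \log n)$ queries are used''---each fail only with probability $o(1)$ (the first through the induction over the $\movefirst_k$ calls, the second through the Chernoff bound on the sum of their binomial costs), so by a union bound they hold simultaneously with probability $1 - o(1)$.

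First I would fix constants $C$ and $n_0$ such that for all $n \ge n_0$ the event ``Algorithm~\ref{alg:TwoAryJump} queries a point of fitness $n$ within its first $C n \log n$ queries'' has probability at least $\tfrac12$; this is legitimate by the previous paragraph. Then I define a wrapper algorithm $A'$ that runs Algorithm~\ref{alg:TwoAryJump} while counting its queries, halts as soon as a point of fitness $n$ has been queried, and otherwise, after $C n \log n$ queries, restarts Algorithm~\ref{alg:TwoAryJump} with fresh randomness. Each phase of $A'$ is an independent attempt that uses at most $C n \log n = O(n \log n)$ queries and succeeds with probability at least $\tfrac12$, so the number of phases has expectation at most $2$; since the per-phase query budget is deterministic, the expected total number of queries made by $A'$ is at most $2 C n \log n = O(n \log n)$. (This is exactly the content of Remark~\ref{REM:HIGHPROB}, which could be cited directly in place of this paragraph.) Finally, $A'$ is still a $2$-ary unbiased black-box algorithm: the fresh uniform sample starting each phase is a $0$-ary unbiased operation, all other operations are those of Algorithm~\ref{alg:TwoAryJump}, and the stopping/restart decisions depend only on the fitness values observed so far and on the query count, not on the representation of the search points. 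Hence $\UBB{2}$ of the extreme jump function is $O(n \log n)$.

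There is no genuinely hard step; the single point worth a line of care is that the failure mode of Algorithm~\ref{alg:TwoAryJump} is observable to the algorithm itself---namely, not having queried a point of fitness $n$ within the budget---so that capping the budget and restarting is a legitimate $2$-ary unbiased operation. Everything else is the routine bookkeeping that Remark~\ref{REM:HIGHPROB} already encapsulates, so in the final write-up this corollary warrants only a one-line proof invoking Theorem~\ref{thm:TwoAryJump} and Remark~\ref{REM:HIGHPROB}.
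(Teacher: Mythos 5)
Your proof is correct and is essentially the paper's own argument: the paper likewise runs a budgeted version of Algorithm~\ref{alg:TwoAryJump} and invokes Theorem~\ref{thm:TwoAryJump} together with Remark~\ref{rem:highprobability} to convert the high-probability $O(n\log n)$ guarantee into an expected-query bound via restarts. Your write-up merely makes explicit the bookkeeping (fixed budget $Cn\log n$, observability of failure, independence of phases) that the paper leaves implicit in the remark.
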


\begin{proof}
	By Theorem~\ref{thm:TwoAryJump}, a single run of this budgeted version of Algorithm~\ref{alg:TwoAryJump} with probability $1 - o(1)$ finds the optimum. Thus, with Remark~\ref{rem:highprobability}, we can conclude that we have an overall expected number of black-box queries.
\end{proof}

\subsection{Unary Unbiased Optimization of Extreme Jump Functions}
\label{sec:1extreme}

With the next theorem we show that, surprisingly, even the \emph{unary} unbiased black-box complexity of extreme jump functions is still polynomially bounded. Note that now we cannot learn the $\onemax$-value of a search point $x$ by repeatedly flipping a certain number of bits and observing the average objective value. Since $n/2$ is the only non-trivial objective value, any such average will necessarily be $n/2$ (except in the unlikely event that we encountered the optimum). The solution is to flip, depending on parity reasonings, exactly $n/2 -1$ or $n/2$ bits in $x$ and note that the probability $p_a$ of receiving a search point with (visible) fitness $n/2$ depends on the distance $a:=a(x) = \min\{|x|_1,|x|_0:=n-|x|_1\}$ of $x$ to optimum or its opposite 
We roughly have $p_a \in \Theta(a^{-1/2})$ and $p_{a-2} - p_{a} \in \Theta(a^{-3/2} n^{-1} (n - 2a))$. These small numbers lead to the fact that for $a \in n/2 - \Theta(1)$, we will need $\Theta(n^{9/2})$ samples to estimate the $a$-value of $x$ with constant probability. Since estimating becomes easier for smaller $a$, we are able to construct a unary unbiased black-box algorithm finding the optimum of an extreme jump function in an expected total of $O(n^{9/2})$ fitness evaluations.

\begin{theorem}\label{thm:unaryExtremeJump}
Let $n$ be even and $\ell = n/2 -1$. Then the unary unbiased black-box complexity of $\jump{\ell}$ is $O(n^{9/2})$.
\end{theorem}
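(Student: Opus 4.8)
The plan is to reuse the architecture of Section~\ref{sec:unaryLongJump2}: design a subroutine that estimates the relevant distance of a search point to the target, plug it into a hill-climber of the shape of Algorithm~\ref{alg:unaryjump}, and bound the running time by the random-walk argument of Lemma~\ref{lem:usingApproximateOnemax}. Two new features must be accommodated. First, since the only non-trivial visible fitness value is $n/2$, averaging offspring fitnesses reveals nothing. Instead I would read off information from the \emph{probability} that a random offspring becomes visible at all. Second, an extreme jump function cannot tell $x$ from $x\oplus\mathbf{1}_n$, so one can only hope to learn $b(x):=\min\{|x|_1,n-|x|_1\}$, the Hamming distance of $x$ to $\{\mathbf{0}_n,\mathbf{1}_n\}$; accordingly the hill-climber drives $b$ down to $0$ and, upon reaching a search point of estimated distance $0$, returns $x$ if $f(x)=n$ and $\complement(x)$ otherwise (the latter being $\mathbf{1}_n$, since the value $0$ is only reached at a genuine $\mathbf{0}_n$ with high probability).

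The estimator proceeds as follows. Given $x$ and a rough prior estimate $\alpha\in[b/2,3b/2]$ of $b:=b(x)$, it flips exactly $n/2$ or $n/2-1$ bits of $x$, the parity of this count being chosen (after a cheap preliminary batch of $O(\sqrt n\log n)$ samples) so that an offspring with exactly $n/2$ ones --- hence with the visible fitness $n/2$ --- is possible. The probability $p_b$ that a single such offspring is visible is then an explicit ratio of central binomial coefficients, and a Stirling estimate gives $p_b=\Theta(b^{-1/2})$ together with $|p_b-p_{b\pm 2}|=\Theta\bigl(b^{-3/2}n^{-1}(n-2b)\bigr)$, both positive for $1\le b<n/2$. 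Hence, drawing $N$ independent offspring and inverting the explicit map $b\mapsto p_b$ on the empirical hit-frequency $\hat p$ (rounding to the nearest admissible-parity integer) returns $b$ exactly whenever $\hat p$ falls within half the gap $|p_b-p_{b\pm2}|$ of $p_b$, which by a multiplicative Chernoff bound fails with probability $\exp\bigl(-\Omega\bigl(N|p_b-p_{b\pm2}|^2/p_b\bigr)\bigr)$. Since $p_b/|p_b-p_{b\pm2}|^2=\Theta\bigl(n^2b^{5/2}/(n-2b)^2\bigr)$, choosing $N$ adaptively from $\alpha$ --- of order $\alpha^{5/2}$ when $\alpha$ is well below $n/2$ and of order $n^{9/2}/(n/2-\alpha)^2$ when $\alpha$ is close to $n/2$ --- drives this probability below $b/(16n)$, and at slightly coarser accuracy below $n^{-3}$, so the subroutine meets the requirements of a $p$-estimator in the sense of Definition~\ref{def:estimator}. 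A batch in which no offspring is ever visible is simply restarted, contributing only an $O(1/\mathrm{poly}(n))$ overhead, exactly as in Corollary~\ref{cor:estimate}; an offspring that happens to have $n$ ones is of course a welcome accident that ends the optimization.

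With this estimator substituted into Algorithm~\ref{alg:unaryjump} (with $b$ playing the role of $a$), $b(x)$ performs a biased random walk on $[0..n/2]$ of exactly the type analyzed in Lemma~\ref{lem:usingApproximateOnemax}: a one-bit flip changes $b$ by $\pm 1$, the improving neighbour is hit with probability $b/n$, and the $\le b/(16n)$ chance of a misclassified estimate only perturbs this slightly, so the net drift remains toward $0$. The gambler's-ruin computation of Lemma~\ref{lem:usingApproximateOnemax} therefore still shows an expected $O(1)$ visits to each state. Summing the per-visit cost $N_b$ over all states, the contribution of $b\le(1/2-\Omega(1))n$ is $\sum_b O(b^{5/2})=O(n^{7/2})$, while the contribution of $b$ within a constant fraction of $n/2$, writing $c:=n/2-b$, is $\sum_{c\ge1}O(n^{9/2}/c^2)=O(n^{9/2})$; the initial search for a point of fitness $\neq 0$ costs only $O(\sqrt n)$ in expectation since a uniform sample has $n/2$ ones with probability $\Theta(n^{-1/2})$. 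A Markov bound together with Remark~\ref{rem:highprobability} then converts high-probability success within an $O(n^{9/2})$ budget into the claimed bound on the expected unary unbiased black-box complexity.

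The step I expect to be the real work is the estimation analysis in the regime $b=n/2-\Theta(1)$. There $p_b$ is near its discrete maximum, so the separation $|p_b-p_{b\pm2}|$ collapses to $\Theta(n^{-5/2})$ while $p_b$ is only $\Theta(n^{-1/2})$; one must verify that $\Theta(n^{9/2})$ samples genuinely suffice there, which needs a version of the Stirling asymptotics for the central-binomial ratio and its second finite difference that is uniform all the way up to $b=n/2-O(1)$, together with a clean accounting of the (by no means tiny: of order $1$ for $b=\Theta(n)$) tolerated failure probabilities accumulated along the random walk, so that the telescoping sum $\sum_c c^{-2}$ keeps the total at $O(n^{9/2})$ without spurious polynomial or logarithmic factors.
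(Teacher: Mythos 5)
Your core estimator is exactly the paper's: flip exactly $n/2$ or $n/2-1$ bits according to the parity of $b(x)=\min\{|x|_1,|x|_0\}$, read off $b$ from the frequency of visible offspring using $p_b=\Theta(b^{-1/2})$ and $|p_b-p_{b\pm2}|=\Theta(b^{-3/2}n^{-1}(n-2b))$, and spend $\Theta(b^{5/2}n^2(n-2b)^{-O(1)})$ samples per estimate; your summation $\sum_{c\ge 1}n^{9/2}/c^2$ matches the paper's $O(n^{9/2})\sum_a(n-2a)^{-3/2}$ in spirit and in outcome, and your identification of $b=n/2-\Theta(1)$ as the critical regime is exactly where the paper's constant $K$ in $N_a=Ka^{5/2}n^2(n-2a)^{-3/2}$ does its work. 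Where you diverge is the control flow: the paper does \emph{not} route this through Definition~\ref{def:estimator} and Lemma~\ref{lem:usingApproximateOnemax}. Instead, Algorithm~\ref{alg:UnaryExtreme} runs a deterministic countdown $a=n/2-1,\dots,1$, so the believed $a$-value is always exact (no prior uncertainty), the estimator only ever has to decide between $a(y)=a-1$ and $a(y)=a+1$, and correctness is established by showing the expected number of \emph{first} failures, $\sum_a (n/a)\exp(-k(n-2a)^{1/2})$, is $O(1)$ --- i.e., with constant probability no estimation error ever occurs, rather than tolerating errors and arguing drift.

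This difference matters, and it is where your write-up has a genuine (though repairable) gap. The $p$-estimator framework guarantees only $\hat b\in[b/2,3b/2]$ as its fallback, and a multiplicative factor on $b$ tells you essentially nothing about $n-2b$ when $b$ is near $n/2$: if the true value is $b=n/2-1$ but the prior is $\alpha=n/3$ (permitted by $[b/2,3b/2]$), your adaptive budget $N(\alpha)=\Theta(\alpha^{5/2}n^2(n-2\alpha)^{-2})$ is $\Theta(n^{5/2})$ where $\Theta(n^{9/2})$ is needed, and the estimator cannot then meet the $b/(16n)$ accuracy you assume in the random-walk step. So Definition~\ref{def:estimator} cannot be transplanted verbatim; you need either an additive fallback guarantee ($|\hat b-b|=O(1)$ with probability $1-O(n^{-3})$, which your Chernoff tail does support since misestimating by $2j$ costs $\exp(-\Omega(j^2\cdot(n-2b)^{1/2}))$ or better) or the paper's exact-tracking countdown. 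Relatedly, a literal application of Lemma~\ref{lem:usingApproximateOnemax} would only give $O(p(n)n^2)=O(n^{11/2})$ here, since the estimator's cost is not of the form $O(p(n)\alpha\log(2+n/\alpha))$ with $p(n)=n^{5/2}$; your hand-done sum over states is the correct fix, but then you are no longer ``plugging into'' the lemma so much as re-proving its random-walk part and replacing its cost accounting. With one of these repairs made explicit, the argument is sound.
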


We start by analyzing the probability of receiving a bit-string $y$ with fitness $n/2$ when flipping a certain number $\lambda$ of bits in a fixed bit-string $x$. Note that when $\lambda$ has a parity different from the one of $\onemax(x)$, our offspring $y$ will never have a fitness of $n/2$. Consequently, we need to treat the cases of $\onemax(x)$ even or odd separately. Note that since $n$ is even, the parities of $\onemax(x)$ and $a(x)$ are equal.

In the following, we will frequently need the following estimate due to \cite{Robbins55}.
\begin{equation}\label{eq:centralBinomialCoefficients}
\forall n \ge 1: \frac{4^n}{\sqrt{2\pi n}} \leq \binom{2n}{n} \leq \frac{4^n}{\sqrt{\pi n}}.
\end{equation}

Let us first \textbf{assume that $a := a(x)$ is even}. Let $y$ be obtained from $x$ by flipping exactly $n/2$ random bits. Note that $\Pr(f(y) - n/2)$ depends on $a$, but not more specifically on $x$ or $\onemax(x)$. We may thus write $p_a := \Pr(f(y) = n/2)$. Note that $f(y) = n/2$ holds if and only if we flipped half the $1$-bits and half the $0$-bits in $x$. Hence, 
\begin{equation}\label{eq:paeven}
p_a = \frac{\binom{n-a}{(n-a)/2}\binom{a}{a/2}}{\binom{n}{n/2}},
\end{equation}
which for $a \ge 1$ implies
\begin{equation}\label{eq:paevenest}
p_a \in \Theta\left(\frac{1}{\sqrt{a}}\right)
\end{equation}
by equation~(\ref{eq:centralBinomialCoefficients}). 

Let $a \ge 2$. We are interested in the difference $p_{a-2} - p_{a}$ in order to estimate the number of samples that are necessary to discriminate between the cases $a(x) = a$ and $a(x) = a-2$. By equation~(\ref{eq:paeven}) we have
\begin{align}\label{eqn:diff}
& p_{a-2} - p_{a} \nonumber \\
 & = \frac{\binom{n-a+2}{(n-a+2)/2}\binom{a-2}{(a-2)/2}}{\binom{n}{n/2}} - \frac{\binom{n-a}{(n-a)/2}\binom{a}{a/2}}{\binom{n}{n/2}}\nonumber\\
& = \frac{\binom{n-a+2}{(n-a+2)/2} \binom{a}{a/2}}{\binom{n}{n/2}}\left[\frac{\binom{a-2}{(a-2)/2}}{\binom{a}{a/2}} - \frac{\binom{n-a-2}{(n-a-2)/2}}{\binom{n-a+2}{(n-a+2)/2}}\right]\nonumber\\
& = \frac{\binom{n-a+2}{(n-a+2)/2} \binom{a}{a/2}}{\binom{n}{n/2}}\left[\frac{(a-2)!}{((a-2)/2)!^2}\frac{(a/2)!^2}{a!} - \frac{(n-a)!}{((n-a)/2)!^2}\frac{((n-a+2)/2)!^2}{(n-a+2)!}\right]\nonumber\\
& = \frac{\binom{n-a+2}{(n-a+2)/2} \binom{a}{a/2}}{\binom{n}{n/2}}\left[\frac{(a/2)^2}{a(a-1)} - \frac{((n-a+2)/2)^2}{(n-a+2)(n-a+1)}\right]\nonumber\\
& = \frac{\binom{n-a+2}{(n-a+2)/2} \binom{a}{a/2}}{\binom{n}{n/2}} \frac{1}{4} \left[\frac{a}{a-1} - \frac{n-a+2}{n-a+1}\right]\nonumber\\
& = \frac{\binom{n-a+2}{(n-a+2)/2} \binom{a}{a/2}}{\binom{n}{n/2}} \frac{1}{4} \left[\frac{(n-a+1)a - (n-a+2)(a-1)}{(n-a+1)(a-1)}\right]\nonumber\\
& = \frac{\binom{n-a+2}{(n-a+2)/2} \binom{a}{a/2}}{\binom{n}{n/2}} \frac{1}{4} \left[\frac{n-2a+2}{(n-a+1)(a-1)}\right].
\end{align}

\begin{lemma}\label{lem:samplethemiddleeven}
  For any constant $k$, there is a constant $K$ such that the following holds. Let $a$ be even. Let $y \in \{0,1\}^n$ such that its distance from the optimum or its opposite $a(y) := \min\{|y|_1, |y|_0\}$ equals $a$. Consider the following random experiment. Let $N \ge  N_a := K a^{5/2} n^2 / (n-2a)^{3/2}$. Exactly $N$ times we independently sample an offspring of $y$ by flipping exactly $n/2$ bits. Let $Y$ denote the number of times we observed an offspring with fitness exactly $n/2$. Then, if $a \ge 2$, \[\Pr(Y \ge N (p_a + p_{a-2})/2) \le \exp(-k(n-2a)^{1/2}).\]
  If $a \le n/2 - 2$, then \[\Pr(Y \le N (p_{a} + p_{a+2})/2) \le \exp(-k(n-2a)^{1/2}).\]
\end{lemma}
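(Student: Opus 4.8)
The plan is to recognize $Y$ as a binomial random variable and to apply a multiplicative Chernoff bound; the point is that the threshold $N(p_a+p_{a-2})/2$ lies \emph{above} $\E(Y)=Np_a$ by exactly half the gap $p_{a-2}-p_a$, and $N_a$ is chosen precisely so that this half-gap dominates the standard deviation of $Y$ by the required $(n-2a)^{1/2}$ factor.

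First I would note that the $N$ offspring are sampled independently and that, by the discussion leading to \eqref{eq:paeven}, each of them has fitness exactly $n/2$ with the same probability $p_a$, which depends only on $a(y)=a$ and not on $y$ itself. Hence $Y$ is binomially distributed with parameters $N$ and $p_a$, so $\E(Y)=Np_a$. Writing $\Delta_-:=p_{a-2}-p_a$ and $\Delta_+:=p_a-p_{a+2}$, the two thresholds rewrite as
\[
N\frac{p_a+p_{a-2}}{2}=\E(Y)+\tfrac12 N\Delta_-,\qquad N\frac{p_a+p_{a+2}}{2}=\E(Y)-\tfrac12 N\Delta_+,
\]
so the two claims become an upper- and a lower-tail deviation bound for $Y$ about its mean by $d_\pm:=\tfrac12 N\Delta_\pm$.

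Second, I would estimate the gaps. From \eqref{eqn:diff} together with \eqref{eq:paeven}, \eqref{eq:paevenest} and \eqref{eq:centralBinomialCoefficients} one reads off that the prefactor in \eqref{eqn:diff} is within a constant factor of $p_a=\Theta(a^{-1/2})$ (it is $p_a$ with $n-a$ replaced by $n-a+2$), while $n-a+1=\Theta(n)$ and $a-1=\Theta(a)$ since $a\le n/2$; hence $\Delta_-=\Theta\!\big(\tfrac{n-2a}{a^{3/2}n}\big)$, uniformly over $2\le a\le n/2$. Running the same computation with $a$ replaced by $a+2$ gives $\Delta_+=\Theta\!\big(\tfrac{n-2a}{a^{3/2}n}\big)$ for $2\le a\le n/2-2$; here the hypothesis $a\le n/2-2$ is exactly what keeps $\Delta_+$ positive, and the boundary case $a=0$ is trivial because then $p_a=1>(p_a+p_{a+2})/2$. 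Consequently $\delta_\pm:=d_\pm/\E(Y)=\Delta_\pm/(2p_a)=\Theta\!\big(\tfrac{n-2a}{na}\big)$, which is easily checked to be at most $\tfrac12$.

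Finally, since $\delta_\pm\le\tfrac12\le 1$, the standard multiplicative Chernoff bounds apply: $\Pr(Y\ge(1+\delta_-)\E(Y))\le\exp(-\delta_-^2\E(Y)/3)$ and $\Pr(Y\le(1-\delta_+)\E(Y))\le\exp(-\delta_+^2\E(Y)/2)$ (see, e.g., \cite{Doerr11bookchapter}). Plugging in $\E(Y)=\Theta(N/\sqrt a)$ and $\delta_\pm^2=\Theta\!\big(\tfrac{(n-2a)^2}{n^2a^2}\big)$ gives $\delta_\pm^2\E(Y)=\Theta\!\big(\tfrac{N(n-2a)^2}{n^2a^{5/2}}\big)$, and for $N\ge N_a=Ka^{5/2}n^2/(n-2a)^{3/2}$ this is $\Omega(K)\cdot(n-2a)^{1/2}$; choosing $K$ large enough as a function of $k$ and the absolute implied constants yields the claimed $\exp(-k(n-2a)^{1/2})$ bound in both cases. (When $n=2a$ the asserted bound equals $1$ and there is nothing to prove.) The main work, and the only delicate point, is the constant bookkeeping in the estimate of $\Delta_\pm$: one has to make sure all $\Theta$-estimates are uniform in $a$ and $n$ right up to the boundary $a\approx n/2$, where both $\Delta_\pm$ shrink and $N_a$ blows up, so that a single $K$ works for all admissible $a,n$.
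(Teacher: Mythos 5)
Your proposal is correct and follows essentially the same route as the paper: both identify $Y$ as a binomial variable with mean $Np_a$, rewrite the threshold as $(1+\delta)\E(Y)$ with $\delta=(p_{a-2}-p_a)/(2p_a)=\Theta\bigl((n-2a)/(na)\bigr)$ via \eqref{eqn:diff} and \eqref{eq:paevenest}, and apply a multiplicative Chernoff bound so that $\delta^2\E(Y)=\Omega(K)(n-2a)^{1/2}$ for $N\ge N_a$. Your explicit treatment of the lower-tail case and the boundary checks are fine additions to what the paper dispatches with ``analogous arguments.''
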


\begin{proof}
  To prove the first part, observe that by definition $Y$ is the sum of $N \ge N_a$ independent indicator random variables which are one with probability $p_a$. Consequently, $Y \ge N (p_a + p_{a-2})/2$ is equivalent to 
  \[Y \ge E(Y) (1 + \tfrac{p_{a-2} - p_{a}}{2p_a}).\] 
Writing $\delta = (p_{a-2} - p_{a})/(2p_a)$, we compute from equations~(\ref{eq:paevenest}) and~(\ref{eqn:diff}) that $\delta \ge c (n - 2a) n^{-1} a^{-1}$ for some absolute constant $c$. By a standard Chernoff bound, $\Pr(Y \ge  N (p_a + p_{a-2})/2) \le \exp(-\delta^2 E(Y) / 2) \in \exp(-\Theta(K) (n-2a)^{1/2})$. By choosing $K$ large enough, the first claim follows. The second claim can be proven with analogous arguments.
\end{proof}

\textbf{For $a = a(x)$ odd}, things are not much different, so we only sketch some details. If $a(x)$ is odd, $x$ has an odd number both of ones and zeroes, and these numbers are $a$ and $n-a$ in some order. If $\lfloor |x|_1 / 2 \rfloor$ of the $1$-bits and $\lfloor |x|_0 /2 \rfloor $ of the $0$-bits flip, we obtain a bit-string with $|x|_1 - \lfloor |x|_1 / 2 \rfloor + \lfloor |x|_0 /2 \rfloor = n/2$ ones, that is, with fitness $n/2$. Also, there is no other way of obtaining a fitness of $n/2$ by flipping $n/2 - 1 = \lfloor |x|_0 / 2 \rfloor + \lfloor |x|_1 / 2 \rfloor$ bits. Consequently, the probability that flipping exactly $n/2 - 1$ bits in $x$ gives a bit-string $y$ with fitness $n/2$ is, independent of the particular $x$, 
\begin{equation}\label{eq:paodd}
p_a := \frac{\binom{n-a}{(n-a-1)/2} \binom{a}{(a-1)/2}}{\binom{n}{n/2 - 1}} \in \Theta\left(\frac 1 {\sqrt a}\right),
\end{equation}
where the asymptotic statement follows, among others, from noting that $\binom{2k+1}{k} = (1/2) \binom{2(k+1)}{k+1}$ and $\binom{2k}{k-1} = \frac{k}{k+1}\binom{2k}{k}$ valid for all $k$ and (\ref{eq:centralBinomialCoefficients}).

Analogous to the case of even $a$, we compute for all odd $a \ge 3$ that \[p_{a-2} - p_a = \frac 14 \frac{\binom{n-a+2}{(n-a+1)/2}\binom{a}{(a-1)/2}}{\binom{n}{n/2 - 1}} \frac{n - 2a + 2}{a(n-a+2)}.\]
From this, we derive the following analogue of Lemma~\ref{lem:samplethemiddleeven}.

\begin{lemma}\label{lem:samplethemiddleodd}
  For any constant $k$, there is a constant $K$ such that the following holds. Let $a$ be odd. Let $y \in \{0,1\}^n$ such that its distance from the optimum or its opposite $a(y) := \min\{|y|_1, |y|_0\}$ equals $a$. Consider the following random experiment. Let $N \ge  N_a := K a^{5/2} n^2 / (n-2a)^{3/2}$. Exactly $N$ times we independently sample an offspring of $y$ by flipping exactly $n/2-1$ bits. Let $Y$ denote the number of times we observed an offspring with fitness exactly $n/2$. Then, if $a \ge 3$, \[\Pr(Y \ge N (p_a + p_{a-2})/2) \le \exp(-k(n-2a)^{1/2}).\]
  If $a \le n/2 - 2$, then \[\Pr(Y \le N (p_{a+2} + p_{a})/2) \le \exp(-k(n-2a)^{1/2}).\]
\end{lemma}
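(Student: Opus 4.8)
The plan is to mirror the proof of Lemma~\ref{lem:samplethemiddleeven} essentially line for line; the only difference is that, because $a$ is odd, the binomial coefficients occurring now have ``half-integer-looking'' arguments, which one first rewrites in terms of central binomial coefficients so that Robbins' estimate~(\ref{eq:centralBinomialCoefficients}) applies. Concretely, I would start from the two facts the excerpt has already isolated: by~(\ref{eq:paodd}), a single offspring of $y$ obtained by flipping exactly $n/2-1$ bits has fitness $n/2$ with probability $p_a = \Theta(1/\sqrt a)$, and by the displayed identity immediately following~(\ref{eq:paodd}),
\[
p_{a-2} - p_a \;=\; \frac14\, \frac{\binom{n-a+2}{(n-a+1)/2}\binom{a}{(a-1)/2}}{\binom{n}{n/2-1}}\cdot \frac{n-2a+2}{a(n-a+2)}.
\]
Since $Y$ is a sum of $N \ge N_a$ independent Bernoulli$(p_a)$ variables, we have $E(Y)=Np_a$, and the event $\{Y \ge N(p_a+p_{a-2})/2\}$ is exactly $\{Y \ge E(Y)(1+\delta)\}$ with $\delta := (p_{a-2}-p_a)/(2p_a)$.

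Next I would estimate $\delta$ from below. Dividing the displayed formula by $p_a$, the factor $\binom{n}{n/2-1}$ cancels and one is left with $\delta = \tfrac18\,(\binom{n-a+2}{(n-a+1)/2}/\binom{n-a}{(n-a-1)/2})\cdot (n-2a+2)/(a(n-a+2))$. Using $\binom{2k+1}{k}=\tfrac12\binom{2(k+1)}{k+1}$ and $\binom{2k}{k-1}=\tfrac{k}{k+1}\binom{2k}{k}$ to turn every occurring binomial coefficient into a central one, and then applying~(\ref{eq:centralBinomialCoefficients}), the leading ratio is $\Theta(1)$ with absolute implied constants, whence $\delta = \Theta\!\big((n-2a+2)/(a(n-a+2))\big) = \Theta\!\big((n-2a)\,n^{-1}a^{-1}\big)$, using $n-2a+2 = \Theta(n-2a)$ for $a \le n/2-1$ and $n-a+2 = \Theta(n)$. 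In particular $\delta \ge c\,(n-2a)\,n^{-1}a^{-1}$ for an absolute constant $c>0$.

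Then I would finish with a standard multiplicative Chernoff bound, exactly as in the even case: $\Pr(Y \ge E(Y)(1+\delta)) \le \exp(-\delta^2 E(Y)/2)$. Plugging in $E(Y) = Np_a \ge N_a p_a$ with $N_a = K a^{5/2} n^2/(n-2a)^{3/2}$ and $p_a = \Theta(a^{-1/2})$ gives
\[
\delta^2 E(Y) \;\ge\; c^2\,\frac{(n-2a)^2}{n^2 a^2}\cdot \Theta\!\Big(\frac{1}{\sqrt a}\Big)\cdot K\,\frac{a^{5/2} n^2}{(n-2a)^{3/2}} \;=\; \Theta(K)\,(n-2a)^{1/2},
\]
so choosing $K$ large enough (depending only on $k$ and the absolute constants above) makes the bound at most $\exp(-k(n-2a)^{1/2})$, which is the first claim. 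The second claim is symmetric: $p_{a+2}-p_a<0$, the event $\{Y \le N(p_{a+2}+p_a)/2\}$ is $\{Y \le E(Y)(1-\delta')\}$ with $\delta' = (p_a-p_{a+2})/(2p_a) = \Theta((n-2a)n^{-1}a^{-1})$ by the same computation applied to $p_a-p_{a+2}$, and the lower-tail bound $\Pr(Y \le E(Y)(1-\delta')) \le \exp(-\delta'^2 E(Y)/2)$ closes it.

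The only genuinely fiddly step is the second one: pushing the odd, half-integer arguments through the identities $\binom{2k+1}{k}=\tfrac12\binom{2(k+1)}{k+1}$ and $\binom{2k}{k-1}=\tfrac{k}{k+1}\binom{2k}{k}$ and then~(\ref{eq:centralBinomialCoefficients}), making sure the constants in $p_a = \Theta(a^{-1/2})$ and $\delta = \Theta((n-2a)n^{-1}a^{-1})$ come out absolute (independent of $a$ and $n$), and checking that the boundary values $a = n/2-1$ — and, when $n\equiv 2\pmod 4$, $a = n/2$, where $n-2a=0$, $N_a=\infty$ and the statement is vacuous — cause no trouble. Everything else is a direct transcription of the proof of Lemma~\ref{lem:samplethemiddleeven}.
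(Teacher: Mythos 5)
Your proposal is correct and follows exactly the route the paper intends: the paper itself only states that the odd case is ``analogous'' to Lemma~\ref{lem:samplethemiddleeven}, and your argument is precisely that transcription --- independent Bernoulli$(p_a)$ trials, $\delta=(p_{a-2}-p_a)/(2p_a)=\Theta((n-2a)n^{-1}a^{-1})$ via~(\ref{eq:paodd}) and the displayed difference formula, then the multiplicative Chernoff bound with $\delta^2 E(Y)=\Theta(K)(n-2a)^{1/2}$. The extra care you take with the half-integer binomial coefficients and the vacuous boundary case $n-2a=0$ is exactly the bookkeeping the paper leaves implicit.
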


The above observations immediately yield an estimator for an invisible $a$-value of a bit-string $y$. Since we shall only need it to distinguish between two possible values $a(y) = a-1$ and $a(y) = a+1$, we formulate the estimator and the following lemma tailored for this purpose. It is clear that stronger statements, relying less on a pre-knowledge on $a(y)$, could be derived as well. 

\begin{algorithm2e}[t]
  \If{$a = n/2 -1$}
    {\eIf{$f(y) = n/2$}{\Return $n/2$\;}{\Return $n/2-2$\;}}
  \If{$a = 1$}
    {\eIf{\normalfont $f(y) = n$ or $f(\flipB_{n}(y))=n$}{\Return $n$\;}{\Return $n-2$\;}}
  $N \assign \max\{N_{a-1},N_{a+1}\}$\;
  $Y \assign 0$\;
  \For{$i = 1$ \KwTo $N$}{
    \eIf{\normalfont $a$ odd}{$z \assign \flipB_{n/2}(x)$\;}{$z \assign \flipB_{n/2 -1}(x)$\;}
    \If{$f(z) = n/2$}{$Y \assign Y+1$\;}
    }
  \eIf{$Y < N(p_{a+1}+p_{a-1})/2$}{\Return($a+1$)\;}{\Return($a-1$)\;}  
\caption{The procedure $\estimate(y,a)$ analyzed in Lemma~\ref{lem:estimate}.}
\label{alg:estimateextreme}
\end{algorithm2e} 

\begin{lemma}\label{lem:estimate}
  The procedure $\estimate$ described in Algorithm~\ref{alg:estimateextreme} has the following properties. On arbitrary inputs $y \in \{0,1\}^n$ and $a \in [1\,..\,n/2-1]$, it performs $\Theta(a^{5/2} n^2 (n-2a)^{-3/2})$ fitness evaluations. If $a(y) \in \{a-1,a+1\}$, then with probability at least $1 - \exp(-\Theta((n-2a)^{1/2}))$ the true value of $a(y)$ is returned. By choosing the implicit constant in the first statement sufficiently large, this success probability can be made arbitrary close to one.  
\end{lemma}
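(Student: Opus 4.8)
The plan is to follow the three branches of Algorithm~\ref{alg:estimateextreme} separately. The two boundary branches are deterministic. If $a = n/2-1$ and $a(y) \in \{n/2-2, n/2\}$, then $|y|_1 \in \{n/2-2, n/2, n/2+2\}$, and $f(y) = n/2$ holds exactly when $|y|_1 = n/2$, i.e.\ when $a(y) = n/2$; so the single query to $f(y)$ identifies $a(y)$ with certainty. If $a = 1$ and $a(y) \in \{0, 2\}$, then $a(y) = 0$ holds iff $y$ or its complement is the all-ones string, which the two queries to $f(y)$ and to $f$ of the complement of $y$ detect (one of the two values equals $n$ iff $a(y) = 0$); hence $a(y)$ is again returned correctly. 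Each boundary branch uses $O(1)$ queries, which lies within the claimed query bound since $a^{5/2} n^2 (n-2a)^{-3/2} \ge 1$ for $n$ large. It remains to treat the main regime $2 \le a \le n/2-2$: here no query is made before the for-loop, and each of the $N = \max\{N_{a-1}, N_{a+1}\}$ iterations performs exactly one fitness evaluation, so the total is exactly $N$; since $n-2a \ge 4$ in this regime, we have $a\pm 1 = \Theta(a)$ and $n - 2(a\pm 1) = \Theta(n-2a)$, whence $N = \Theta\big(a^{5/2} n^2 (n-2a)^{-3/2}\big)$, which gives the query count (the $\Theta$ being understood for this regime; the boundary cases merely satisfy the corresponding $O(\cdot)$ bound).

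For correctness in the main regime, observe that $a(y) \in \{a-1, a+1\}$ and that both $a-1$ and $a+1$ have parity opposite to that of $a$, hence so does $a(y)$. Therefore, when $a$ is odd the loop flips exactly $n/2$ bits of $y$ and $a(y)$ is even, which is precisely the setting of Lemma~\ref{lem:samplethemiddleeven}; when $a$ is even the loop flips $n/2-1$ bits and $a(y)$ is odd, the setting of Lemma~\ref{lem:samplethemiddleodd}. In either case $Y$ is a sum of $N$ independent Bernoulli variables with success probability $p_{a(y)}$, so $E(Y) = N p_{a(y)}$; and because the formula~(\ref{eqn:diff}) and its odd-$a$ analogue are strictly positive whenever their argument is below $n/2$, we have $p_{a-1} > p_{a+1}$, so the decision threshold $N(p_{a-1}+p_{a+1})/2$ lies strictly between $N p_{a+1}$ and $N p_{a-1}$. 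Since $N \ge N_{a-1}$ and $N \ge N_{a+1}$, in particular $N \ge N_{a(y)}$, so the relevant one of Lemmas~\ref{lem:samplethemiddleeven} and~\ref{lem:samplethemiddleodd} applies with its parameter set to $a(y)$ (one checks that the side conditions $a(y)\ge 2$ resp.\ $a(y)\ge 3$ for the upper tail and $a(y)\le n/2-2$ for the lower tail all hold throughout this regime).

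It remains to bound the error probability. If $a(y) = a+1$, the correct output is $a+1$, and the algorithm errs only if $Y \ge N(p_{a+1} + p_{(a+1)-2})/2$, which by the upper-tail bound of the applicable lemma has probability at most $\exp\!\big(-k(n-2(a+1))^{1/2}\big)$; if $a(y) = a-1$, the correct output is $a-1$, and the algorithm errs only if $Y \le N(p_{a-1} + p_{(a-1)+2})/2$, with probability at most $\exp\!\big(-k(n-2(a-1))^{1/2}\big)$ by the lower-tail bound. Since $n-2a \ge 4$ forces $n - 2(a\pm 1) = \Theta(n-2a)$, both probabilities are $\exp(-\Theta((n-2a)^{1/2}))$, as claimed. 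Finally, $(n-2a)^{1/2}$ is bounded below by an absolute positive constant throughout the sampling regime, so by first fixing $k$ large and then choosing the implicit constant $K$ in $N_a$ (equivalently, the implicit constant in the query count) as supplied by Lemmas~\ref{lem:samplethemiddleeven} and~\ref{lem:samplethemiddleodd}, this failure probability $\le \exp(-\Theta(k))$ can be pushed below any prescribed constant, uniformly in $a$; the boundary branches never fail, which yields the last assertion.

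The main obstacle here is mild: the concentration analysis has already been carried out in Lemmas~\ref{lem:samplethemiddleeven} and~\ref{lem:samplethemiddleodd} and in the explicit evaluation of $p_a$ and $p_{a-2}-p_a$. What must be done with care is (i) aligning the parity of the unknown $a(y)$ with whether $n/2$ or $n/2-1$ bits are flipped, so that the right sampling lemma and the right formula for $p$ are invoked; (ii) tracking the $\pm 1$ index shifts, so that the tail bounds, which are stated in terms of $n - 2a(y) = n - 2(a\pm 1)$, translate back to a clean $\exp(-\Theta((n-2a)^{1/2}))$ estimate; and (iii) verifying that $n-2a$ stays bounded away from zero in the sampling regime (so that (ii) and the uniform-in-$a$ tuning of the success probability are legitimate), which is exactly why the cases $a = 1$ and $a = n/2-1$ had to be split off and handled deterministically.
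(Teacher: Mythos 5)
Your proof is correct and follows essentially the same route as the paper's: dispatch the boundary cases $a=1$ and $a=n/2-1$ deterministically, and in the regime $2\le a\le n/2-2$ apply Lemma~\ref{lem:samplethemiddleeven} (for $a$ odd, hence $a(y)$ even) or Lemma~\ref{lem:samplethemiddleodd} (for $a$ even) with parameter $a(y)=a\pm 1$, using the upper tail when $a(y)=a+1$ and the lower tail when $a(y)=a-1$ against the threshold $N(p_{a+1}+p_{a-1})/2$. You simply make explicit several details the paper's proof leaves implicit (the parity alignment, the $\pm 1$ index shifts in $N_{a\pm1}$ and in $(n-2a(y))^{1/2}$, and the observation that the stated $\Theta$ query count is really only an $O$ bound in the boundary branches), all of which check out.
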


\begin{proof}
  The claim is trivial for $a = 1$ and $a = n/2 -1$, hence let $a \in [2\,..\,n/2-2]$. Assume first that $a$ is odd, consequently, $a(y)$ is even. If $a(y) = a+1$, then by Lemma~\ref{lem:samplethemiddleeven}, with probability at least $1 - \exp(-\Theta((n-2a)^{1/2}))$ we have $Y < N(p_{a+1} + p_{a-1})/2$ leading to the correct return value of $a+1$. If $a(y) = a-1$, the second part of the lemma yields that we have $Y > N(p_{a+1} + p_{a-1})/2$ with probability at least $1 - \exp(-\Theta((n-2a)^{1/2}))$, which again leads to the correct return value of $a-1$. The case of even $a$ is done analogously by invoking Lemma~\ref{lem:samplethemiddleodd}. 
\end{proof}

\SetKw{KwDownTo}{DownTo}
\begin{algorithm2e}
 \Repeat{$f(m) = n/2$}{$m \assign \uniformSample()$\;} 
 $x \assign \flipB_1(m)$\;
 \For{$a = n/2-1$ \KwDownTo $1$}{
   status $\assign$ failure\;
   \While{\normalfont status = failure}{
     $y \assign \flipB_1(x)$\;
     \If{$\estimate(y,a) = a-1$}{
       $x \assign y$\;
       status $\assign$ success\;
       }
     }
   }
Sample $\complement(x)$\;
\caption{A unary unbiased black-box algorithm that for any extreme jump function $f$ with constant probability finds the optimum in $O(n^{9/2})$ fitness evaluations. }
\label{alg:UnaryExtreme}
\end{algorithm2e}

\begin{theorem}
  The unary unbiased black-box complexity of extreme jump functions is $O(n^{9/2})$. This is witnessed by Algorithm~\ref{alg:UnaryExtreme}, which with constant probability finds the optimum of an extreme jump function using $O(n^{9/2})$ fitness evaluations.
\end{theorem}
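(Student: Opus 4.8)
The plan is to analyze Algorithm~\ref{alg:UnaryExtreme} and prove that it finds the optimum of an arbitrary extreme jump function within $O(n^{9/2})$ fitness evaluations with at least constant probability; the stated bound on the unary unbiased black-box complexity then follows from Remark~\ref{rem:highprobability}. The algorithm is unary unbiased, as it composes only \uniformSample, \complement, and operators flipping a fixed number of bits. Its backbone is the invariant that, at the start of the loop iteration with index $a$ (which runs from $n/2-1$ down to $1$), the current search point $x$ satisfies $a(x)=a$. This holds at the outset: the initial sampling loop returns an $m$ with $|m|_1 = n/2$ within $\Theta(\sqrt n)$ queries in expectation --- and within $O(n)$ queries with probability $1-\exp(-\Omega(\sqrt n))$ --- and then $x \assign \flipB_1(m)$ has $a(x) = n/2-1$. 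Within iteration $a$, flipping one bit of $x$ produces $y$ with $a(y)=a-1$ with probability $(n/2+a)/n \ge 1/2$ and $a(y)=a+1$ otherwise; in either case $a(y)\in\{a-1,a+1\}$, so Lemma~\ref{lem:estimate} governs the call $\estimate(y,a)$, and $y$ is accepted as the new $x$ precisely when this call returns $a-1$. Provided no \emph{erroneous acceptance} --- $\estimate(y,a)$ returning $a-1$ while in truth $a(y)=a+1$ --- ever occurs, the invariant is maintained throughout, so on leaving the loop $a(x)=0$, that is, $x\in\{\mathbf{1}_n,\mathbf{0}_n\}$, and exactly one of $x$ and $\complement(x)$ is the optimum; it is queried no later than the concluding $\complement(x)$ sample.

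For the running time, fix the implicit constant $K$ in the sample sizes $N_a$ large enough that, by Lemma~\ref{lem:estimate}, each \estimate call returns the correct value with probability at least $3/4$ whenever $a(y)\in\{a-1,a+1\}$. Conditioned on the invariant, one round of the inner while-loop of iteration $a$ results in a correct acceptance with probability at least $\tfrac12\cdot\tfrac34 = \tfrac38$, so only $O(1)$ \estimate calls are made per iteration in expectation; each costs $\Theta(a^{5/2}n^2(n-2a)^{-3/2})$ fitness evaluations by Lemma~\ref{lem:estimate}, the single-bit flips and the final \complement being of lower order. Summing over $a=1,\dots,n/2-1$ and splitting the range at $a=n/4$: the terms with $a\le n/4$ are each $O(n^{5/2}\cdot n^2\cdot n^{-3/2}) = O(n^3)$, contributing $O(n^4)$ in total, whereas for $n/4 < a < n/2$ we bound $a^{5/2} = O(n^{5/2})$ and, writing $b = n-2a\in\{2,4,\dots\}$, get $O(n^{5/2}n^2)\sum_b b^{-3/2} = O(n^{9/2})$. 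Hence the expected number of fitness evaluations, conditioned on the invariant, is $O(n^{9/2})$, and by Markov's inequality it exceeds $c\,n^{9/2}$ with probability at most $1/10$ for a suitable constant $c$.

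The main obstacle is controlling the probability of an erroneous acceptance over the whole run; this matters because a single bit flip from a point with $a$-value $a+1$ can never reach $a$-value $a-2$, so one erroneous acceptance breaks the invariant irreparably (the algorithm then loops forever or accepts a meaningless point). Conditioned on the invariant holding when iteration $a$ begins, a round of its while-loop is an erroneous acceptance with probability at most $\tfrac12\exp(-k\sqrt{n-2a})$ --- using that $a(y)=a+1$ lies in the range covered by Lemma~\ref{lem:estimate} and that \estimate outputs one of $a\pm 1$ --- and a correct acceptance with probability at least $\tfrac12(1-\exp(-k\sqrt{n-2a}))$; hence, conditioned on the iteration terminating (which it does almost surely), it ends with an erroneous acceptance with probability at most $\exp(-k\sqrt{n-2a})$. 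A union bound gives overall erroneous-acceptance probability at most $\sum_{a=1}^{n/2-1}\exp(-k\sqrt{n-2a}) \le \sum_{b\ge 1}\exp(-k\sqrt b)$, which tends to $0$ as $k\to\infty$. This is the crux: although the per-iteration failure probability is only a constant for $a$ close to $n/2$, the whole family is summable and is controlled uniformly by the single tunable constant $K$ that governs all the $N_a$. Enlarging $K$ if necessary so that this sum is at most $1/10$, and combining the three failure events (a slow initial phase, probability $\exp(-\Omega(\sqrt n))$; running time exceeding $c\,n^{9/2}$, probability at most $1/10$; some erroneous acceptance, probability at most $1/10$), we conclude that Algorithm~\ref{alg:UnaryExtreme} finds the optimum within $O(n^{9/2})$ fitness evaluations with at least constant probability. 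Remark~\ref{rem:highprobability} then gives the claimed $O(n^{9/2})$ bound on the unary unbiased black-box complexity.
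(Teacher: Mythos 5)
Your overall strategy is exactly the paper's: maintain the invariant $a(x)=a$ at the start of for-iteration $a$, use Lemma~\ref{lem:estimate} to accept only Hamming neighbors with decreased $a$-value, sum the per-iteration costs, and control the total probability of an erroneous acceptance by a union bound over iterations. However, there is one concrete error: you claim that flipping a single bit of $x$ with $a(x)=a$ yields $a(y)=a-1$ with probability $(n/2+a)/n\ge 1/2$. Since $a(x)=\min\{|x|_1,|x|_0\}$, decreasing the $a$-value requires flipping one of the $a$ \emph{minority} bits, so the correct probability is $a/n$, which is as small as $1/n$ near the end of the run. (You appear to have computed the probability of decreasing $d(x)=|\onemax(x)-n/2|$, i.e.\ of moving \emph{toward} the plateau, which is the opposite of what the algorithm does.) This falsifies two of your intermediate claims: the while-loop in iteration $a$ needs an expected $\Theta(n/a)$ rounds rather than $O(1)$, and, conditioned on the loop terminating, the probability that it terminates with an erroneous acceptance is bounded by roughly $(n/a)\exp(-k\sqrt{n-2a})$ rather than $\exp(-k\sqrt{n-2a})$.

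Fortunately both of your final conclusions survive once the missing factor $n/a$ is reinstated, which is how the paper argues: the runtime sum becomes $\sum_{a}(n/a)\,O\bigl(a^{5/2}n^2(n-2a)^{-3/2}\bigr)=O(n^{9/2})\sum_a (n-2a)^{-3/2}=O(n^{9/2})$, because the dominant contribution still comes from $a$ near $n/2$ where $n/a=\Theta(1)$; and the total failure probability becomes $\sum_a (n/a)\exp(-k\sqrt{n-2a})$, which is still $O(1)$ and can be made an arbitrarily small constant by enlarging $k$, since for small $a$ the factor $n/a\le n$ is overwhelmed by $\exp(-k\sqrt{n-2a})\le\exp(-k\sqrt{n-2})$. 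So the proof is repairable with a local fix, but as written the step ``only $O(1)$ \estimate calls are made per iteration'' is false (for $a=1$ the loop runs an expected $n$ times), and your failure-probability bound rests on the same incorrect transition probability. One further minor point: the paper's accounting of failures is slightly more careful than your per-iteration conditioning — it bounds the expected number of \emph{first} failures using the $n/a$ expected visit count — but your version, once the $n/a$ factor is inserted, gives the same bound.
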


\begin{proof}
  The first sentence of the theorem follows from the second and Remark~\ref{rem:highprobability}.
  
  In the analysis of Algorithm~\ref{alg:UnaryExtreme}, let us first assume that all $\estimate(y,a)$ calls with $a(y) \in \{a-1,a+1\}$ return $a(y)$ correctly. 
  
  Assume that we start the while-loop in Algorithm~\ref{alg:UnaryExtreme} with an $x$ such that $a$ is equal to $a(x)$. Since $y$ is a Hamming neighbor of $x$, we have $a(y) \in \{a-1,a+1\}$. If $a(y) = a+1$, nothing changes. If $a(y) = a-1$, this is again correctly detected in the if-clause, and the while-loop is left with $x \assign y$. Consequently, we start the following iteration of the for-loop again with $a(x) = a$. The probability of generating a $y$ with $a(y) = a-1$ is $a/n$. Consequently, the while-loop is left after an expected number of $n/a$ iterations. 
  
  The expected total number of fitness evaluations in the for-loop is now easily computed as 
  \[\sum_{a = 1}^{n/2-1} (n/a) O(a^{5/2} n^2 / (n-2a)^{3/2}) \in O(n^{9/2}) \sum_{a = 1}^{n/2-1}(n-2a)^{-3/2} = O(n^{9/2}).\]
  
  The above is true if we assume that none of the exceptional events (``failures'') of Lemma~\ref{lem:estimate} occurs. We now argue that in fact with constant probability, none of them occurs. To  this aim, we estimate the expected number of first failures in a typical run of the algorithm (a first failure is one where all previous calls of the $\estimate$ procedure did not fail). Consider one iteration of the while loop. If $a \neq a(x)$, then a failure must have occurred before, hence the probability now for a first failure is zero. If $a = a(x)$, we can invoke Lemma~\ref{lem:estimate} and deduce that this iteration has a failure probability of at most $\exp(-k(n-2a)^{1/2})$, where $k$ is a sufficiently large absolute constant.
  
  We may further assume, for the sake of this argument, that a failure is immediately corrected by some external authority. Note that this only changes the run of the algorithm after the occurrence of the first failure. So in particular, it does not change the expected number of first failures. By this, however, we may assume that the expected number of iterations done with $x$ having a certain $a$-value is exactly $n/a$. Consequently, the expected number of first failures is at most $\sum_{a = 1}^{n/2-1} (n/a) \exp(-k(n-2a)^{1/2}) = \sum_{b = 1}^{n/2-1} n (n/2 -b)^{-1} \exp(-k\sqrt b) \in O(1)$, where the implicit constant can be made arbitrarily small by the appropriate choice of $k$. Hence, with constant probability there is no first failure, and thus, also no failure at all.
\end{proof}

\section{Summary and Outlook}
We have analyzed the unbiased black-box complexity of short, long, and extreme jump functions. 
Along the way, we have introduced new tools for such analyses. 
Our work raises a number of interesting questions for future research. 

Since our focus was on deriving new ideas for the design of new search heuristics, we did not undertake in this work a complete investigation of all possible combinations of arity and jump size, but rather highlighted prominent complexity behaviors and prototypical algorithmic ideas. Still, it would be interesting to have a more complete picture than Table~\ref{tab:results}, in particular, making clear how far certain algorithmic ideas take and where certain regimes change. 

Another interesting line of research would be results that are more precise than just the asymptotic order. For example, it seems reasonable that for $\ell$ small enough, the unary unbiased black-box complexity of $\jump{\ell}$ is not only of the same order as the one of $\onemax$, but equal apart from lower order terms (which might actually be surprisingly small). Note that such precise analyses for run times of given algorithms recently attracted quite some interest, see~\cite{BottcherDN10,Witt13j,Sudholt13,DoerrD14,HwangPRTC14} and the references therein.

Furthermore, we are optimistic that some of the algorithmic ideas developed in the previous sections can be used to design new search heuristics.

\subsection*{Acknowledgments}

Parts of this work have been done during the Dagstuhl
seminar 10361 ``Theory of Evolutionary Algorithms''.

}

\end{document}